\documentclass[pdflatex,sn-mathphys-ay]{sn-jnl}

\usepackage{graphicx}%
\usepackage{multirow}%
\usepackage{amsmath,amssymb,amsfonts}%
\usepackage{amsthm}%
\usepackage{mathrsfs}%
\usepackage[title]{appendix}%
\usepackage{xcolor}%
\usepackage{textcomp}%
\usepackage{manyfoot}%
\usepackage{booktabs}%
\usepackage{algorithm}%
\usepackage{algorithmicx}%
\usepackage{algpseudocode}%
\usepackage{listings}%
\usepackage{makecell}
\usepackage{setspace}
\usepackage{newtxtext,newtxmath}

\theoremstyle{thmstyleone}%
\newtheorem{theorem}{Theorem}
\newtheorem{assumption}[theorem]{Assumption}
\newtheorem{lemma}[theorem]{Lemma}
\newtheorem{corollary}[theorem]{Corollary}

\theoremstyle{thmstyletwo}%
\newtheorem{remark}{Remark}%

\theoremstyle{thmstylethree}%

\newcommand{\pr}{\mathbb{P}}
\newcommand{\E}{\mathbb{E}}
\newcommand{\V}{\mathbb{V}}
\newcommand{\var}{\mathrm{Var}}

\def\beqlb{\begin{eqnarray}}\def\eeqlb{\end{eqnarray}}
\def\beqnn{\begin{eqnarray*}}\def\eeqnn{\end{eqnarray*}}

\begin{document}

\title[Article Title]{Improved Model-based Reinforcement Learning with Smooth Kernels}


\author[1]{\fnm{Kun} \sur{Long}}\email{52254404001@stu.ecnu.edu.cn}

\author*[1]{\fnm{Yuqiang} \sur{Li}}\email{yqli@stat.ecnu.edu.cn}

\author*[1]{\fnm{Xianyi} \sur{Wu}}\email{xywu@stat.ecnu.edu.cn}

\affil[1]{\orgdiv{School of Statistics, KLATASDS-MOE}, \orgname{East China Normal University}, \orgaddress{\street{North Zhongshan Road}, \city{Shanghai}, \postcode{200062}, \country{PR China}}}


\abstract{For continuous state-action space scenarios, classical reinforcement learning (RL) theory predominantly focuses on low-rank Markov decision processes (MDPs), which provide sample-efficient guarantees at the expense of restrictive structural assumptions. Kernel smoothing model-based approaches offer a promising alternative paradigm that instead leverages the smoothness of the MDP and employs non-parametric kernel smoothing estimates of transition dynamics.
This paper proposes a new kernel-smoothing model-based approach for online reinforcement learning in finite-horizon settings under Lipschitz continuity assumptions on the MDP. By incorporating a Bernstein-style exploration bonus into the kernel smoothing framework, our method achieves a regret bound of $\widetilde{O}\left(H^{\frac{d_{{SA}}+3}{d_{{SA}}+2}}K^{\frac{d_{{SA}}+d_{{S}}}{d_{{SA}}+d_{{S}}+1}}\right)$, where $H$ is the horizon length, $K$ is the number of episodes, $d{_{SA}}$ is the covering dimension of the joint state–action space, and $d_{S}$ is the covering dimension of the state space. This result improves upon the state-of-the-art regret bound in its dependence on $H$. The theoretical advancement relies on a delicate analysis of the synergy between Bernstein-style bonuses and kernel smoothing, where a new tight Bernstein-type concentration inequality for martingales may be of independent interest.}

\keywords{Reinforcement learning, Regret bound, kernel smoothing, Bernstein's inequality}



\maketitle

\section{Introduction}
Recent decades have witnessed  significant advancements in RL algorithms, with substantial work devoted to understanding the theoretical foundations of RL, especially on the tabular cases for online or offline RL (\cite{sutton2018reinforcement, vamvoudakis2021handbook}).

For online RL, the critical challenge of dynamically balancing Exploration-Exploitation is addressed by, e.g. upper confidence bound value iteration (UCBVI) algorithm (\cite{azar2017minimax}), if the state and action spaces are both finite.


However, extensive practice such as autonomous navigation and robotic control involve continuous state-action pairs. In such regimes, traditional model-based approaches relying on frequency counts for transition estimation do not work, because almost all state-action pairs are never visited throughout the whole learning process.

This challenge can only be addressed when the models are subject to certain smoothness properties, so that one can estimate the value functions by means of average over spatial proximity. For these smoothing models, a type of  promising paradigm, known as\textit{ kernel smoothing} techniques \citep{ormoneit2002kernel}, have emerged. These techniques, instead of explicitly modeling transitions, generalize knowledge across similar states and actions by weighting historical experiences based on spatial proximity. Subsequent innovations continued in refining this approach  can be found in \citet{jong2006kernel, kveton2012kernel, kveton2013structured, barreto2016practical, lim2019kernel}, among others. Here we would like to note that the term kernel here is not the one in the so-called \textit{kernel tricks} through Reproducing Kernel Hilbert Spaces.


The state-of-the-art regret bound for kernel-smoothing model-based reinforcement learning in continuous MDPs was established recently by \citet{Domingues2020KernelBasedRL}. Their Kernel-UCBVI algorithm uses kernel smoothing to estimate the transition model and adapts the UCBVI framework to compute value functions. By employing a Hoeffding-type exploration bonus that assigns confidence widths uniformly across all state-action pairs, regardless of transition variance, they obtained a regret bound of $\widetilde{O}\!\left(H^3K^{\frac{2d_{SA}}{2d_{SA}}+1}\right)$.
\citet{Domingues2020KernelBasedRL} observed that this bound contains an additional $\sqrt{H}$ factor compared with adaptive Q-learning methods that adaptively partition the continuous state-action space \citep{Sinclair2019AdaptiveDF}, although their experiments suggest that model-based methods can perform better in practice. They therefore left open whether this gap is due to the algorithm design or merely the analysis. More broadly, in terms of the horizon dependence, existing kernel-smoothing model-based methods remain far from the conjectured lower bound of order $H$, and are also worse than the $H^{1+\frac{1}{d_{SA}+1}}$ dependence achieved by the model-based AdaMB algorithm \citep{Sinclair2020AdaptiveDF}.

This naturally raises the question of whether one can design a model-based algorithm with kernel smoothing that achieves a better regret dependence on the horizon $H$. Motivated by this question, we study online finite-horizon reinforcement learning in inhomogeneous MDPs under essentially the same setting as \citet{Domingues2020KernelBasedRL}, where the transition model is estimated via kernel smoothing, and establish finite-sample regret guarantees.

To improve the dependence on $H$, a natural idea is to incorporate variance-induced exploration. In tabular MDPs, Bernstein-type bonuses yield sharper regret bounds in terms of $H$ than Hoeffding-type bonuses \citep{azar2017minimax,Zhang2020IsRL}. The main challenge, however, is how to construct such Bernstein-type bonuses in the kernel-smoothing setting. In the tabular case, transition estimators are formed from frequency counts of i.i.d.\ samples, so empirical Bernstein inequalities apply directly \citep{Audibert2009}. In contrast, kernel smoothing aggregates all collected samples, thereby destroying the i.i.d.\ structure underlying such arguments. As a result, existing empirical Bernstein inequalities do not directly apply in the kernel-smoothing setting. To overcome this difficulty, we establish a new martingale-based empirical Bernstein inequality that does not rely on the i.i.d.\ assumption. Based on this result, we propose a Kernel-Based Value Iteration with Bernstein-type UCB algorithm, referred to as KBVI-BUCB, which achieves an improved regret dependence on $H$.


\begin{table}[t]
    \centering
    \scriptsize
     \caption{Comparisons of KBVI-BUCB with several regret bounds for RL in continuous settings,  $K$ is the number of episodes and $H$ is the number of planning horizon.  Uniform Reward means "$r_h\in[0, 1]$ for all $h$."}
    \label{table1}
    \begin{tabular}{c|c|c}
        \toprule[2pt]
        Paper & Regret & Uniform Reward \\
        \midrule[2pt]
        \makecell{Kernel-UCBVI\\\citep{Domingues2020KernelBasedRL}}
        &  $\widetilde{O}\big(H^{3}K^{\frac{2d_{_{SA}}}{2d_{_{SA}} + 1}}\big)$ & Yes \\
        \makecell{AdaMB\\\citep{Sinclair2020AdaptiveDF}}
         & \makecell{$\widetilde{O}\big(H^\frac{d_{_{SA}} + 2}{d_{_{SA}} + 1}K^{\frac{d_{_{SA}} + d_{\mathcal{S}} - 1}{d_{_{SA}} + d_{\mathcal{S}}}}\big)$ ($d_{\mathcal{S}} > 2$)\\$\widetilde{O}\big(H^{\frac{d_{_{SA}} + 2}{d_{_{SA}} + 1}}K^{\frac{d_{_{SA}} + d_{\mathcal{S}} + 1}{d_{_{SA}} + d_{\mathcal{S}} + 2}}\big)$ ($d_{\mathcal{S}} \leq 2$)} & Yes \\
         \makecell{Adaptive Q-learning\\\citep{Sinclair2019AdaptiveDF}}
         & $\widetilde{O}\big(H^{5/2}K^{1-\frac{1}{d_{_{SA}}+2}}\big)$& Yes\\
         \makecell{Net-Based Q-learning\\\citep{Song2019EfficientMR}}&  $\widetilde{O}\big(H^{5/2}K^{1-\frac{1}{d_{_{SA}}+2}}\big)$& Yes\\
         This paper &  $\widetilde{O}\big(H^{\frac{d_{_{SA}+3}}{d_{_{SA}+2}}}K^{\frac{d_{_{SA}} + d_{_S}}{d_{_{SA}} + d_{_S}+1}}\big)$& No\\
        \bottomrule[2pt]
    \end{tabular}
    \label{table}
\end{table}

The main contributions are summarized as follows:
\begin{enumerate}
    \item We demonstrate that the kernel smoothing approach plus the upper confidence bound value iteration (VI) framework can obtain near optimal regrets.  It turns out that the KBVI-BUCB can achieve a regret bound
\(
    \widetilde{O}\big(H^{\frac{d_{_{SA}}+3}{d_{_{SA}}+2}}K^{\frac{d_{_{SA}} + d_{_S}}{d_{_{SA}} + d_{_S} + 1}}\big),
\)
 even improving upon the state-of-the-art result in the model based RL with continuous state-action spaces by a factor $H^{\frac{1}{(d_{_{SA}}+2)(d_{_{SA}}+1)}}$.
 As illustrated in Table \ref{table1},  the KBVI-BUCB improves upon the bounds in \cite{Domingues2020KernelBasedRL} by a factor near $H^2$ and is better than the results of adaptive Q-learning in \cite{Sinclair2019AdaptiveDF} by a factor near $H^{3/2}$. These theoretical improvements provide an answer for the open question mentioned in \cite{Domingues2020KernelBasedRL}. Compared to \cite{Sinclair2020AdaptiveDF}, our results improve not only on $H$ but also on $K$ when $d_{_S}\leq 2$.

    \item In developing the theoretical finite-sample guarantees, we establish a new Bernstein concentration inequality for martingales (Theorem \ref{aux.4}),
    which is more applicable and is also of independent interest. 
While recent advances in Bernstein inequalities, e.g. \citet{whitehouse2024modern}, provide general self-normalized bounds and can be broadly applicable and
 powerful, Theorem \ref{aux.4} here provides  a more direct and practically applicable form of the empirical Bernstein inequality tailored specifically to martingale difference sequences. This makes the bounds depending directly on the empirical
 quadratic term $\sum_{i=1}^n X_i^2$,  a form that is particularly convenient for constructing confidence intervals and
 deriving regret bounds in online learning. Moreover, it appears that we provide the first rigorous proof to this type of Bernstein inequality, see the first paragraph of Section \ref{proof_lem_2}.
     \item Finally, while uniform per-step reward bounds $0 \leq r_h \leq 1$ are typically required in existing kernel-based RL, the current work proceeds  only under $\sum_{h=1}^Hr_h \leq H$  the results of \cite{Jiang2018OpenPT}, who studied the dependence of sample complexity lower bounds on planning horizon, can be invoked to provide a fair environment when comparing our results to the existing ones.  This assumption generalizes bounded reward models to include sparse-reward scenarios.
\end{enumerate}

This paper is structured as follows: Section \ref{pre} is a preliminary section that presents  necessary knowledge of MDPs. Section \ref{main_alg} presents the KBVI-BUCB algorithm, integrating kernel-based VI with UCB bonuses. Section \ref{theo_gua} establishes non-asymptotic regret bounds, followed by a proof sketch in Section \ref{tech_over}.
Section \ref{expe} reports some experimental results and the final section concludes the paper. The details of the proof and more literature related to our work are arranged in the appendix.

\section{Related works}
\label{rela_wor}
The prior works that are relevant to our study reside at the intersection of the key areas Optimism in online RL, pessimism in offline RL, kernel smoothing, parametric models including RKHS, and adaptive partition. Below, we highlight several of the most pertinent studies in the field. 
\bmhead{Optimism in online RL} Our work belongs to the online learning framework that focuses on maximizing cumulative rewards through interactions with unknown environments. The primary challenge in this domain is balancing exploration and exploitation. One widely adopted approach is the principle of optimism, typically in the form of upper confidence bounds (UCB). For episodic tabular MDPs, the minimax lower bound on regret is $\widetilde{\Omega}(\sqrt{H^3|\mathcal{S}||\mathcal{A}|K})$ \citep{azar2017minimax, jin2018q}. \citet{azar2017minimax} introduced an efficient algorithm UCBVI, which, based on Bernstein and Freedman inequalities, adds a UCB bonus directly to the optimal Q-functions. It achieves the lower bound when $K \geq H^2|\mathcal{S}|^3|\mathcal{A}|$ and $|\mathcal{S}||\mathcal{A}| \geq H$. More recently, \citet{Zhang2020IsRL} revealed that the regret bound for time-homogeneous MDPs allows for at most a polylogarithmic dependence on  $H$ provided that the total reward was bounded in $[0,1]$.  They developed an efficient algorithm called MVP that achieves the lower bound when $K \geq |\mathcal{S}|^3|\mathcal{A}|$. \citet{zhang2024settling} revisited MVP and demonstrated that this algorithm matched the minimax lower bound for every sample size $K \geq 1$. All of these approaches are model-based.

The principle of optimism has also been applied to model-free algorithms, such as $Q$-learning \citet{jin2018q}. However, no model-free algorithm has matched the lower bound. Moreover, the principle of optimism has also been explored in  infinite horizon MDPs \citet{ortner2013adaptive, Lakshmanan2015ImprovedRB, Qian2019ExplorationBF} as well as in continuous MDPs \citet{Song2019EfficientMR, Sinclair2019AdaptiveDF, Domingues2020KernelBasedRL, zhou2022computationally}.

\bmhead{Pessimism in Offline RL} In contrast, all offline learning algorithms refrain from executing additional data collection during the learning process.
The target of offline RL problems is to learn a near-optimal policy from pre-collected data. The challenges of offline RL primarily stem from  distributional shift and/or data coverage. The former refers to the potential disparity between
the policy used for data collection and the optimal policies. The latter indicates that some states and
actions may have abundant data, while others may have very little. In order to address the challenges, a line of works put forward the pessimism principle, see, e.g., \citet{Chen2021PessimismMI, Xie2021PolicyFB,Shi2022PessimisticQF,Rashidinejad2022Bridging,Yan2023The,Li2024Settling}. The first asymptotic sample complexity bound for tabular finite-horizon MDPs was established as $O\big(\frac{H^6|\mathcal{S}|C^*}{\varepsilon^2}\big)$ (where $\varepsilon$ is the target sub-optimality and $C^*$ single-policy concentrability coefficient) in \citet{Rashidinejad2022Bridging} and \citet{Xie2021PolicyFB} by incorporating Hoeffding-style lower confidence bounds into value iteration. Such a sample complexity bound was a large factor of $H^2$ above the minimax lower bound. More recently, \citet{Li2024Settling} proposed the state-of-the-art algorithm that achieved sample complexity bounds of the form $O\big(\frac{H^4|\mathcal{S}|C^*_{\text{clipped}}}{\varepsilon^2}\big)$, which was minimax optimal. 

\bmhead{Kernel smoothing in RL} Kernel smoothing is an extensively used standard nonparametric method and has a long history in the community of statistics. \citet{ormoneit2002kernel} appeared to be the first to introduce kernel methods into RL. They designed a Kernel Based RL (KBRL) algorithm on an i.i.d. sample from transition distribution, and then estimated  optimal value function with asymptotic convergence.   Combining strengths of KBRL and model-based planning, \citet{jong2006kernel} proposed an  algorithm named KBPS for online RL problems which converges to the optimal policy more rapidly than prior works. \citet{kveton2012kernel} developed a method that approximates KBRL in certain representative states and has lower time complexity than KBRL. \citet{barreto2016practical} proposed a stochastic factorization technique to reduce computational complexity and proposed a KBSF (kernel-based stochastic factorization) algorithm. The computational complexity of KBSF was linear in the number of sample transitions and KBSF was empirically shown on real-world data to outperform some previous approaches. \citet{Domingues2020KernelBasedRL} introduced Kernel-UCBVI and provided the first regret bound for episodic MDPs within the kernel smoothing framework. \citet{Lakshmanan2015ImprovedRB}  considered the problem of undiscounted RL, employed kernel density estimators, and provided a regret bound for undiscounted infinite horizon continuous MDPs. Other applications of kernel smoothing  in RL include, e.g., \citet{lim2019kernel} under the robust MDP framework and \citet{domingues2021kernel} on non-stationary models where the transition and reward functions vary across episodes.

\bmhead{Parametric assumption of nontabular MDP} There are several related papers addressing continuous MDP problems without relying on kernel smoothing techniques. They had tackled the challenge of determining the sample complexity for continuous MDPs under parametric assumptions, such as linear models \cite{yang2019sample, jin2020provably, he2023nearly, agarwal2023vo, zhang2024horizon}, reproducing kernel Hilbert spaces (RKHS) \cite{kakade2020information, yang2020function, chowdhury2023value, vakili2023kernelized, yeh2023sample, lai2024leveraging, vakili2024reward, kayal2025near}, the linear mixture models \cite{ayoub2020model,zhang2021variance} and so on. 	
 Among them,
 \cite{jin2020provably}  derived a regret bound of  $\widetilde{O}(\sqrt{d^3H^3K})$ which was later improved by \cite{he2023nearly} and \cite{agarwal2023vo} to $\widetilde{O}(\sqrt{d^2H^3K})$. More recently,  \citet{zhang2024horizon} obtained the first horizon-free regret bound of $\widetilde{O}(\sqrt{d^{11}K})$. 
\citet{kakade2020information} studied a nonlinear control problem where the transition dynamics lie in the RKHS of a known kernel. 
\citet{yang2020function} assumed that the Bellman optimality operator maps any bounded state-action value function into an RKHS with bounded norm, and developed an optimistic least-squares value iteration algorithm. They obtained a regret of $\widetilde{O}(H^2\sqrt{K})$ for kernels with exponential decay Mercer eigenvalues, such as squared exponential kernel. For kernels with polynomially decaying eigenvalues,  \citet{vakili2023kernelized} assumed the transition model could be represented using the RKHS of a known kernel function, and achieved a sublinear regret of $\widetilde{O}(H^2K^{\frac{d+\alpha/2}{d+\alpha}})$ with smooth parameter $\alpha$, which was the first sublinear regret bound under this setting. Notably, with Mat\'{e}rn kernels, their result matches the lower bound given by \citet{scarlett2017lower}. More recently, \citet{lai2024leveraging, vakili2024reward} and \citet{kayal2025near} explored reward-free RL within the RKHS framework.
\citet{ayoub2020model} considered the linear mixture model, which posits that the transition dynamics are a linear combination of known base models and achieved a regret bound of $\widetilde{O}(\sqrt{d^2H^4K})$ and later, \citet{zhang2021variance} derived the first horizon-free regret bound for this setting. 

	\bmhead{Adaptive Partitions} Another recent line of research on RL problems with continuous state-action spaces,  see, e.g., \citet{Song2019EfficientMR, Sinclair2019AdaptiveDF} and \citet{Sinclair2020AdaptiveDF}),  use discretization of the continuous spaces. Nevertheless, this approach often suffered from impractical computational complexity, especially in high-dimensional settings.

\section{Problem formulation}
\label{pre}
 We will use the following notation to simplify the presentation: $[n]:=\{1, 2, \cdots, n\}$ for any positive integer $n$,  $\mathbb{R}_{\geq 0}:=[0,\infty)$, $\mathbf{1}[\mathcal{E}] (\hbox{or }\mathbf{1}_{\mathcal{E}}(x))$ the indicator of $\mathcal{E}$  and  $Pf := \int f dP$ for any function $f$ and measure $P$.

An $H$-horizon Markov decision process is  a tuple $\mathcal{M} = (\mathcal{S}, \mathcal{A}, P, r, H)$ described as follows:

(1). $\mathcal{S}$ and $\mathcal{A}$ are the  state and action spaces, equipped with metrics $\rho_{\mathcal{S}}$ and $\rho_{\mathcal{A}}$  respectively, both of which are compact. A metric $\rho:=\rho_{\mathcal{S}}\times\rho_{\mathcal{A}}$ defined by
\(
  \rho[(s, a), (s', a')] = \rho_{\mathcal{S}}(s, s') + \rho_{\mathcal{A}}(a, a')
\) is also induced for  the product  space $\mathcal{S} \times \mathcal{A}$.
Denote also by $\mathcal{T}_{\mathcal{S}}$ and $\mathcal{T}_{\mathcal{A}}$ the Borel $\sigma$-algebras induced by the metrics $\rho_{\mathcal{S}}$ and $\rho_{\mathcal{A}}$, respectively.

(2). $P = \left\{P_h : \mathcal{S} \times \mathcal{A} \times \mathcal{T}_{\mathcal{S}} \to [0, 1]\right\}_{h\in [H]}$ is the transition kernel. For any $(s, a) \in \mathcal{S} \times \mathcal{A}$, write $P_hf(s, a) = \int f(x)P_h(dx|s, a)$  and $\mathbb{V}_h(f, s, a) := P_h[f - P_hf(s, a)]^2(s, a)$.

 (3). $r = \{r_h : \mathcal{S} \times \mathcal{A} \to \mathbb{R}_{\geq 0}\}_{h\in [H]}$ is a $H$-dimensional vector of specified reward functions, satisfying \begin{equation}\label{assu.1}r_h \geq 0 \hbox{ and }\sum_{h = 1}^Hr_h(s_h, a_h) \leq H, h \in [H] \hbox{ almost surely}
\end{equation}  under every policy.

We assume specified reward functions to simplify the analysis, as in many existing works e.g., \citet{Zhang2022HorizonFreeRL, Sinclair2019AdaptiveDF}. Moreover, while most of the existing work took $r_h \in [0, 1]$ for all $h \in [H]$,   \citet{Jiang2018OpenPT} took $\sum_{h = 1}^Hr_h \leq 1$. Eq \eqref{assu.1} takes the form of the latter to extend the former, so as to also accommodate sparse-rewards where  significant rewards may occur at critical steps ($r_h > 1$ permitted) and zero rewards can be collected otherwise.

 We work with deterministic policies $\pi = \{\pi_h\}_{h = 1}^{H}$, where $\pi_h$ is a map $\pi_h : \mathcal{S} \to \mathcal{A}$.  Given an initial state $s_1$ and a policy $\pi$, a trajectory $(s_1, a_1, r_1, s_2, \cdots, s_H, a_H, r_H, s_{H + 1})$ is generated following the rule $a_h \sim \pi_h(s_h)$, $r_h = r_h(s_h, a_h), s_{h + 1} \sim P_{h}(\cdot|s_h, a_h)$.
Denote by
\beqnn
    V^\pi_h(s)&=&\E\left[\sum_{t = h}^Hr_{t}\Big|s_h = s, \pi\right],
    \\ Q^\pi_h(s, a)&=&\E\left[\sum_{t = h}^Hr_{t}\Big|s_h = s, a_h = a, \pi\right]
\eeqnn
the value functions at a state $s$ and a state-action pair $(s, a)$ at step $h$, respectively, for a policy $\pi$. Clearly, $V_{H + 1}^\pi(s) = 0$ for all $s \in \mathcal{S}$ and all policy $\pi$. Denote {$V^\pi_1(s), Q^\pi_1(s, a)$ by $V^\pi(s), Q^\pi(s, a)$ and further}
\(
    V^*_h(s) = \sup_{\pi}V^{\pi}_h(s)\) and  \(Q^*_h(s, a) = \sup_{\pi}Q_h^\pi(s, a)
\)
the optimal value functions.
We are concerned with an online setting in which the agent updates its policy at the end of every episode,  taking into account the most updated experience collected and then interacts with the environment with the new policy. Denote by $\pi^k$ the policy by which the agent interacts with the environment through the $k$-th episode. Then the objective is to bound the regret
\begin{equation}\label{regretdefinition}
    \mathcal{R}(K) = \sum_{k = 1}^K\Big(V_1^*(s_1^k) - V_1^{\pi^k}(s_1^k)\Big).
\end{equation}
 of the policy $\pi=\{\pi_k: {k\in[K]}\}$.

For continuous state and action spaces, the problem can only be attacked under certain continuity conditions. The following Lipschitz assumption on the model is fundamental for our analysis.\begin{assumption}[Lipschitz property]
    \label{assu.2}
    There exist constants $\lambda_r>0$ and $\lambda_p\in(0, 1)$ such that, for all $h\in[H]$,
    \beqnn
 	       |r_h(s, a) - r_h(s', a')| &\leq&\lambda_{r}\rho[(s, a), (s', a')],\\
	       W_{1}(P_h(\cdot|s, a), P_h(\cdot|s', a'))&\leq&\lambda_{p}\rho[(s, a), (s', a')],
    \eeqnn
where  $$W_{1}(\mu, \nu) = \sup_{f : Lip(f) \leq 1}\int_{\mathcal{S}}f(y)(d\mu(y) - d\nu(y))$$ is the 1-Wasserstein distance  between measures $\mu$ and $\nu$, where $Lip(f)$ is the Lipschitz constant of function $f$ w.r.t. $\rho_{\mathcal{S}}$.
\end{assumption}


Under the Lipschitz assumption of the model, the value functions are also Lipschitz with respect to the metrics:
\begin{lemma}[Lemma 1 of \citet{Domingues2020KernelBasedRL}]
    \label{lem.1}
Under Assumption \ref{assu.2}, $$|Q^*_h(s, a) - Q^*_h(s', a')| \leq L_{h}\rho[(s, a), (s', a')]$$ for all $h \in [H]$, where $L_{h} = \sum_{h' = h}^{H}\lambda_{r}\lambda_{p}^{H - h'}.$ As a result,  $V^*$ is also Lipschitz.
\end{lemma}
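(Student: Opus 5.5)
We argue by backward induction on $h$, from $h=H+1$ down to $h=1$, proving simultaneously that $Q^*_h$ is $L_h$-Lipschitz with respect to $\rho$ and that $V^*_h$ is $L_h$-Lipschitz with respect to $\rho_{\mathcal{S}}$. Throughout we use the Bellman optimality equations
\[
Q^*_h(s,a)=r_h(s,a)+P_hV^*_{h+1}(s,a),\qquad V^*_h(s)=\sup_{a\in\mathcal{A}}Q^*_h(s,a),\qquad V^*_{H+1}\equiv 0 .
\]
These are standard for deterministic Markov policies once measurability and attainment of suprema are settled. Compactness of $\mathcal{A}$ together with the continuity established along the induction makes the supremum a maximum, and in any case the argument below only needs the supremum, not a maximizer.

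For the base case, $V^*_{H+1}\equiv 0$ is trivially Lipschitz with any constant, which is consistent with the convention $L_{H+1}=0$ (empty sum). For the inductive step, assume $V^*_{h+1}$ is $L_{h+1}$-Lipschitz. If $L_{h+1}>0$, the function $f=V^*_{h+1}/L_{h+1}$ satisfies $Lip(f)\le 1$. The definition of $W_1$ in Assumption \ref{assu.2} then gives
\[
|P_hV^*_{h+1}(s,a)-P_hV^*_{h+1}(s',a')|\le L_{h+1}\,W_1\big(P_h(\cdot|s,a),P_h(\cdot|s',a')\big)\le L_{h+1}\lambda_p\,\rho[(s,a),(s',a')].
\]
To get the absolute value we apply the definition to both $f$ and $-f$. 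Adding the reward Lipschitz bound yields
\[
|Q^*_h(s,a)-Q^*_h(s',a')|\le(\lambda_r+\lambda_pL_{h+1})\,\rho[(s,a),(s',a')].
\]
Next we pass to $V^*_h$. For any $s,s'$ and any $a$, taking $a'=a$ gives $\rho[(s,a),(s',a)]=\rho_{\mathcal{S}}(s,s')$. Hence $Q^*_h(s,a)\le Q^*_h(s',a)+C\rho_{\mathcal{S}}(s,s')\le V^*_h(s')+C\rho_{\mathcal{S}}(s,s')$, where $C=\lambda_r+\lambda_pL_{h+1}$. Taking the supremum over $a$ and then symmetrizing shows that $V^*_h$ is $C$-Lipschitz.

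It remains to check that the recursion $L_h=\lambda_r+\lambda_pL_{h+1}$ with $L_{H+1}=0$ produces the stated constant. Unrolling gives $L_h=\lambda_r\sum_{j=0}^{H-h}\lambda_p^{j}$. The constant in the statement is $\sum_{h'=h}^H\lambda_r\lambda_p^{H-h'}$, which is the same set of exponents $\{0,\dots,H-h\}$, just re-indexed. We also note the uniform bound $L_h\le\lambda_r/(1-\lambda_p)$, which is where $\lambda_p<1$ is used.

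The only genuinely delicate point is justifying the Bellman equations and the measurability needed to write $P_hV^*_{h+1}$ in the general compact metric setting. The Lipschitz continuity obtained inductively makes $V^*_{h+1}$ continuous, hence Borel and bounded, so the integral is well defined. I would invoke the standard dynamic-programming theory, or cite Domingues et al., rather than reprove it; the rest is the routine computation above.
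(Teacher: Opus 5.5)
Your proof is correct and follows the standard argument behind the cited Lemma 1 of Domingues et al.; the paper gives no proof of its own, only the citation. The argument is backward induction through the Bellman optimality equations, with the dual form of $W_1$ applied to $V^*_{h+1}/L_{h+1}$ giving the recursion $L_h=\lambda_r+\lambda_p L_{h+1}$, and a supremum over actions transferring the constant from $Q^*_h$ to $V^*_h$ (the case $L_{h+1}=0$ you skip occurs only at $h=H$, where $V^*_{H+1}\equiv 0$ makes it trivial).
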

 Moreover,  a regret generally depends on $L_1$ that is exponential in $H$ from Lemma \ref{lem.1}. Therefore,   in Assumption \ref{assu.2}, the reason to take $\lambda_p < 1$ is  such that $L_1\leq \frac{\lambda_r}{1-\lambda_p}$, independent with $H$. 


 \section{Algorithm}
\label{main_alg} This section describes the KBVI-BUCB algorithm.
\subsection{Algorithm}
Basically, in each episode $k$, the agent follows the greedy policy $\pi^k$ with respect to the current estimates $Q_h^k$ of the $Q$-functions at $h\in[H]$, which will be depicted later on in detail in Section \ref{kernel-estimate}. The agent thus collects a trajectory $\tau^k=\{s_1^k,a_1^k,s_2^k,a_2^k,\dots, s_H^k,a_H^k,s_{H+1}^k\}.$ At the end of this epoch, the agent refines its transition kernel estimate with the accumulated observations and then recalculates the value functions using the updated transition kernel as illustrated in  Section \ref{kernel-estimate}. Pseudocode of the algorithm is presented in Algorithm \ref{alg.1}.

Note that it is challenging to exactly identify the optimal action in Eq \eqref{greedy-a} due to the infinity  of actions. A remedy  is to approximate the action space with a finite one by means of discretization. Unlike the state space, discretizing the action space is often more acceptable, as it tends to be simpler in many practical applications like autonomous driving. The Deterministic Optimistic Optimization algorithm by \citet{Munos2014FromBT} is one method to explore and approximate the optimal action efficiently.

\begin{algorithm}
   \caption{KBVI-BUCB}
   \label{alg.1}
\begin{algorithmic}[1]
   \Require Parameters $K, H, \beta, \delta, \lambda_p, \lambda_r, \sigma$
   \State $V^1_h(s) \Leftarrow H,
   Q_h^{1}(s, a) \Leftarrow H$ and $\mathcal{D}_h \Leftarrow \emptyset$ for all $h \in [H]$
   \For{$k \in [K]$}
   \State Obtain initial state $s^k_1$
   \For{$h \in [H]$}
   \State Take action
   \begin{equation}\label{greedy-a}a_h^k = \arg\max_{a \in \mathcal{A}}Q_h^k(s_h^k, a)
   \end{equation}
   \State Observe next state $s_{h + 1}^{k}$
   \State Add $(s_h^k, a_h^k, s_{h+1}^k)$ to $\mathcal{D}_{h}$
   \EndFor
   \State $\{Q^{k + 1}_h\}_{h\in[H]}$ = Estimate Q($k + 1, \{\mathcal{D}_{h}\}_{h\in[H]})$
   \EndFor
\end{algorithmic}
\end{algorithm}

\subsection{Nadaraya-Watson Kernel Estimates of the value functions}\label{kernel-estimate}

Depicted in the following steps is a Nadaraya-Watson procedure to estimate for every $k\in [K]$ the value functions using the sample $$\{s_1^l,a_1^l,s_2^l,a_2^l,\dots, s_H^l,a_H^l,s_{H+1}^l\}: l\in[k-1]$$

\bmhead{1. Nadaraya-Watson estimate of the transition kernel} Fix a kernel function $g : \mathbb{R}_{\geq 0} \to [0, 1]$. 
For any $(s, a) \in \mathcal{S} \times \mathcal{A}$ at $h \in [H]$,  define
\beqnn
	\widetilde{\mathnormal{w}}_h^{k, l}(s, a) =        \frac{w_h^l(s, a)}{\beta + \sum_{n = 1}^{k-1}w_h^n(s, a)}, l=1,2,\dots,k-1,
\eeqnn
 where $        \mathnormal{w}_h^l(s, a) = g({\rho((s, a), (s_h^l, a_h^l))\over\sigma})$, $\sigma$ is the bandwidth and  $0 < \beta \leq 1$ is a regularization parameter. With $\widetilde{\mathnormal{w}}^{k, l}_h(s, a)$ as weights, the transition distribution is estimated  by
\[
    \widehat{P}^k_h(\mathcal{O}|s, a) = \sum_{l = 1}^{k-1}\widetilde{\mathnormal{w}}^{k, l}_h(s, a)\mathbf{1}_{\mathcal{O}}(s_{h+1}^l),\quad\mathcal{O}\in \mathcal{T}_{\mathcal{S}}.
\]
It in fact defines a (deficit) discrete distribution putting masses $\widetilde{\mathnormal{w}}^{k, l}_h(s, a)$ at points $s_{h+1}^l$, $l=1,2,\dots, k-1$. Consequently, the conditional expectation  and variance of a function $f$ with respect to  $\widehat{P}^k_h(\cdot|s, a)$ are approximately $$\widehat{P}^k_hf(s, a) = \sum_{l=1}^{k-1}\widetilde{\mathnormal{w}}^{k, l}_h(s, a)f(s_{h+1}^l)$$ and $$\widehat{\V}^k_h(f,s, a) = \sum_{l=1}^{k-1}\widetilde{\mathnormal{w}}^{k, l}_h(s, a)\big[f(s_{h+1}^l) - \widehat{P}^k_hf(s, a)\big]^2.$$
\bmhead{2. Bonus} We define a Bernstein-style bonus function that takes the approximate form
\begin{align}
    \label{est.5}
    b_{h}^k(s, a) \approx \sqrt{\frac{\widehat{\mathbb{V}}^k_h(V_{h+1}^k,\, s, a)}{C^k_h(s, a)}} + \frac{H}{C^k_h(s, a)} + \sigma,
\end{align}
where $C^k_h(s, a) = \beta + \sum_{l=1}^{k-1}w^{l}_h(s, a)$ is a generalized count of $(s, a)$ and $\sigma$ accounts for the approximation bias introduced by kernel smoothing. The precise definition with all numerical constants and kernel-dependent quantities is deferred to Appendix \ref{appendix1}. The first two terms mirror the classical Bernstein bonus used in tabular settings: a variance-dependent term scaled by the generalized count and a range-dependent term of order $H/C^k_h(s,a)$. The last term $\sigma$ captures the bias inherent in smoothing over a continuous state--action space.

\bmhead{3. Estimate value functions}Construct an upper confidence bound (UCB) estimate
\[
\widetilde{Q}_h^k(s, a) = r_h(s, a) + \widehat{P}^k_hV_{h+1}^k(s, a) + b_h^k(s, a)
\]
of the $Q$ function. Because the true optimal $Q_h^*$ is $L_h$-Lipschitz (Lemma \ref{lem.1}), we can define an $L_h$-Lipschitz estimate of the $Q$ function using interpolation by
\begin{equation}
	Q_h^k(s, a) = \min_{l \in [k-1]}\big\{\widetilde{Q}_h^k(s_h^l, a_h^l) + L_h\rho[(s, a), (s_h^l, a_h^l)]\big\}\notag
\end{equation}
and a corresponding estimate of the value function $V$ by
$V_{h}^k(s) =\min \{\max_{a}Q_{h}^k(s, a), H\}.$

Presented in Algorithm \ref{alg.2} is a pseudocode of the procedure above.

\begin{algorithm}[H]
   \caption{Estimate $Q^k$}
   \label{alg.2}
\begin{algorithmic}
   \Require Episode indicator $k$ and dataset $\{\mathcal{D}_h\}_{h\in [H]}$
   \State $V_{H + 1}^k(s) \Leftarrow 0$ for all $s$.
   \For{$h = H, H - 1, \cdots, 1$}
   \For{$m \in [k-1]$}
   \For{$l \in [k-1]$}
   \State Set $ w_h^{l}(s_h^m, a_h^m) = g\big({\rho((s_h^m, a_h^m), (s_h^{l}, a_h^{l}))\over{\sigma}}\big)$
   \State Set $\widetilde{w}_h^{k, l}(s_h^m, a_h^m) = \frac{w_h^l(s_h^m, a_h^m)}{\beta + \sum_{n=1}^{k-1}w_h^n(s_h^m, a_h^m)}$
   \EndFor
   \State Calculate $b_h^k(s_h^m, a_h^m)$ as (\ref{est.5})
   \State $\widetilde{Q}_h^k(s_h^m, a_h^m) = \sum_{l = 1}^{k-1}\widetilde{\mathnormal{w}}_h^{k, l}(s_h^m, a_h^m)V_{h+1}^k(s_{h+1}^l) +r_h(s_h^m, a_h^m) + b_h^k(s_h^m, a_h^m)$
   \EndFor
   \State $Q_h^k(s, a)=\min\limits_{l \in [k-1]}\big\{\widetilde{Q}_h^k(s_h^l, a_h^l)+L_h\rho[(s, a), (s_h^l, a_h^l)]\big\}$
   \For{$l \in [k-1]$}
   \State $V_{h}^k(s_h^l) = \min\big\{\max\limits_a Q_{h}^{k}(s_h^l, a), H\big\}$
   \EndFor
   \EndFor
   \State \textbf{Return} $\{Q_{h}^k\}_{h \in [H]}$
   \end{algorithmic}
\end{algorithm}

\section{Theoretical guarantee}
\label{theo_gua}
This section  presents a theoretical guarantee of the algorithm above in terms of high-probability upper bounds under  certain assumptions on the model and kernel functions. The following is for the kernel functions, which is satisfied by the Gaussian kernel.
\begin{assumption}
     \label{assu.3}
     The function $g : \mathbb{R}_{\geq 0} \to [0, 1]$ is differentiable,
	non-increasing and satisfying $g(4) > 0$, \(
	g(z) \leq C_1\exp(-z^2/2)$ and $\sup_{z}|g'(z)| \leq C_2
    \) for some constants $C_1$ and $C_2$.
\end{assumption}
\begin{theorem}
    \label{thm.1}
    Under Assumptions \ref{assu.2} and \ref{assu.3}, for any $\delta \in (0, 1)$, with probability at least $1 - \delta$, 
    \[
        \mathcal{R}(K) \leq \widetilde{O}\bigg(\sqrt{H^3K|\mathcal{C}_{\sigma}|} + H^2|\mathcal{C}_{\sigma}||\widetilde{\mathcal{C}}_{\sigma}| + HK\sigma\bigg),
    \]
    where $|\mathcal{C}_{\sigma}|$ and $|\widetilde{\mathcal{C}}_{\sigma}|$ are the $\sigma$-covering numbers of $\mathcal{S} \times \mathcal{A}$ and $\mathcal{S}$, under $\rho$ and $\rho_{\mathcal{S}}$, respectively.
\end{theorem}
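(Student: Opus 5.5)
We follow the standard UCBVI regret template, adapted to kernel smoothing in the spirit of \citet{Domingues2020KernelBasedRL}, with the Bernstein bonus analyzed as in \citet{azar2017minimax}.

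\textbf{Step 1: a good event.} We first fix an $\sigma$-cover $\mathcal{C}_\sigma$ of $\mathcal{S}\times\mathcal{A}$ and an $\sigma$-cover $\widetilde{\mathcal{C}}_\sigma$ of $\mathcal{S}$. On the cover points we apply the martingale empirical Bernstein inequality (Theorem \ref{aux.4}) to the weighted sums $\sum_{l}\widetilde w_h^{k,l}(s,a)\big[V^*_{h+1}(s^l_{h+1})-P_hV^*_{h+1}(s_h^l,a_h^l)\big]$. The weights are predictable, since $w_h^l$ depends only on $(s_h^l,a_h^l)$, so these are martingale sums. This gives, with probability at least $1-\delta$, uniformly in $k,h$,
\[
|(\widehat P_h^k-P_h)V^*_{h+1}(s,a)|\lesssim \sqrt{\frac{\widehat{\V}^k_h(V^*_{h+1},s,a)\log}{C_h^k(s,a)}}+\frac{H\log}{C_h^k(s,a)}+L_1\sigma .
\]
The $\sigma$ term comes from the kernel bias: $g$ has Gaussian tails, so the total weight lies essentially within radius $O(\sigma\sqrt{\log})$, and Assumption \ref{assu.2} bounds $|P_h f(s,a)-P_hf(s_h^l,a_h^l)|$ there. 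Lipschitz continuity of $w$ in $(s,a)$, which follows from $|g'|\le C_2$, extends the bound from cover points to all points at cost $O(\sigma)$.

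We also need an $L_1$-type or Bernstein bound on $(\widehat P_h^k-P_h)f$ for arbitrary bounded $f$. We obtain it by discretizing the next-state space through $\widetilde{\mathcal C}_\sigma$, which gives a term of order $\sqrt{|\widetilde{\mathcal C}_\sigma|\,\cdot}$. Finally, we relate $\widehat{\V}(V^*_{h+1})$ to $\widehat{\V}(V^k_{h+1})$ and to $\V_h(V^{\pi^k}_{h+1})$ by the usual inequality $|\sqrt{\V(f)}-\sqrt{\V(g)}|\le\sqrt{\widehat P(f-g)^2}$.

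\textbf{Step 2: optimism.} By backward induction on $h$ we show $Q_h^k\ge Q_h^*$ and $V_h^k\ge V_h^*$. On the sample points this follows from Step 1, provided the bonus dominates the deviation. The variance-replacement error from $V^*$ to $V^k$ is handled as in \citet{azar2017minimax}: since $V^k_{h+1}\ge V^*_{h+1}$, that error is absorbed into the $H/C$ and bonus terms using the precise constants of Appendix \ref{appendix1}. We then extend optimism to all $(s,a)$ through the $L_h$-Lipschitz interpolation, since $Q^*_h$ is $L_h$-Lipschitz by Lemma \ref{lem.1}.

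\textbf{Step 3: regret decomposition.} Let $\delta_h^k=V_h^k(s_h^k)-V_h^{\pi^k}(s_h^k)$. By optimism, $\mathcal R(K)\le\sum_k\delta_1^k$. Unrolling gives
\[
\delta_h^k\le \delta_{h+1}^k+\xi_{h}^k+2b_h^k+(\widehat P^k_h-P_h)(V^k_{h+1}-V^*_{h+1})+O(L_1\sigma),
\]
where $\xi_h^k$ is a martingale difference. The interpolation error at $(s_h^k,a_h^k)$ is $O(\sigma)$ when a past sample lies within $\sigma$; otherwise we charge at most $H$ to that cover cell.

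The cross term is bounded by $\frac1H P_h(V^k_{h+1}-V^*_{h+1})+\widetilde O(H|\widetilde{\mathcal C}_\sigma|/C_h^k)$, using the Bernstein bound per next-state cell plus AM-GM. The factor $(1+1/H)^H\le e$ keeps the recursion bounded.

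Summing, we need three estimates.
\begin{itemize}
\item The martingale terms contribute $\widetilde O(H\sqrt{K})$ by Azuma, or better by Freedman.
\item The bias terms contribute $HK\sigma$.
\item The counting terms satisfy $\sum_{k,h}1/C_h^k\lesssim H|\mathcal C_\sigma|\log K$ by a kernel pigeonhole argument on the cover, giving $H^2|\mathcal C_\sigma||\widetilde{\mathcal C}_\sigma|$ after multiplying by $H|\widetilde{\mathcal C}_\sigma|$.
\end{itemize}
For the main term we use Cauchy--Schwarz:
\[
\sum_{k,h}\sqrt{\V_h(V^{\pi^k}_{h+1})/C_h^k}\le\sqrt{\sum_{k,h}\V_h\cdot\sum_{k,h}1/C_h^k}.
\]
The law of total variance, together with $\sum_h r_h\le H$, gives $\sum_h\V_h(V^{\pi^k}_{h+1})\lesssim H^2$ per episode up to martingale fluctuations. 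This yields $\sqrt{H^2K\cdot H|\mathcal C_\sigma|}=\sqrt{H^3K|\mathcal C_\sigma|}$. The leftover variance terms $\sqrt{\widehat P(V^k-V^{\pi^k})^2/C}$ are handled by bounding $(V^k-V^{\pi^k})^2\le H(V^k-V^{\pi^k})$ and recursing, which contributes only lower-order terms.

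\textbf{Main obstacle.} The hardest step is Step 1 combined with the variance replacement. Kernel weights pool non-i.i.d. data, so the concentration has to go through Theorem \ref{aux.4} with predictable weights, a union bound over the covers, and continuity in $(s,a)$. The data-dependent $V^k_{h+1}$ must then be handled only through $\widetilde{\mathcal C}_\sigma$-discretized $L_1$ bounds, so that the $|\widetilde{\mathcal C}_\sigma|$ dependence appears only in the lower-order term. I would first establish the uniform concentration for fixed $V^*$ and the per-cell discretized bounds, and then check carefully that the clipping at $H$ and the Lipschitz interpolation preserve optimism.
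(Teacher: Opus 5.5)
Your proposal has a genuine gap in Step 3. The Azar-style $(1+1/H)$ recursion loses a factor of $H$ in every lower-order term, so it cannot give $H^2|\mathcal C_\sigma||\widetilde{\mathcal C}_\sigma|$. Instead it reproduces the $H^3|\mathcal C_\sigma||\widetilde{\mathcal C}_\sigma|$ term of \citet{Domingues2020KernelBasedRL}, which is exactly the term the theorem improves.

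The false step is the cross-term claim $(\widehat P^k_h-P_h)\Delta\le\frac{1}{H}P_h\Delta+\widetilde O(H|\widetilde{\mathcal C}_\sigma|/C^k_h)$ for $\Delta=V^k_{h+1}-V^*_{h+1}\in[0,H]$. Whether you use per-cell Bernstein on a partition of $\mathcal S$ or a cover of Lipschitz functions, the best available bound is $\sqrt{\V_h(\Delta)D/C^k_h}+HD/C^k_h$ with $D\approx|\widetilde{\mathcal C}_\sigma|\log$. Since $\V_h(\Delta)\le HP_h\Delta$, forcing the coefficient of $P_h\Delta$ down to $1/H$ gives $\sqrt{HP_h\Delta\,D/C^k_h}\le\frac{1}{H}P_h\Delta+\frac{H^2D}{4C^k_h}$. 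The remainder is therefore $H^2D/C^k_h$, not $HD/C^k_h$, and summing against $\sum_{k,h}1/C^k_h\lesssim H|\mathcal C_\sigma|$ gives $H^3|\mathcal C_\sigma||\widetilde{\mathcal C}_\sigma|$. Your leftover variance terms suffer the same loss: bounding $(V^k-V^{\pi^k})^2\le H(V^k-V^{\pi^k})$ and then recursing costs $H^2/C^k_h$ per step. That produces an extra $\widetilde O(H^3|\mathcal C_\sigma|)$, which the stated bound does not contain.

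The missing idea is the paper's way of avoiding the recursion altogether.
\begin{itemize}
\item It writes $\mathcal R(K)\le I_1+I_2+I_3$, where $I_3$ is the sum of one-step Bellman errors of $V^k$ (dominated by $\hat I_3$).
\item It keeps both the bonus and the cross term in variance form, $\sqrt{\V_h(V^k_{h+1})\eta_4/C^k_h}$ and $\sqrt{\V_h(V^k_{h+1}-V^*_{h+1})D_{2L_1,H,\sigma}/C^k_h}$, and applies Cauchy--Schwarz with Lemma \ref{reb2.3}.
\item It controls $I_4=\sum_{k,h}\V_h(V^k_{h+1})$ and $I_5=\sum_{k,h}\V_h(V^k_{h+1}-V^*_{h+1})$ by a law-of-total-variance telescoping. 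For $f\in\{V^k,\,V^k-V^*\}$, $\V_h(f_{h+1})$ splits into a martingale term, a telescoping term, and $f_h(s^k_h)^2-(P_hf_{h+1})^2\le 2H\max\{f_h(s^k_h)-P_hf_{h+1},0\}$. For $f=V^k-V^*$, this drift is at most the Bellman error of $V^k$, because $V^*_h(s^k_h)\ge r_h+P_hV^*_{h+1}(s^k_h,a^k_h)$.
\end{itemize}
This yields $I_4\lesssim H^2K+H\hat I_3$ and $I_5\lesssim H^2\eta+H\hat I_3$. That is one factor of $H$, rather than the $H^2$ implicit in $\V_h(\Delta)\le HP_h\Delta$ followed by unrolling. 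The self-bounding inequality $\hat I_3\lesssim\sqrt{\hat I_3H^2|\mathcal C_\sigma|D_{L_1,H,\sigma}}+G$ then gives the $H^2|\mathcal C_\sigma||\widetilde{\mathcal C}_\sigma|$ term.

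Two smaller points.
\begin{itemize}
\item The union bound in Step 1 must use a cover at scale about $\sigma^3/(KH^2)$, not $\sigma$. The weights are only $C_2/\sigma$-Lipschitz, so \eqref{app.aux.1} gives nothing at distance $\sigma$, and the deviations at a point and at a cover point $\sigma$ away need not be $O(\sigma)$ apart. The finer cover costs only logarithms.
\item Your optimism step is correct, but not ``as in Azar et al.'', whose Bernstein bonus carries an extra correction term. Here it works because the mean gap $\widehat P^k_h(V^k_{h+1}-V^*_{h+1})\ge 0$ absorbs the variance deficit by AM--GM. This is what Lemma \ref{opt.1} encodes as monotonicity.
\end{itemize}
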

\begin{proof}
    See Appendix \ref{pot}.
\end{proof}

\cite{Domingues2020KernelBasedRL} leveraged the Kernel-UCBVI algorithm to obtain the first and state-of-the-art regret (see Theorem 1 therein) for  model-based finite horizon inhomogeneous continuous RL with smoothing kernels. Their regret is bounded by
\beqlb\label{Doming1}
\widetilde{O}\bigg(\sqrt{H^4K|\mathcal{C}_{\sigma}|} + H^3|\mathcal{C}_{\sigma}||\widetilde{\mathcal{C}}_{\sigma}| + HK\sigma\bigg).
\eeqlb
There are two differences between \citet{Domingues2020KernelBasedRL} and the current paper: (1) While $r_h\in[0,1]$ in \citet{Domingues2020KernelBasedRL}, we require a slightly weaker assumption $\sum_{h=1}^Hr_h\leq H$. (2). The Bernstein-type bonus  \eqref{est.5} may  improve the Hoeffding-type bonus in \citet{Domingues2020KernelBasedRL} by leveraging the variance of the next state. 
While it is not quite clear up to now but we believe that the reduction of the powers of $H$ from $\sqrt{H^4K|\mathcal{C}_{\sigma}|}$ and $H^3|\mathcal{C}_{\sigma}||\widetilde{\mathcal{C}}_{\sigma}|$ to  $\sqrt{H^3K|\mathcal{C}_{\sigma}|}$ and $H^2|\mathcal{C}_{\sigma}||\widetilde{\mathcal{C}}_{\sigma}|$ respectively stems from the synergy between Bernstein-style bonuses and kernel smoothing.

Theorem \ref{thm.1} provides a class of bounds that vary in $\sigma$. While we can minimize them to have the sharpest bound. It is not quite clear to interpret that sharpest bound. Presented in what follows is another form of the theory in terms of covering dimension, another number measuring richness of a metric space, so as to provide more interpretable regret bounds.
    The covering dimension of a metric space $\mathcal{G}$ associated with a  constant $c$ induced from its $\sigma$-covering numbers $|\mathcal{C}_{\sigma}^{\mathcal{G}}|$ is defined by
    \begin{equation}\label{dimension}
        d_{\mathcal{G}} = \min\{d \in \mathbb{N}^+ : |\mathcal{C}_{\sigma}^{\mathcal{G}}| \leq c\sigma^{-d} \hbox{ for all } \sigma > 0\}.
    \end{equation}
    See e.g., \citet{Sinclair2019AdaptiveDF} and \citet{Domingues2020KernelBasedRL}.
For instance, the covering dimension of a ball in $\mathbb{R}^d$ is $O(d)$, as its $\sigma$-covering number is $O(\sigma^{-d})$.
Because a covering dimension depends on its parameter $c$ only through a constant factor in the regret bound, we therefore drop $c$ in the notation $d_{\mathcal{G}}$.

Let $d_{_{SA}}$ and $d_{_S}$ be the covering dimensions of $\mathcal{S} \times \mathcal{A}$ and $\mathcal{S}$, respectively, and, for any $H$, $K$ and an $\alpha\in(0, 1)$ small enough such that $H^\alpha K^{-\frac{1}{d_{_{SA}}+d_{_S}+1}}\leq 1$,  denote by
\begin{equation}\label{tilde_alpha}\tilde{\alpha}=\min\{\frac{1+\alpha d_{_{SA}}}{2}, 1-\alpha, (d_{_{SA}}+d_{_S})\alpha\}.
\end{equation}
Then we have the following corollary.
\begin{corollary}
    \label{coro.1}
    With probability at least $1 - \delta$,
    $ \mathcal{R}(K) \leq \widetilde{O}\Big(H^{2-\tilde{\alpha}}K^{\frac{d_{_{SA}} + d_{_{S}}}{d_{_{SA}} + d_{_S} + 1}}\Big)
    $
and hence
   \beqlb\label{coro.1-1}
   \mathcal{R}(K) \leq \widetilde{O}\Big(H^{1+\frac{1}{d_{_{SA}}+2}}K^{\frac{d_{_{SA}} + d_{_{S}}}{d_{_{SA}} + d_{_S} + 1}}\Big).
    \eeqlb
\end{corollary}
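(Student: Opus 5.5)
The plan is to obtain the corollary directly from Theorem \ref{thm.1}. I would bound the two covering numbers through the covering dimensions \eqref{dimension}, choose the bandwidth as a power of $H$ and $K$, and then optimize over the free exponent $\alpha$. Write $D = d_{_{SA}}+d_{_S}+1$. By \eqref{dimension}, $|\mathcal{C}_\sigma| \le c\,\sigma^{-d_{_{SA}}}$ and $|\widetilde{\mathcal{C}}_\sigma| \le c\,\sigma^{-d_{_S}}$ for all $\sigma>0$. I would take
\[
\sigma = H^{\alpha}K^{-1/D},
\]
which is at most $1$ by the standing assumption on $\alpha$. Since Theorem \ref{thm.1} holds for every $\sigma$ (the algorithm simply uses this bandwidth), this is a valid choice on the same probability-$(1-\delta)$ event.

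Substituting this $\sigma$ into the three terms of Theorem \ref{thm.1} gives the following bounds.
\begin{itemize}
\item First term: $\sqrt{H^3K\sigma^{-d_{_{SA}}}} = H^{2-\frac{1+\alpha d_{_{SA}}}{2}}K^{\frac{D+d_{_{SA}}}{2D}}$. Its $K$-exponent is at most $\frac{D-1}{D}$, because this is equivalent to $d_{_{SA}}+2\le D$, i.e.\ $d_{_S}\ge 1$.
\item Second term: $H^2\sigma^{-(d_{_{SA}}+d_{_S})} = H^{2-(d_{_{SA}}+d_{_S})\alpha}K^{\frac{d_{_{SA}}+d_{_S}}{D}}$.
\item Third term: $HK\sigma = H^{2-(1-\alpha)}K^{\frac{D-1}{D}}$.
\end{itemize}
Every $K$-exponent is thus at most $\frac{d_{_{SA}}+d_{_S}}{d_{_{SA}}+d_{_S}+1}$. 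Every $H$-exponent is at most $2-\tilde\alpha$, with $\tilde\alpha$ as in \eqref{tilde_alpha}. Bounding the sum by three times the largest term yields the first claim, up to constants and the logarithmic factors hidden in $\widetilde{O}$.

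For \eqref{coro.1-1} I would maximize $\tilde\alpha$ over $\alpha$. The quantity $\frac{1+\alpha d_{_{SA}}}{2}$ increases in $\alpha$ and $1-\alpha$ decreases, so they balance at $\alpha = \frac{1}{d_{_{SA}}+2}$, where both equal $1-\frac{1}{d_{_{SA}}+2}$. At this $\alpha$ the third quantity is $\frac{d_{_{SA}}+d_{_S}}{d_{_{SA}}+2}$, which is at least $\frac{d_{_{SA}}+1}{d_{_{SA}}+2}$ since $d_{_S}\ge 1$. Hence $\tilde\alpha = 1-\frac{1}{d_{_{SA}}+2}$, and the $H$-exponent becomes $1+\frac{1}{d_{_{SA}}+2}$.

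The one point needing care is the side condition $H^\alpha K^{-1/D}\le 1$ at this specific $\alpha$, which fails when $K$ is small relative to $H$. In that regime I would use the trivial bound $\mathcal{R}(K)\le HK$, which holds because each episode's regret is at most $H$ by \eqref{assu.1}. When $K^{1/D}< H^{\alpha}$ we have $HK = H K^{1/D}K^{\frac{D-1}{D}} \le H^{1+\alpha}K^{\frac{D-1}{D}}$, which is exactly the claimed bound, so the corollary holds for all $H$ and $K$. Apart from this case, the rest is exponent bookkeeping; the only structural input is $d_{_S}\ge 1$, which is used twice.
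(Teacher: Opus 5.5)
Your proposal is correct and follows the paper's proof essentially step for step. It uses the same bandwidth $\sigma = H^{\alpha}K^{-1/(d_{_{SA}}+d_{_S}+1)}$, the same three term-by-term bounds substituted into Theorem~\ref{thm.1}, the same choice $\alpha = \frac{1}{d_{_{SA}}+2}$, and the same fallback to the trivial bound $\mathcal{R}(K)\le HK$ when $K < H^{\frac{d_{_{SA}}+d_{_S}+1}{d_{_{SA}}+2}}$. Your explicit check that the first term's $K$-exponent $\frac{D+d_{_{SA}}}{2D}$ does not exceed $\frac{D-1}{D}$ fills in a step the paper leaves implicit, and you rightly note that only $d_{_S}\ge 1$ is needed, not $d_{_{SA}}\ge d_{_S}$.
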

\begin{proof}
    Take $\sigma = H^{\alpha}K^{-\frac{1}{d_{_{SA}}+d_{_S}+1}}<1$. Using \eqref{dimension} and \eqref{tilde_alpha}, we have
    \beqnn
    \sqrt{H^3K|\mathcal{C}_{\sigma}|} &\leq& H^{2-\frac{1+\alpha d_{_{SA}}}{2}}K^{\frac{d_{_{SA}} + d_{_{S}}}{d_{_{SA}} + d_{\mathcal{S} } + 1}},\\
  H^2|\mathcal{C}_{\sigma}||\widetilde{\mathcal{C}}_{\sigma}| &\leq&  H^{2-(d_{_{SA}}+d_{_{S}})\alpha}K^{\frac{d_{_{SA}} + d_{_{S}}}{d_{_{SA}} + d_{\mathcal{S} } + 1}},\\
  HK\sigma &\leq& H^{1+\alpha}K^{\frac{d_{_{SA}} + d_{_{S}}}{d_{_{SA}} + d_{\mathcal{S} } + 1}}.
  \eeqnn
The first part follows by substituting them into Theorem \ref{thm.1}.

 Inequality \eqref{coro.1-1} is then examined through the following two cases. On the one hand, if $K\geq H^{\frac{d_{_{SA}}+d_{_S}+1}{d_{_{SA}}+2}}$, then one can take $\alpha=\frac{1}{d_{_{SA}}+2}$, so that $\tilde{\alpha}=1-\frac{1}{d_{_{SA}}+2}$ by using the fact $d_{_{SA}}\geq d_{_{S}}\geq 1$. Therefore, the first inequality just proved gives rise to
   $$
   \mathcal{R}(K) \leq \tilde{O}\Big(H^{1+\frac{1}{d_{_{SA}}+2}}K^{\frac{d_{_{SA}} + d_{_{S}}}{d_{_{SA}} + d_{_S} + 1}}\Big).
   $$
On the other hand, if $K<H^{\frac{d_{_{SA}}+d_{_S}+1}{d_{_{SA}}+2}}$, then
 $$H^{1+\frac{1}{d_{_{SA}}+2}}K^{\frac{d_{_{SA}} + d_{_{S}}}{d_{_{SA}} + d_{_S} + 1}}\geq HK\geq \mathcal{R}(K).$$
Summing up, the second inequality follows.
\end{proof}


We have the following remarks on this bound.

Firstly, with $\sigma = K^{-\frac{1}{d_{SA}+d_{S}+1}}$, \citet{Domingues2020KernelBasedRL} obtained a regret of order $\mathcal{O}\!\left(H^{3} K^{\frac{2d_{SA}}{2d_{SA}+1}}\right)$ with probability at least $1-\delta$.
In contrast, the regret bound in inequality~(8) improves the dependence on the horizon $H$ by
$3 - \frac{d_{SA}+3}{d_{SA}+2} = 1 + \frac{d_{SA}+1}{d_{SA}+2}$, in which
the reduction of $1$ is attributed to the use of a Bernstein-type exploration bonus in the proposed KBVI-BUCB algorithm, while the remaining $\frac{d_{SA}+1}{d_{SA}+2}$ comes from a more refined choice of the kernel bandwidth, which is allowed to depend on both $H$ and $K$, rather than only on $K$ as in \citet{Domingues2020KernelBasedRL}.

Secondly, for model based RL with continuous state-action spaces the state-of-the-art upper bound of regret in terms of $H$ was derived in \citet{Sinclair2020AdaptiveDF} via adaptive discretization, as $$\mathcal{R}(K)\leq \begin{cases} \widetilde{O}\big(H^\frac{d_{_{SA}} + 2}{d_{_{SA}} + 1}K^{\frac{d_{_{SA}} + d_{\mathcal{S}} - 1}{d_{_{SA}} + d_{\mathcal{S}}}}\big), & d_{\mathcal{S}} > 2, \\
       \widetilde{O}\big(H^{\frac{d_{_{SA}} + 2}{d_{_{SA}} + 1}}K^{\frac{d_{_{SA}} + d_{\mathcal{S}} + 1}{d_{_{SA}} + d_{\mathcal{S}} + 2}}\big), &d_{\mathcal{S}} \leq 2.\end{cases}$$
Compared with this, the one in (\ref{coro.1-1}) improves by a reduction $\frac{d_{_{SA}} + 2}{d_{_{SA}} + 1}-\frac{d_{_{SA}} + 3}{d_{_{SA}} + 2}$ of the power of $H$ and, for the case of $d_{_S}\leq 2$, $\frac{d_{_{SA}} +d_{_S}+1}{d_{_{SA}} +d_{_S}+2}-\frac{d_{_{SA}} +d_{_S}}{d_{_{SA}} +d_{_S}+1}$ of the power of $K$.

In addition, in terms of $K$, we observe that when $d_{_S}=1$,  the  bound in (\ref{coro.1-1})  achieves the lower bound in continuous MDPs under Assumption \ref{assu.2}, i.e. $\Omega\left(K^{\frac{d_{_{SA}} + 1}{d_{_{SA}} + 2}}\right)$ proposed by \cite{JMLR:v15:slivkins14a}. However,  fundamental gaps persist for more general cases.

Finally, if $H^{\frac{d_{_{SA}}+d_{_S}+1 }{d_{_{SA}}+2}}>K$, then $\alpha\in [0, \frac{1}{d_{_{SA}}+2})$  so that  the corresponding $\tilde{\alpha}\leq \frac{1+\alpha d_{_{SA}}}{2}$. Consequently,
\beqnn
&&H^{2-\tilde{\alpha}}K^{\frac{d_{_{SA}} + d_{_{S}}}{d_{_{SA}} + d_{_S} + 1}}\geq HK\Big(H^{\frac{1-\alpha d_{_{SA}}}{2}}K^{-\frac{1}{d_{_{SA}}+d_{_S}+1}}\Big)
\\&&\quad \geq HK\times H^{\frac{1}{2}-\frac{d_{_{SA}}}{2(d_{_{SA}}+2)}}K^{-\frac{1}{d_{_{SA}}+d_{_S}+1}}\geq HK,
\eeqnn
which implies that the upper bound in Corollary \ref{coro.1} is larger than the natural bound $HK$. It thus raises an open  problem whether model-based methods with smoothing kernels can achieve a sublinear regret upper bound for finite-horizon online RL in this setting, even when the number of episodes $K$ is sufficiently large.

\section{Technique overview}\label{tech_over}
This section provides a proof sketch of Theorem \ref{thm.1}. All details are deferred to the Appendix. The analysis is divided into four parts:
\begin{itemize}
	\item[(i)]  Establish a new Bernstein concentration inequality for martingales that is critical to probability bounds.
    \item[(ii)]  Define a good event of high probability.
    \item[(iii)]  Show that $V_h^k(s)$ is an upper bound of $V_h^*(s)$ for all $(h, k, s) \in [H] \times [K] \times \mathcal{S}$ on the good event.
    \item[(iv)] Prove Theorem \ref{thm.1} by a decomposition in Eq \eqref{regret}.
\end{itemize}
\subsection{A New Bernstein concentration inequality}
Bernstein concentration inequality for martingales is widely applied to bound sums of random variables with a given probability and is widely used in RL, see
for example Lemma 11 in \cite{Zhang2021modelfree} and Lemma 3 in \cite{Domingues2020KernelBasedRL}. The following one is quite important in our theoretical analysis.
\begin{theorem}[Bernstein concentration inequality]
    \label{aux.4}
    Suppose $X_1, X_2, \cdots$ is a martingale difference sequence with $|X_i| \leq c \in \mathbb{R}_+$ almost surely. Then for any $\delta\in(0, e^{-1})$ and $n \in \mathbb{N}_+$,  with probability at least $1 - 2(\lfloor \log_2n\rfloor + 3)\delta$,
    \[
       \bigg|\sum_{i = 1}^nX_i\bigg| \leq 2\sqrt{\sum_{i = 1}^nX_i^2\log(2/\delta)}+7\max\{c, 1\}\log(2/\delta).
    \]
\end{theorem}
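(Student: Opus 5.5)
Let $S_n=\sum_{i=1}^n X_i$, $V_n=\sum_{i=1}^n X_i^2$ and $L=\log(2/\delta)$. The plan is to combine a self-normalized exponential supermartingale with a peeling argument over dyadic shells of $V_n$. The count $\lfloor\log_2 n\rfloor+3$ in the probability bound suggests exactly this: one grid point per dyadic scale of $V_n$ in $[c^2,nc^2]$, plus a couple of extra events for the endpoints and for the two signs.

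The first step is the supermartingale. Use the elementary inequality $e^{x-x^2}\le 1+x$ for $x\ge -1/2$ (indeed for $x\ge -0.68$). For any fixed $\lambda$ with $0<\lambda\le 1/(2c)$ we have $|\lambda X_i|\le 1/2$, so
\[
\mathbb{E}\bigl[e^{\lambda X_i-\lambda^2X_i^2}\mid\mathcal{F}_{i-1}\bigr]\le 1+\lambda\,\mathbb{E}[X_i\mid\mathcal{F}_{i-1}]=1 .
\]
Hence $M_n(\lambda)=\exp(\lambda S_n-\lambda^2V_n)$ is a nonnegative supermartingale with $\mathbb{E}M_n\le 1$. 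Markov's inequality then gives, for each fixed $\lambda$,
\[
\Pr\bigl(S_n\ge \lambda V_n+L/\lambda\bigr)\le \delta/2 .
\]
Applying the same argument to $-X_i$ gives the lower tail.

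The second step optimizes over $\lambda$ by peeling. The ideal choice is $\lambda^*=\sqrt{L/V_n}$, which gives $S_n\le 2\sqrt{V_nL}$. Since $\lambda^*$ is random, we restrict to the grid $\lambda_j=\sqrt{L/(2^jc'^2)}$ with $c'=\max\{c,1\}$, for $j=0,1,\dots,\lfloor\log_2 n\rfloor+1$, and also require $\lambda_j\le 1/(2c)$. There are three cases.

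\textbf{Middle shells.} If $2^{j}c'^2\le V_n<2^{j+1}c'^2$ and we use $\lambda_{j+1}$ (so that $\lambda_{j+1}\le\lambda^*$ up to a factor $\sqrt2$), then $\lambda V_n+L/\lambda$ is within a constant factor of $2\sqrt{V_nL}$. To land on the exact constant $2$, the plan is to shift the grid and absorb the slack into the linear term $7c'L$: write $\lambda V_n\le \sqrt{2}\sqrt{V_nL}$ and $L/\lambda\le\sqrt{2}\sqrt{V_nL}$. If the constant $2$ is not reachable this way, refine the grid ratio, accepting a few more events. Matching the stated constants within the stated event count is where I expect most of the fiddling.

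\textbf{Small $V_n$.} If $V_n\le L c'^2$, or more generally $\lambda^*>1/(2c)$, use $\lambda=1/(2c')$. This gives $S_n\le V_n/(2c')+2c'L$, and here $V_n/(2c')\le c'L$ or $\le\sqrt{V_nL}$ under the case condition. The result is absorbed into $2\sqrt{V_nL}+7c'L$.

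\textbf{Large $V_n$.} This case cannot happen, since $V_n\le nc^2$.

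The third step is a union bound over the $\lfloor\log_2 n\rfloor+3$ grid values of $\lambda$ and the two signs. This yields total failure probability $2(\lfloor\log_2 n\rfloor+3)\delta$ (each event has probability $\delta/2\le\delta$). The assumption $\delta<e^{-1}$ ensures $L\ge1$, which is used to compare $\sqrt{L}$ with $L$ when absorbing constants into $7\max\{c,1\}L$.

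The main obstacle is the lower-order bookkeeping. We must check that every case, in particular the boundary shells where $\lambda_j$ exceeds the cap $1/(2c)$ and the case $c<1$ that forces the $\max\{c,1\}$, produces a remainder of at most $7c'L$ while the leading coefficient stays at $2$ rather than $2\sqrt2$. My first attempt will use the inequality $\lambda V_n+L/\lambda\le 2\sqrt{V_nL}+\lambda(V_n-V^{(j)})$ for $V_n\in[V^{(j)},2V^{(j)}]$ with $\lambda$ tuned to $V^{(j)}$. If the excess cannot be controlled linearly in $L$, I will fall back to a finer geometric grid. That changes only the count of events, which I would then re-match to $\lfloor\log_2 n\rfloor+3$ by starting the grid at $V=c'^2L$.
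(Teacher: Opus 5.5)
\textbf{There is a genuine gap}, at exactly the step you flag as ``fiddling.'' Your supermartingale $\exp(\lambda S_n-\lambda^2V_n)$ cannot deliver the leading constant $2$ with $O(\log n)$ events. Even the unconstrained optimum $\lambda^*=\sqrt{L/V_n}$ gives exactly $2\sqrt{V_nL}$, so every discretization loses, and the loss is multiplicative. Writing $\rho=\lambda/\lambda^*$,
\[
\lambda V_n+\frac{L}{\lambda}-2\sqrt{V_nL}=\Big(\rho+\frac{1}{\rho}-2\Big)\sqrt{V_nL}.
\]
On a dyadic grid in $V$ (ratio $\sqrt2$ in $\lambda$), the best placement still leaves $\rho=2^{\pm1/4}$ in the worst case, i.e.\ an excess of about $0.03\sqrt{V_nL}$. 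This excess grows like $\sqrt{V_n}$. It therefore cannot be absorbed into $7c'L$ once $V_n$ exceeds roughly $5\cdot10^4c'^2L$, which is allowed since $V_n$ can be as large as $nc^2$. Your first attempt, $\lambda V_n+L/\lambda\le2\sqrt{V_nL}+\lambda(V_n-V^{(j)})$, fails for the same reason: it leaves a remainder of order $\sqrt{LV^{(j)}}$. The fallback fails too. A fixed ratio $r>1$ still gives leading coefficient $r^{1/4}+r^{-1/4}>2$. Forcing the excess down to $O(L)$ requires relative spacing $|\rho-1|\lesssim (L/V)^{1/4}$ at scale $V$, hence on the order of $n^{1/4}$ grid points rather than $\lfloor\log_2n\rfloor+3$. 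Moving the start of the grid to $c'^2L$ does not change this.

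\textbf{The missing idea} is to use the slack in your elementary inequality. On $|x|\le 1/2$ one has $\log(1+x)\ge x-\kappa x^2$ with $\kappa=4\log 2-2<0.78$. For $x=-u<0$ this holds because $(-\log(1-u)-u)/u^2$ is increasing and equals $\kappa$ at $u=1/2$; for $x\ge0$ it holds because $\kappa\ge1/2$. So $\exp(\lambda S_n-\kappa\lambda^2V_n)$ is a supermartingale for $\lambda\le 1/(2c')$. On $\{v<V_n\le 2v\}$, with $t=V_n/v\in(1,2]$, the choice $\lambda=\sqrt{L/(2\kappa v)}$ gives $S_n\le\sqrt{\kappa}\big(\sqrt{t/2}+\sqrt{2/t}\big)\sqrt{V_nL}\le 3\sqrt{\kappa/2}\,\sqrt{V_nL}<1.87\sqrt{V_nL}$. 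Start the shells at $T_0=2c'^2L/\kappa$, where this $\lambda$ hits the cap; since $T_0>c^2$, at most $\lfloor\log_2 n\rfloor+1$ shells are needed. On $\{V_n\le T_0\}$, the choice $\lambda=1/(2c')$ gives $S_n\le 3c'L$. With both signs, this proves the statement with room to spare.

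\textbf{Comparison with the paper.} The paper reaches the constant $2$ differently. It peels Freedman's inequality over dyadic shells of the predictable variance $\sum_i\mathbb{E}[X_i^2\mid X_{<i}]$ (Lemma~\ref{aux.2}); there the Freedman constant $\sqrt2$ times the dyadic loss $\sqrt2$ is exactly $2$. It then converts predictable into empirical variance with a separate martingale Maurer--Pontil-type bound applied to $Y_i=X_i^2$ (Lemma~\ref{aux.3}). Your repaired route is a single self-normalized step that avoids that second concentration argument.
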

The proof is in Appendix \ref{proof_lem_2}.

\begin{remark} The right hand side of the inequality is a function of $X_i$ but not some deterministic quantities. We thus call this inequality an empirical Bernstein inequality. \cite{peel2013empirical} established another empirical Bernstein bound
 in an online-learning setting under the condition, for any $n\geq 1$,  $(X_{n+1},X_{n+2})$
 are conditionally independent given $(X_1,\cdots,X_n)$. Clearly, this assumption does not hold for general bounded martingale difference sequences.
 Our result closes this gap by providing an empirical Bernstein inequality that holds under the general martingale difference condition, without requiring any additional independence assumptions.
\end{remark}

\subsection{A Good Event}

We define the following event:
\beqnn
\Bigg\{\big|\big(\widehat{P}^k_h - P_h\big)V_{h+1}^*(s, a)\big|
&&\lesssim \sqrt{\frac{\widehat{\V}^k(V_{h+1}^*, s, a)}{C^k_h(s, a)}}+ \frac{H}{C^k_h(s, a)}+ \sigma,
\\ &&\qquad\qquad\qquad\forall\ (s, a, h, k) \in \mathcal{S} \times \mathcal{A} \times [H] \times [K]\Bigg\}.
\eeqnn
This event ensures that the estimation error $|(\widehat{P}_h^k - P_h)V_{h+1}^*(s,a)|$ is uniformly controlled by the exploration bonus $b_h^k(s,a)$ across all $(s,a,h,k)$. We show that this event holds with high probability; see Appendix~\ref{app.goe} for a detailed proof.

\subsection{Optimism}

The second step is to show that $V^k_h$ serves as an upper bound on $V^*_h$ under the good event.

We prove this by backward induction on $h$. At the base case, $V^*_{H+1}(s) = V^k_{H+1}(s) = 0$ by definition. Suppose that $V^k_{h+1}(s) \geq V^*_{h+1}(s)$ holds for all $s \in \mathcal{S}$. Then, on the good event, the exploration bonus ensures that
\begin{align*}
    \widetilde{Q}_h^k(s,a) &= r_h(s,a) + \widehat{P}_h^k V_{h+1}^k(s,a) + b_h^k(s,a) \\
    &\geq r_h(s,a) + P_h V_{h+1}^*(s,a) \\
    &= Q_h^*(s,a),
\end{align*}
for all $(s, a) \in \mathcal{S} \times \mathcal{A}$. Since $Q_h^*$ is $L_h$-Lipschitz, it follows that for any $(s', a') \in \mathcal{S} \times \mathcal{A}$,
\beqnn
    Q_{h}^*(s, a) &\leq& Q_{h}^*(s', a') + L_h\rho\big[(s, a), (s', a')\big]
    \\&\leq& \widetilde{Q}^k_{h}(s', a') + L_h\rho\big[(s, a), (s', a')\big].
\eeqnn
By the definition of $Q_{h}^k$, taking the infimum over $(s', a')$ yields $Q_h^*(s, a) \leq Q^k_{h}(s, a)$ for all $(s, a) \in \mathcal{S} \times \mathcal{A}$, which in turn implies $V^k_{h}(s) \geq V^*_{h}(s)$ for all $s \in \mathcal{S}$. See Appendix~\ref{app.opt} for the complete proof.

\subsection{Regret Analysis}

Optimism yields the following regret decomposition. Since $V_1^*(s_1^k) \leq V_1^k(s_1^k)$ on the good event,
\begin{align*}
    \mathcal{R}(K) &= \sum_{k=1}^K \bigl(V_1^*(s_1^k) - V_1^{\pi^k}(s_1^k)\bigr) \\
    &\leq \sum_{k=1}^K \bigl(V_1^k(s_1^k) - V_1^{\pi^k}(s_1^k)\bigr).
\end{align*}
Noting that $V^k_{H+1}(s_{H+1}^k) \equiv 0$ for all $k$, we may telescope:
$$
    V_1^k(s_1^k) = \sum_{h=1}^H \bigl(V_h^k(s_h^k) - V_{h+1}^k(s_{h+1}^k)\bigr).
$$
We decompose each summand as
\begin{align*}
    &V_h^k(s_h^k) - V_{h+1}^k(s_{h+1}^k)
    = \bigl(P_h V_{h+1}^k(s_h^k,a_h^k) - V_{h+1}^k(s_{h+1}^k)\bigr) \\
    &\qquad\qquad\qquad\qquad\qquad\qquad + \bigl(V_h^k(s_h^k) - r_h(s_h^k,a_h^k) - P_h V_{h+1}^k(s_h^k,a_h^k)\bigr) + r_h(s_h^k,a_h^k),
\end{align*}
and write
$$
    V_1^{\pi^k}(s_1^k) = \sum_{h=1}^H r_h(s_h^k,a_h^k) - \biggl(\sum_{h=1}^H r_h(s_h^k,a_h^k) - V_1^{\pi^k}(s_1^k)\biggr).
$$
Defining
\beqnn
    I_1 &=& \sum_{k=1}^K \biggl(\sum_{h=1}^H r_h(s_h^k, a_h^k) - V^{\pi^k}_1(s_1^k)\biggr), \\
    I_2 &=& \sum_{k=1}^K \sum_{h=1}^H \bigl(P_h V^k_{h+1}(s_h^k, a_h^k) - V_{h+1}^k(s^k_{h+1})\bigr), \\
    I_3 &=& \sum_{k=1}^K \sum_{h=1}^H \bigl(V_h^k(s_h^k) - r_h(s_h^k, a_h^k) - P_h V^k_{h+1}(s_h^k, a_h^k)\bigr),
\eeqnn
we obtain
\begin{align}
    \label{regret}
    \mathcal{R}(K) \leq I_1 + I_2 + I_3.
\end{align}

\noindent We proceed to bound each term separately. Lemma~\ref{reb.1} and Lemma~\ref{reb.2} establish the following bounds:
\begin{align}
    I_1 &\leq \sqrt{2H^2 K\eta}, \label{eq:I_1} \\
    I_2 &\leq \widetilde{O}\!\left(\sqrt{H^3 K|\mathcal{C}_{\sigma}|} + \sqrt{H^2 |\mathcal{C}_{\sigma}||\widetilde{\mathcal{C}}_{\sigma}|} + \sqrt{HK\sigma}\right), \label{eq:I_2} \\
    I_3 &\leq \widetilde{O}\!\left(\sqrt{H^3 K|\mathcal{C}_{\sigma}|} + H^2 |\mathcal{C}_{\sigma}||\widetilde{\mathcal{C}}_{\sigma}| + HK\sigma\right). \label{eq:I_3}
\end{align}
Combining \eqref{regret}--\eqref{eq:I_3}, we conclude that, with high probability,
\[
    \mathcal{R}(K) \leq \widetilde{O}\!\left(\sqrt{H^3 K|\mathcal{C}_{\sigma}|} + H^2 |\mathcal{C}_{\sigma}||\widetilde{\mathcal{C}}_{\sigma}| + HK\sigma\right).
\]
See Appendix~\ref{pot} for the complete proof.

\section{Experiments}
\label{expe}

This section presents an experiment comparing the performance of KBVI-BUCB against the following baselines. Kernel-UCBVI \citep{Domingues2020KernelBasedRL} is the most closely related method, which also combines kernel smoothing with value iteration but relies on Hoeffding's inequality to construct its exploration bonus. Kernel-VI is an ablation that removes the exploration bonus entirely, serving as a reference point for assessing the value of principled exploration. AdaMB \citep{Sinclair2020AdaptiveDF} is a model-based method that adaptively discretizes the state-action space, while Adaptive Q-learning \citep{Sinclair2019AdaptiveDF} and Net-Based Q-learning \citep{Song2019EfficientMR} are representative model-free approaches based on adaptive partitioning and
$\varepsilon$-nets. In this paper, we consider the Puddle World environment.

The algorithms are evaluated in a continuous-state Puddle World environment,  with a state space $[-1, 1]^2$, as illustrated in Figure \ref{fig2}.
\begin{figure}[H]
\centering
\includegraphics[width=0.55\textwidth]{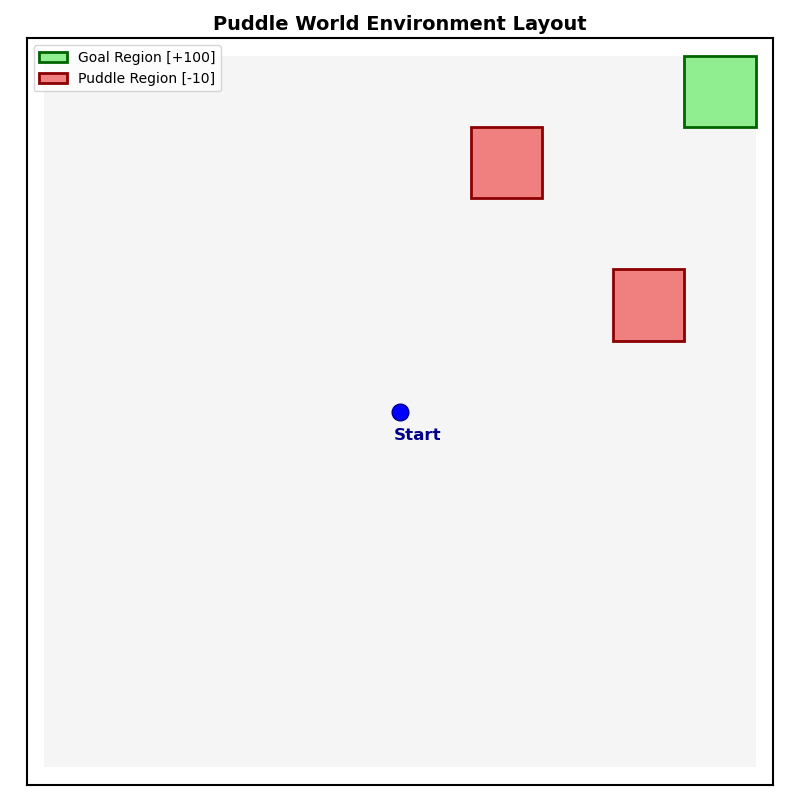}
\caption{Schematic of the Puddle World environment. The agent starts at the Start position and must navigate to the green goal region while avoiding the two red puddle areas.}
\label{fig2}
\end{figure}

The agent has four discrete actions (left, right, up, down), each inducing a deterministic displacement of 0.1, perturbed by Gaussian noise for stochasticity. The target region $[0.8, 1] \times [0.8, 1]$ yields a reward of 100, while entering either of the two puddle regions $[0.2, 0.4] \times [0.6, 0.8]$ and $[0.6, 0.8] \times [0.2, 0.4]$ results in a penalty of $-10$. All other states provide zero reward. The agent starts at $(0, 0)$. The kernel estimation is carried with the Euclidean distance and a Gaussian kernel with bandwidth $\sigma = 0.025$. Additionally, to reduce computational costs, we employ representative
states as in \cite{kveton2012kernel} in experiments, merging nearby states within a 0.02 distance
threshold to significantly accelerate the algorithm's runtime while preserving spatial resolution. The performance is the total reward obtained. Numerical results are shown in Figure \ref{fig1}.

\begin{figure}[h]
\centering
\includegraphics[width=0.8\textwidth]{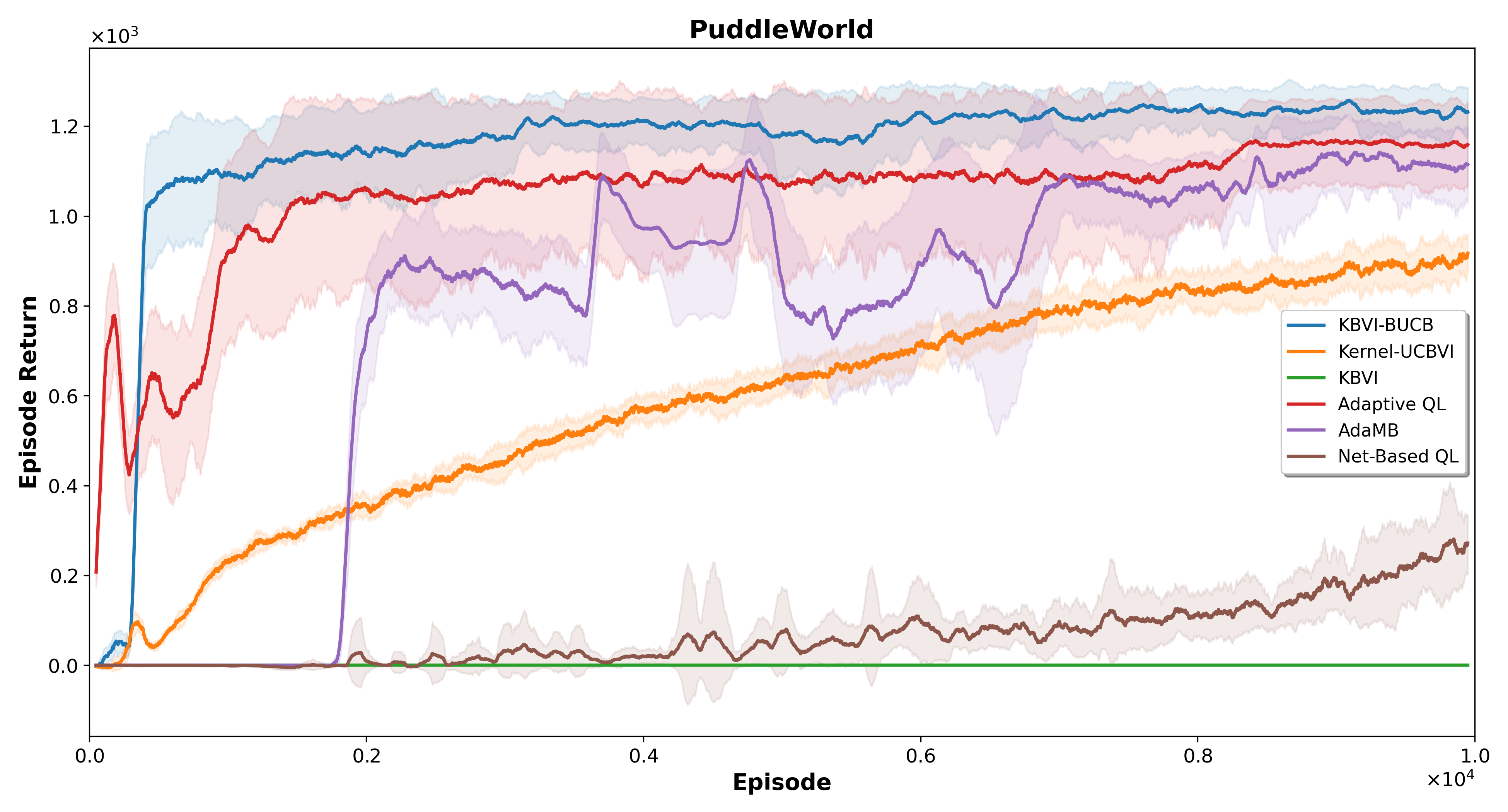}
\caption{Returns obtained by KBVI-BUCB and 5 baselines. We run experiments with 8 random seeds. Solid lines show the mean and shaded regions show one standard deviation across seeds.}
\label{fig1}
\end{figure}

As shown in Figure \ref{fig1}, KBVI-BUCB converges the fastest and achieves the highest return among all methods. Notably, all other methods that successfully learn, including Kernel-UCBVI, Adaptive QL, AdaMB, and Net-Based QL, rely on Hoeffding-type exploration bonuses, and KBVI-BUCB outperforms all of them, demonstrating the advantage of Bernstein-based bounds across different algorithmic frameworks. Among the baselines, Adaptive QL and AdaMB benefit from adaptive partitioning and perform competitively, while Net-Based QL, which uses a fixed discretization, learns more slowly. As noted in \cite{Domingues2020KernelBasedRL}, kernel-based methods may also benefit from adaptive bandwidth selection, which we leave as a promising direction for future work. KBVI stays near zero throughout training because the two puddle regions with negative rewards lie between the start point and the target. Without exploration bonuses, the values estimated by KBVI at the puddle regions are likely to be negative, which, together with the kernel method, prevents the agent from approximating the target area. Figure \ref{fig3} zooms into the first 100 episodes and shows that KBVI is not completely static. KBVI stays near zero throughout training because the two puddle regions with negative rewards lie between the start point and the target. Without exploration bonuses, the values estimated by KBVI at the puddle regions are likely to be negative, which, together with the kernel method, prevents the agent from approximating the target area.

\begin{figure}[H]
	\centering
	\includegraphics[width=0.8\textwidth]{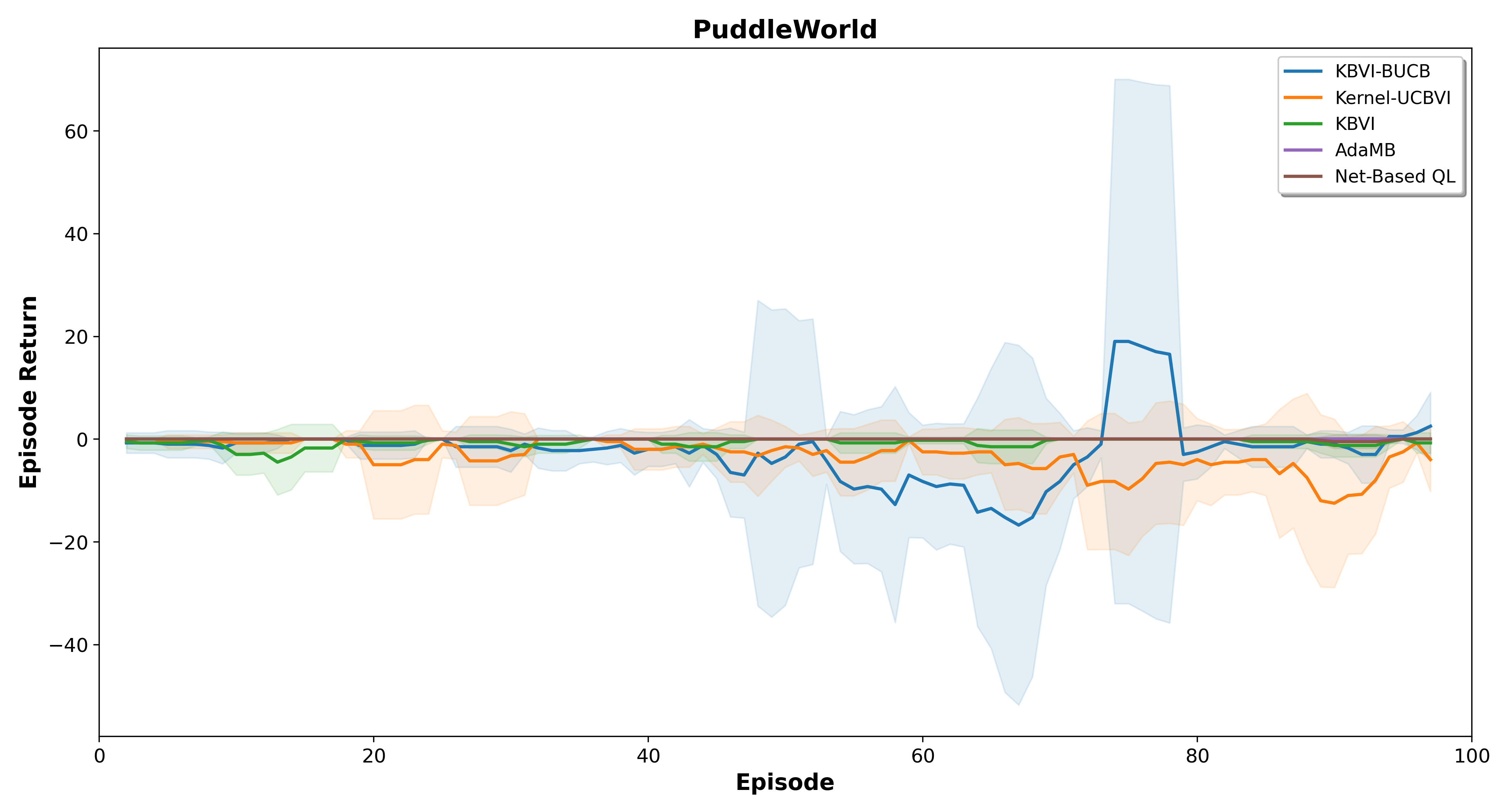}
	\caption{Returns on PuddleWorld for the first 100 episodes. Adaptive QL is omitted as its early returns are on a much larger scale.}
	\label{fig3}
\end{figure}

We also tested under different reward settings. The results, reported in Appendix \ref{sec:experimental}, show that KBVI-BUCB consistently achieves the highest return across all settings.
\section{Discussion}
\label{discussion}
In this paper, we proposed the  KBVI-BUCB algorithm for finite-horizon RL in metric state-action spaces. Our analysis reveals that the KBVI-BUCB achieves a high-probability regret bound
$\widetilde{O}\big(H^{2-\tilde{\alpha}}K^{\frac{d_{_{SA}} + d_{_{S}}}{d_{_{SA}} + d_{_{S}} + 1}}\big)$
in inhomogeneous MDPs. This represents an improvement in terms of $H$ over the $\widetilde{O}\big(H^3K^{\frac{2d_{_{SA}}}{2d_{_{SA}} + 1}}\big)$ bound of \cite{Domingues2020KernelBasedRL}. Furthermore, when $H^{\frac{d_{_{SA}}+d_{_S}+1}{d_{_{SA}}+2}}\leq K$ which is implied by $H^2<K$, the upper bound can be written explicitly as
 $\widetilde{O}\big(H^{\frac{d_{_{SA}}+3}{d_{_{SA}}+2}}K^{\frac{d_{_{SA}} + d_{_{S}}}{d_{_{SA}} + d_{_{S}} + 1}}\big)$
which improves the state-of-the-art upper bound of regret in terms of $H$ \cite{Sinclair2020AdaptiveDF}.

While our work advances the theoretical frontier of kernel-based RL, a fundamental gap persists between the current upper bound and the lower bound $\Omega\left(K^{\frac{d_{_{SA}} + 1}{d_{_{SA}} + 2}}\right)$ in terms of $K$. As discussed in \cite{zhang2024settling}, bridging this gap possibly requires further efforts to decouple the statistical dependence between estimated transition models $\widehat{P}_h^k$ and subsequent samples.

In addition, when $H^{\frac{d_{_{SA}}+d_{_S}+1}{d_{_{SA}}+2}}>K$, the upper bound of the regret is essentially linear in $HK$. It deserves further study whether one can obtain a sublinear upper bound or not using the
model-based method with smoothing kernels for the finite-horizon online RL in this situation.

\backmatter

%
%
%
%
%
%
%


\begin{appendices}
\renewcommand{\thetable}{\arabic{table}}
\setcounter{table}{1}
\section{Notations and auxiliary results}\label{appendix1}
We begin by introducing some notations used in Appendix.
\subsection{Notations}

\begin{table}[h]
\caption{Table of notations}\label{notations}
    \begin{tabular}{@{}ll@{}}
    \toprule[2pt]
    Notation       & Explanation   \\
    \midrule[2pt]
    Lip($f$) & Lipschitz constant of function $f$\\
    $\rho$  & Metric on state-action space \\
    $\lambda_p, \lambda_r$ & Lipschitz constants of rewards and transition distributions (Assumption \ref{assu.2})\\
    $g(\cdot)$      & A differentiable kernel function\\
    $|\mathcal{C}_{\sigma}|$ &  $\sigma$-covering number of metric space $(\mathcal{S} \times \mathcal{A}, \rho)$\\
    $|\widetilde{\mathcal{C}}_{\sigma}|$ &  $\sigma$-covering number of metric space $(\mathcal{S}, \rho_{\mathcal{S}})$\\
    $C_1, C_2$  &   Two constants postulated in Assumption \ref{assu.3}\\
    $C_3$ & $\frac{1}{\beta}+\frac{1}{g(4)}\log(1+\frac{g(4)K}{\beta})$\\
    $L_1$ & $\sum_{h= 1}^{H}\lambda_{r}\lambda_{p}^{H - h}$\\
    $\gamma$  & $4\sqrt{\log(HK/\beta + e)}$.\\
    $\eta$ & $\log(2/\delta)$\\
    $\eta_1$ & $C_1\eta$ \\
    $\eta_2$ & $7\max\{C_1, 1\}\eta + \beta$ \\
    $\eta_3$ & $8\lambda_pL_1\eta\gamma$ \\
    $\eta_4$ & $16\eta_1 + 2\eta_2$ \\
    $\eta_5 $ & $ (2C_1 + 4)\eta + \beta$\\
    $D_{\delta}$ & $16\sqrt{\frac{C_2\eta_1\eta}{\beta^3}} + \big(C_2\eta_4 + 2C_2\big)\frac{\sigma}{H\beta^2} + \lambda_pL_1\frac{\sigma^2}{KH^2} + \eta_3$\\
    $D_{L,F,\sigma}$ &  $\eta_5 + |\widetilde{\mathcal{C}}_{\sigma}|\log\frac{2F}{L\sigma}$\\
    $D_{L, F}$ & $ 2\sqrt{\frac{4C_1\lambda_pL\sigma}{\beta KF^2} + \frac{C_1C_2}{\beta^2}} + (\eta_5F + 3)\frac{C_2\sigma}{F^2\beta^2} + \lambda_pL\frac{\sigma^2}{KF^2} + 8\lambda_pL\gamma$\\
    $\widetilde{D}_{L, F}$ & $ D_{L, F} + 12L$\\
    $F_1(\eta, H, \sigma)$ & $12D_{L_1,H,\sigma} + 165\eta_4$ \\
\bottomrule[2pt]
\end{tabular}
\end{table}
Below, we define some important concepts. For $(k, h)\in[K]\times[H]$,
    define $$\mathcal{F}_{h}^k = \sigma\Big(\{s_{h}^{l}, a_{h}^{l}, s_{h + 1}^{l}\}_{l < k} \cup \{s^{k}_{h}, a^{k}_{h}\}\Big),$$ and let $\{\mathcal{F}_{h}^k\}_{k}$ be the corresponding filtration.  The weights are defined by :
\[
    \begin{aligned}
        \mathnormal{w}_h^l(s, a) &= g({\rho((s, a), (s_h^l, a_h^l))\over\sigma});\\
	\widetilde{\mathnormal{w}}_h^{k, l}(s, a) &=        \frac{w_h^l(s, a)}{\beta + \sum_{n = 1}^{k-1}w_h^n(s, a)},
    \end{aligned}
\]
where $0 < \beta \leq 1$ is a regulation. In addition, the generalized counts of $(s, a)$ at the beginning of $k$-th episode are given by $$C^k_h(s, a) = \beta + \sum_{l = 1}^{k-1}w^{l}_h(s, a).$$ We define the estimators of transition distribution at the beginning of the $k$-th episode as
\[
    \widehat{P}^k_h(\mathcal{O}|s, a) = \sum_{l = 1}^{k-1}\widetilde{\mathnormal{w}}^{k, l}_h(s, a)\mathbf{1}_{\mathcal{O}}(s_{h+1}^l),
\]
for any $\mathcal{O} \subseteq \mathcal{S}$. So, the expectations of function $V : \mathcal{S} \to \mathbb{R}$ under $P_h(\cdot|s, a)$ and $\widehat{P}_h^k(\cdot|s, a)$ are
\[
    \begin{aligned}
        &P_hV(s, a) := \E_{s' \sim P_h(\cdot|s, a)}[V(s')];\\
        &\widehat{P}^k_hV(s, a) := \E_{s' \sim \widehat{P}_h(\cdot|s, a)}[V(s')] = \sum_{l = 1}^{k-1}\widetilde{\mathnormal{w}}_h^{k, l}(s, a)V(s_{h+1}^l).
    \end{aligned}
\]
In addition, the variances of $V$ under $P_h(\cdot|s, a)$ and $\widehat{P}^k_h(\cdot|s, a)$ are given by
\[
    \begin{aligned}
        &\V_h(V , s, a) := {P}_h\big[V - {P}_hV(s, a)\big]^2(s, a);\\
        &\widehat{\V}^k_h(V, s, a) := \widehat{P}^k_h\big[V - \widehat{P}^k_hV(s, a)\big]^2(s, a) = \sum_{l = 1}^{k-1}\widetilde{\mathnormal{w}}_h^{k, l}(s, a)\big[V(s_{h+1}^l) - \widehat{P}^k_hV(s, a)\big]^2.
    \end{aligned}
\]
We also define bonus $b_h^k(s, a)$ as follows:
\[
    b_{h}^k(s, a) = 9\sqrt{\frac{\widehat{\V}^k_h(V_{h + 1}^k, s, a)\eta_4}{C^k_h(s, a)}} + \frac{162H\eta_4}{C^k_h(s, a)} + \sigma D_{\delta},
\]
where
\begin{align*}
    D_{\delta} = 16\sqrt{\frac{C_2\eta_1\eta}{\beta^3}} + \big(C_2\eta_4 + 2C_2\big)\frac{\sigma}{H\beta^2} + \lambda_pL_1\frac{\sigma^2}{KH^2} + 32\lambda_pL_1\eta\sqrt{\log(HK/\beta + e)}.
\end{align*}
According to the algorithm, 
for each $(s, a, h, k) \in \mathcal{S} \times \mathcal{A} \times [H] \times [K]$
\[
    \widetilde{Q}_h^k(s, a) = r_h(s, a) + \widehat{P}^k_hV_{h+1}^k(s, a) + b_h^k(s, a)
\]
and
\[
     Q_h^k(s, a) = \min_{l \in [k-1]}\big\{\widetilde{Q}_h^k(s_h^l, a_h^l) + L_h\rho[(s, a), (s_h^l, a_h^l)]\big\}.
\]
The corresponding $V$ function is defined by
\[
    V_{h}^k(s) = \min\{\max_{a}Q_{h}^k(s, a), H\}.
\]
Our goal is to minimize the regret :
\[
    \mathcal{R}(K) = \sum_{k = 1}^K\Big(V^*_1(s_1^k) - V^{\pi^k}_1(s_1^k)\Big).
\]

\section{Proof of Theorem \ref{aux.4}}\label{proof_lem_2}
 We first establish a lemma below that takes an important role in proving Theorem \ref{aux.4}.  It is similar to Lemma 38 in \citet{boone2024achieving} that is attributed to \citet{Audibert2009} who analyzes, however, i.i.d. rather than  martingale difference sequences, so that  here appears the first  rigorous proof of this type of Berstein's inequality for martingales. 
\begin{lemma}[Bernstein's inequality for martingales]
    \label{aux.2}
    Suppose $X_1, X_2, \cdots$ is a martingale difference sequence where $|X_i| \leq c \in \mathbb{R}_+$ almost surely. Then for any $n \in \mathbb{N}_+$, we have with probability at least $1 - (\lfloor\log_2n\rfloor + 3)\delta$,
    \[
        \bigg|\sum_{i = 1}^nX_i\bigg| \leq 2\sqrt{\sum_{i = 1}^n\E[X_i^2|X_{<i}]\log(2/\delta)} +\frac{2c}{3} \log(2/\delta) + c\sqrt{\log(2/\delta)}.
    \]
\end{lemma}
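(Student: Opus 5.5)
The plan is a Freedman-type exponential supermartingale argument combined with a peeling (geometric grid) over the predictable quadratic variation $W_n:=\sum_{i=1}^n\E[X_i^2|X_{<i}]$, which satisfies $0\le W_n\le nc^2$.

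First, for fixed $\lambda\in(0,3/c)$, I will use the standard bound $e^{x}\le 1+x+\frac{x^2/2}{1-|x|/3}$ for $|x|<3$. Since $\E[X_i|X_{<i}]=0$, this gives
\[
\E[e^{\lambda X_i}|X_{<i}]\le \exp\Big(\tfrac{\lambda^2\E[X_i^2|X_{<i}]}{2(1-\lambda c/3)}\Big).
\]
Hence $M_n(\lambda)=\exp\big(\lambda\sum_{i\le n}X_i-\tfrac{\lambda^2 W_n}{2(1-\lambda c/3)}\big)$ is a nonnegative supermartingale with $\E M_n(\lambda)\le 1$. Markov's inequality then yields, with probability at least $1-\delta/2$,
\[
\sum X_i\le \frac{\lambda W_n}{2(1-\lambda c/3)}+\frac{\log(2/\delta)}{\lambda}.
\]
The same bound holds for $-X_i$. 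The difficulty is that the optimal $\lambda\approx\sqrt{2\log(2/\delta)/v}$ depends on the random $W_n$. Choosing it after seeing $W_n$ would invalidate the Markov step, and this is the main obstacle.

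To handle it, I will peel. Partition the range of $W_n$ into the slice $\{W_n\le c^2\}$ and the dyadic slices $\{2^{j-1}c^2<W_n\le 2^{j}c^2\}$ for $j=1,\dots,\lfloor\log_2 n\rfloor+1$. This gives at most $\lfloor\log_2 n\rfloor+2$ slices (one more if boundary effects require it). On slice $j$, I fix the deterministic choice $\lambda_j=\sqrt{\eta/v_j}\wedge\frac{3}{2c}$ with $v_j=2^jc^2$ and $\eta=\log(2/\delta)$, and apply the one-$\lambda$ bound with confidence $\delta$ (split as $\delta/2$ per side). A union bound over the slices accounts for the factor $\lfloor\log_2n\rfloor+3$ in the failure probability.

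On slice $j$ we have $v_j\le 2W_n$ (for $j\ge1$), and the denominator satisfies $1-\lambda_jc/3\ge 1/2$. So the first term is at most $\lambda_j v_j\le\sqrt{\eta v_j}\le\sqrt{2\eta W_n}$. When $\lambda_j=\sqrt{\eta/v_j}$, the second term equals $\sqrt{\eta v_j}$, and the total is at most $2\sqrt{\eta W_n}$ after tightening constants; I will rebalance the factor with $\lambda_j=\sqrt{2\eta/v_j}$ if needed. When the cap $\lambda_j=3/(2c)$ is active, the second term is $\frac{2c}{3}\eta$. On the lowest slice $W_n\le c^2$, I take $\lambda=\sqrt{\eta}/c$ capped. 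This gives the residual term $c\sqrt{\eta}$, which is exactly the additive $c\sqrt{\log(2/\delta)}$ in the statement.

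The remaining work is the constant bookkeeping, so that the coefficients come out as $2$, $2c/3$ and $c$; that is routine.
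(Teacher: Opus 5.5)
Your proposal is correct and is essentially the paper's proof. The paper normalizes $c=1$ and invokes Freedman's inequality $\pr\big(\sum_{i\le n}X_i\ge x,\ V_n\le y\big)\le e^{-x^2/(2y+2cx/3)}$, which is your fixed-$\lambda$ supermartingale bound optimized for a deterministic level $y$; it then peels over the slices $2^{j-1}<V_n\le 2^j$ plus a bottom slice and takes a union bound over $\lfloor\log_2 n\rfloor+3$ events per side, as you do.

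One bookkeeping point: the crude bound $1-\lambda_j c/3\ge\frac12$ on the whole quadratic term cannot deliver the leading constant $2$. It gives $2\sqrt2$ with your $\lambda_j$, and $3$ with $\sqrt{2\eta/v_j}$. Instead take $\lambda_j=\min\{\sqrt{2\eta/v_j},\frac{3}{2c}\}$ and split $\frac{1}{1-\lambda_jc/3}=1+\frac{\lambda_jc/3}{1-\lambda_jc/3}$. This gives $\sqrt{2\eta v_j}+\frac{2c}{3}\eta\le 2\sqrt{\eta W_n}+\frac{2c}{3}\eta$ on every dyadic slice, so the $\frac{2c}{3}\eta$ term is also needed in the uncapped case. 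Your bottom-slice choice $\lambda=\min\{\sqrt{\eta}/c,\frac{3}{2c}\}$ works as stated, because $W_n\le c^2$ gives $\lambda W_n\le\sqrt{\eta W_n}$.
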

\begin{proof} Without loss of generality, we suppose $c = 1$. For $n\geq 1$, let $V_n=\sum_{i = 1}^n\E[X_i^2|X_{<i}]$, Freedman's inequality \cite{freedman1975tail} implies that for any $x, y>0$,
\beqnn
\pr\Big(\sum_{i=1}^n X_i\geq x, V_n\leq y\Big)\leq e^{-\frac{x^2}{2y + 2cx/3}},
\eeqnn
 which implies
    \[
        \pr\Big(\sum_{i=1}^n X_i \geq \sqrt{2y\log(2/\delta)} + \frac{2\log(2/\delta)}{3}, V_n \leq y\Big) \leq \delta/2.
    \]
 Therefore, for any $j$,
    \[
        \pr\Big(\sum_{i=1}^n X_i \geq \sqrt{2^{j + 1}\log(2/\delta)} + \frac{2\log(2/\delta)}{3}, V_n \leq 2^j\Big) \leq \delta/2.
    \]
   Hence,
     \beqlb\label{bernstein-1}
            &&\pr\bigg(\sum_{i=1}^n X_i \geq 2\sqrt{V_n\log(2/\delta)} + \sqrt{2\log(2/\delta)} + \frac{2\log(2/\delta)}{3}\bigg)\nonumber
            \\ &&\leq\sum_{j = 0}^{\lfloor\log_2n\rfloor+1}\pr\bigg(\sum_{i=1}^n X_i  \geq 2\sqrt{V_n\log(2/\delta)} + \sqrt{2\log(2/\delta)} + \frac{2\log(2/\delta)}{3}, 2^{j - 1} < V_n \leq 2^j\bigg)\nonumber
            \\&&\quad+  \pr\bigg(\sum_{i=1}^n X_i \geq 2\sqrt{V_n\log(2/\delta)} + \sqrt{2\log(2/\delta)} + \frac{2\log(2/\delta)}{3}, 0 \leq V_n \leq 1/2\bigg)\nonumber
            \\&& \leq\sum_{j = 0}^{\lfloor\log_2n\rfloor+1}\pr\bigg(\sum_{i=1}^n X_i \geq \sqrt{2^{j + 1}\log(2/\delta)} + \sqrt{2\log(2/\delta)} + \frac{2\log(2/\delta)}{3}, V_n \leq 2^j\bigg)\nonumber
            \\&&\quad+  \pr\bigg(\sum_{i=1}^n X_i \geq \sqrt{2\log(2/\delta)} + \frac{2\log(2/\delta)}{3}, V_n \leq 1/2\bigg)\nonumber
            \\&&\leq (\lfloor\log_2n\rfloor + 3)\delta/2.
    \eeqlb
 The same arguments can be applied to $-X_i$, and the desired result follows immediately.
\end{proof}

The following Lemma generalizes Theorem 10 in
\citet{maurer2009empirical} for i.i.d. random variables, and characterizes confidence bounds of deviation of the sum of a sequence of
bounded random variables from its conditional expectation.
\begin{lemma}
    \label{aux.3}
    Let $Y_1, Y_2, \cdots$ be a sequence of random variables such that $0 \leq Y_{i} \leq c$.
    Then with probability at least $1 - 2(\lfloor\log_2n\rfloor + 3)\delta$ where $\delta\in(0,e^{-1})$,
    \[
            \sqrt{\sum_{i = 1}^{n}Y_{i}} \leq \sqrt{\sum_{i = 1}^{n}\E\Big[Y_i\Big|Y_{<i}\Big]} + 2\sqrt{c\log(1/\delta)} \;\;\hbox{and}\;\;
            \sqrt{\sum_{i = 1}^{n}\E\Big[Y_i\Big|Y_{<i}\Big]} \leq \sqrt{\sum_{i = 1}^{n}Y_{i}} + 4\sqrt{\frac{c\log(1/\delta)}{3}}.
    \]
\end{lemma}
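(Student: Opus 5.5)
We prove Lemma \ref{aux.3} by applying Lemma \ref{aux.2} to the centered sequence
\[
X_i=Y_i-\E[Y_i\mid Y_{<i}],
\]
which is a martingale difference sequence with respect to the natural filtration and satisfies $|X_i|\le c$. Write
\[
S=\sum_{i=1}^n Y_i,\qquad M=\sum_{i=1}^n \E[Y_i\mid Y_{<i}],\qquad \eta=\log(1/\delta).
\]

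\textbf{Step 1: variance proxy.} Since $0\le Y_i\le c$, the conditional variance obeys
\[
\E[X_i^2\mid Y_{<i}]\le \E[Y_i^2\mid Y_{<i}]\le c\,\E[Y_i\mid Y_{<i}].
\]
Hence $V_n\le cM$.

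\textbf{Step 2: two-sided deviation bound.} Lemma \ref{aux.2} then gives, with probability at least $1-(\lfloor\log_2 n\rfloor+3)\delta$,
\[
|S-M|\le 2\sqrt{cM\log(2/\delta)}+\tfrac{2c}{3}\log(2/\delta)+c\sqrt{\log(2/\delta)}.
\]
For the constants to come out as stated, we rescale $\delta$ and use $\delta<e^{-1}$, so that $\log(2/\delta)\le 2\log(1/\delta)$ and $\sqrt{\eta}\le\eta$. This rescaling costs at most the factor $2$ in the failure probability, which the statement already allows. If the constants do not close, a cleaner route is to rerun the peeling argument of Lemma \ref{aux.2} directly with Freedman's inequality at level $\log(1/\delta)$, each direction separately. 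That replaces the additive term with a pure $\tfrac{2c}{3}\eta$ or $c\eta$ term.

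\textbf{Step 3: upper bound on $\sqrt{S}$.} The upper tail gives
\[
S\le M+2\sqrt{cM\eta'}+a\,c\eta'
\]
for some constant $a$. We complete the square:
\[
S\le\bigl(\sqrt M+\sqrt{c\eta'}\bigr)^2+(a-1)c\eta'.
\]
Using $\sqrt{x+y}\le\sqrt x+\sqrt y$ and absorbing constants, we get
\[
\sqrt S\le\sqrt M+2\sqrt{c\eta}.
\]

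\textbf{Step 4: upper bound on $\sqrt{M}$.} The lower tail gives
\[
M\le S+2\sqrt{cM\eta'}+a\,c\eta'.
\]
This is a quadratic inequality in $x=\sqrt M$, namely $x^2-2\sqrt{c\eta'}\,x-(S+ac\eta')\le0$. Its positive root yields
\[
\sqrt M\le\sqrt{c\eta'}+\sqrt{c\eta'+S+ac\eta'}\le \sqrt S+\bigl(1+\sqrt{1+a}\bigr)\sqrt{c\eta'}.
\]
The stated constant $4/\sqrt3$ presumably comes from a Bennett-type version in which the $2c/3$ term dominates. We therefore aim for the form $\sqrt M\le\sqrt S+K\sqrt{c\eta}$ and tune $K$.

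\textbf{Step 5: union bound.} We take a union bound over the two directions, which gives total failure probability at most $2(\lfloor\log_2 n\rfloor+3)\delta$.

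The main obstacle is constant bookkeeping, namely matching exactly $2\sqrt{c\log(1/\delta)}$ and $4\sqrt{c\log(1/\delta)/3}$. The extra $c\sqrt{\log(2/\delta)}$ term in Lemma \ref{aux.2} and the switch between $\log(2/\delta)$ and $\log(1/\delta)$ both need care. Conceptually, the step that needs attention is Step 1's self-bounding variance inequality $V_n\le cM$, which turns the problem into a quadratic in $\sqrt M$.
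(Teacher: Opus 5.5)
\textbf{Where the proposal holds.} Your first inequality is fine. One two-sided application of Lemma~\ref{aux.2} at level $2\delta$ to $X_i=Y_i-\E[Y_i\mid Y_{<i}]$ already uses the full budget $2(\lfloor\log_2 n\rfloor+3)\delta$, so Step~5 should not union-bound again. It gives $|S-M|\le 2\sqrt{cM\eta}+\tfrac{2c}{3}\eta+c\sqrt{\eta}$ with $\eta=\log(1/\delta)\ge 1$. Hence $S\le(\sqrt M+\sqrt{c\eta})^2+\tfrac23 c\eta$, and so $\sqrt S\le\sqrt M+(1+\sqrt{2/3})\sqrt{c\eta}\le\sqrt M+2\sqrt{c\eta}$. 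One caveat: apply Freedman's inequality with respect to the filtration $\sigma(Y_{\le i})$. Your bound $V_n\le cM$ concerns $\E[X_i^2\mid Y_{<i}]$, and need not hold pathwise for $\E[X_i^2\mid X_{<i}]$ as written in Lemma~\ref{aux.2}.

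\textbf{The gap: the second inequality.} The same bound gives only $M\le S+2\sqrt{cM\eta}+\tfrac53 c\eta$ when $\eta$ is close to $1$. Your quadratic then yields $\sqrt M\le\sqrt S+(1+\sqrt{8/3})\sqrt{c\eta}\approx\sqrt S+2.63\sqrt{c\eta}$, while the lemma asserts the constant $4/\sqrt3\approx2.31$. ``Tuning $K$'' therefore does not prove the stated bound.

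The fallback does not close as described either. A pure $c\eta$ term gives $1+\sqrt2\approx2.41$, which is still too large. Only a pure $\tfrac{2c}{3}\eta$ term works, giving $1+\sqrt{5/3}\approx2.29$. Simply rerunning the peeling with Freedman's inequality reproduces the $c\sqrt\eta$ term, because that term comes from the bottom slice $\{V_n\le c^2/2\}$.

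\textbf{The missing idea} is to use $Y_i\ge0$ in the lower tail. We have $M-S\le M$, and $M\le2\sqrt{cM\eta}$ whenever $M\le4c\eta$. So peel on $M$ rather than on $V_n$, using $\{M\le 2^jc\}\subseteq\{V_n\le 2^jc^2\}$. Then the bottom slice $\{M\le c/2\}$ is empty, and you get $M\le S+2\sqrt{cM\eta}+\tfrac{2c}{3}\eta$, which just suffices.

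This one-sidedness is what the paper relies on. For $c=1$ it derives $\E[e^{\lambda(Y-\E Y)}]\le e^{\E Y(e^\lambda-\lambda-1)}$ from $e^{\lambda Y}\le 1+Y(e^\lambda-1)$ on $[0,1]$, valid for every real $\lambda$. Consequently $\exp\{\lambda(S_m-M_m)-M_m(e^\lambda-\lambda-1)\}$, with $S_m,M_m$ the partial sums, is a supermartingale, and the conditional means themselves serve as the variance proxy. The upper tail then follows from the peeling of Lemma~\ref{aux.2}. For the lower tail the paper takes $\lambda\le 0$ and uses $S-M\ge -M$ to truncate the deviation at the proxy level. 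Since $t+(1-t)\log(1-t)\ge t^2/2$, that lower tail is sub-Gaussian; done carefully it even yields $M\le S+2\sqrt{cM\eta}$ and constant $2$, comfortably within $4/\sqrt3$.
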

\begin{proof}
    Without loss of generality, we suppose $c = 1$. Let $Y$ be a random variable such that $0 \leq Y \leq 1$, then for any $\lambda$,
    \[
        \E\Big[e^{\lambda Y}\Big] \leq \exp\Big\{\E\big[e^{\lambda Y} - 1\big]\Big\}.
    \]
    Note that $\E\Big[e^{\lambda Y}\Big] \leq e^{\E[Y](e^\lambda - 1)}$ because
    \(
        e^{Y\lambda} - 1 \leq Ye^\lambda - Y
    \)
    for any $0 \leq Y \leq 1$.
    Thus,
    \[
        \E\Big[e^{\lambda(Y - \E[Y])}\Big] \leq e^{\E[Y](e^{\lambda} - \lambda - 1)}.
    \]
    Now let $S_n = \sum_{i = 1}^{n}\big(Y_i - \E\big[Y_i\big|Y_{<i}\big]\big), \mu_i = \E[Y_{i}|Y_{<i}], \bar{\mu}_n = \sum_{i = 1}^{n}\mu_{i}$. Then
        \begin{align}
            \E\Big[e^{\lambda S_n}\Big|Y_{<n}\Big]
            &= e^{\lambda S_{n - 1}}\cdot\E\Big[e^{\lambda(Y_{n} - \E[Y_{n}|Y_{<n}])}\Big|Y_{<n}\Big]\notag\\
            &\leq e^{\lambda S_{n - 1}}\cdot e^{\mu_{n}(e^{\lambda} - \lambda - 1)},
        \end{align}
    which implies that $\{e^{\lambda S_n-{\bar\mu}_n(e^{\lambda} - \lambda - 1)}\}_n$ is a super-martingale. Therefore, for any $\lambda$
      \begin{align}
            \E\Big[e^{\lambda S_n-{\bar\mu}_n(e^{\lambda} - \lambda - 1)}\Big]\leq 1,\label{lem20-1}
        \end{align}
 and hence,
    \begin{align}
        \pr\Big(S_n \geq \varepsilon, \bar{\mu}_n\leq v \Big) &\leq \inf_{\lambda \geq 0}e^{-\lambda\varepsilon+v(e^{\lambda} - \lambda - 1)}\E[e^{\lambda S_n-{\bar\mu}_n(e^{\lambda} - \lambda - 1)}]\notag\\
        &\leq \inf_{\lambda \geq 0}e^{\nu(e^\lambda - \lambda - t\lambda - 1)}\notag\\
        &\leq e^{\nu(t - (1 + t)\log(1 + t))}\label{con.3.1},
    \end{align}
    where $\nu$ is a positive constant and $t = \varepsilon/\nu$. Note that
    \(
        (1 + t)\log(1 + t) - t \geq \frac{t^2}{2 + 2t/3}
    \)
    for all $t \geq 0$, equation (\ref{con.3.1}) yields
    \[
        \pr(S_n \geq \varepsilon, {\bar \mu}_n\leq \nu) \leq e^{-\frac{\varepsilon^2}{2\nu + 2\varepsilon/3}},
    \]
  By Lemma \ref{aux.2}, we can readily get that with probability at least $1 - (\lfloor\log_2n\rfloor + 3)\delta$,
    \[
        \begin{aligned}
            \sum_{i = 1}^{n}Y_{i} &\leq \sum_{i = 1}^{n}\E\Big[Y_i\Big|Y_{<i}\Big] + 2\sqrt{\sum_{i = 1}^{n}\E\Big[Y_i\Big|Y_{<i}\Big]\log(1/\delta)} + \sqrt{\log(1/\delta)} + \frac{2\log(1/\delta)}{3}\\
            &\leq \bigg(\sqrt{\sum_{i = 1}^{n}\E\Big[Y_i\Big|Y_{<i}\Big]} + \sqrt{\log(1/\delta)}\bigg)^2 + \frac{2\log(1/\delta)}{3},
        \end{aligned}
    \]
    Therefore,
    \begin{equation}
        \label{con.3.2}
        \sqrt{\sum_{i = 1}^{n}Y_{i}} \leq \sqrt{\sum_{i = 1}^{n}\E\Big[Y_i\Big|Y_{<i}\Big]} + 2\sqrt{\log(1/\delta)}.
    \end{equation}
    The first inequality is thus proved.

To prove the second inequality, we use (\ref{lem20-1}) with $\lambda \leq 0$. Since $S_n=\sum_{i=1}^nY_i-\bar{\mu}_n\geq -{\bar\mu}_n$, we have that for any $0 < \varepsilon$,
    \[
        \begin{aligned}
            \pr\Big(S_n \leq -\varepsilon, {\bar\mu}_n \leq \nu\Big)&=\Big(S_n \leq -(\varepsilon\wedge\nu), {\bar\mu}_n \leq \nu\Big)
            \\&\leq \inf_{\lambda \leq 0}e^{\nu(e^\lambda - \lambda + t\lambda - 1)}\leq e^{-\nu(t + (1 - t)\log(1 - t))},
        \end{aligned}
    \]
    where $t = (\varepsilon\wedge \nu)/\nu$. Use the fact $t + (1 - t)\log(1 - t) \geq \frac{t^2}{2 - 2t/3}$ for all $0 \leq t \leq 1$ and we have
    \[
        \pr\Big(S_n \leq -\varepsilon, \mu \leq \nu\Big) \leq e^{-\frac{(\varepsilon\wedge\nu)^2}{2\nu - 2(\varepsilon\wedge\nu)/3}}.
    \]
    Similarly, considering $\nu = 2^j, j = -1, 0, 1, \cdots$, we have
    \[
        \pr\bigg(S_n \leq -\sqrt{2^{j + 1}\log(1/\delta)} + \frac{2\log(1/\delta)}{3}, \mu \leq 2^j\bigg) \leq \delta.
    \]
    Hence,
    \[
        \begin{aligned}
            &\pr\bigg(S_n \leq -2\sqrt{\mu\log(1/\delta)} - \sqrt{\log(1/\delta)} + \frac{2\log(1/\delta)}{3}\bigg)\\
            \leq &\sum_{j = 0}^{\lfloor\log_2n\rfloor+1}\pr\bigg(S_n \leq -2\sqrt{\mu\log(1/\delta)} - \sqrt{\log(1/\delta)} + \frac{2\log(1/\delta)}{3}, 2^j < \mu \leq 2^{j + 1}\bigg)\\
            & + \pr\bigg(S_n \leq -2\sqrt{\mu\log(1/\delta)} - \sqrt{\log(1/\delta)} + \frac{2\log(1/\delta)}{3}, 0 \leq \mu \leq 1/2\bigg)\\
            \leq &\sum_{j = 0}^{\lfloor\log_2n\rfloor+1}\pr\bigg(S_n \leq -\sqrt{2^{j + 1}\log(1/\delta)} - \sqrt{2\log(1/\delta)} + \frac{\log(1/\delta)}{3}, \mu \leq 2^{j + 1}\bigg)\\
            & + \pr\bigg(S_n \leq - \sqrt{\log(1/\delta)} + \frac{2\log(1/\delta)}{3}, \mu \leq 1/2\bigg)\\
            \leq &(\lfloor\log_2n\rfloor + 3)\delta.
        \end{aligned}
    \]
    It follows that with probability at least $1 - (\lfloor\log_2n\rfloor + 2)\delta$,
    \[
        \begin{aligned}
            \sum_{i = 1}^{n}Y_{i} &\geq \sum_{i = 1}^{n}\E\Big[Y_i\Big|Y_{<i}\Big] - 2\sqrt{\sum_{i = 1}^{n}\E\Big[Y_i\Big|Y_{<i}\Big]\log(1/\delta)} - \sqrt{\log(1/\delta)} + \frac{2\log(1/\delta)}{3}\\
            &\geq \bigg(\sqrt{\sum_{i = 1}^{n}\E\Big[Y_i\Big|Y_{<i}\Big]} - \sqrt{\log(1/\delta)}\bigg)^2 - \frac{4\log(1/\delta)}{3}.
        \end{aligned}
    \]
    Therefore, with probability at least $1 - (\lfloor \log_2n\rfloor + 3)\delta$,
    \begin{equation}
        \label{con.3.3}
        \sqrt{\sum_{i = 1}^{n}Y_{i}} \geq \sqrt{\sum_{i = 1}^{n}\E\Big[Y_i\Big|Y_{<i}\Big]} - 4\sqrt{\frac{\log(1/\delta)}{3}}.
    \end{equation}
    Combine equations (\ref{con.3.2}) and (\ref{con.3.3}) then the result follows.
\end{proof}
\begin{proof}[Proof of Theorem \ref{aux.4}]
Combine Lemmas \ref{aux.2} and  \ref{aux.3}. For any $\delta > 0$, with probability at least $1 - 2(\log\lfloor n\rfloor+3)\delta$,
\[
    \bigg|\sum_{i=1}^nX_i\bigg| \leq 2\sqrt{\sum_{i = 1}^n\E[X_i^2|X_{<i}]\log(2/\delta)} +\frac{2c}{3} \log(2/\delta) + c\sqrt{\log(2/\delta)},
\]
and
\[
    \sqrt{\sum_{i = 1}^{n}\E\Big[X^2_i\Big|X_{<i}\Big]} \leq \sqrt{\sum_{i = 1}^{n}X^2_{i}} + 4\sqrt{\frac{c\log(1/\delta)}{3}}.
\]
hold at the same time, where we apply Lemma \ref{aux.3} with $Y_i = X_i^2$. In this case,
\[
    \begin{aligned}
        \bigg|\sum_{i=1}^nX_i\bigg| &\leq 2\sqrt{\sum_{i = 1}^nX^2_{i}\log(2/\delta)} +
8\sqrt{\frac{c\log(1/\delta)\log(2/\delta)}{3}} +\frac{2c}{3} \log(2/\delta) + c\sqrt{\log(2/\delta)}\\
&\leq 2\sqrt{\sum_{i = 1}^nX^2_{i}\log(2/\delta)} + 7\max\{c, 1\}\log(2/\delta),
    \end{aligned}
\]
because $1 +2/3 + 8/\sqrt{3}< 7$ and $1 - 2(\lfloor\log n\rfloor+3)\delta \geq 0$ implies $\delta < 1/6$ and $\log(1/\delta) > 1$.
\end{proof}

\section{A good event}
\label{app.goe}

Define an event
 \beqnn
 \mathcal{E} := \Bigg\{\big|\big(\widehat{P}^k_h - P_h\big)V_{h + 1}^*(s, a)\big|
 &&\leq 8\sqrt{\frac{\widehat{\V}^k(V_{h+1}^*, s, a)\eta_1}{C^k_h(s, a)}}+ \frac{H\eta_4}{C^k_h(s, a)}+ \sigma D_{\delta},
 \\ &&\qquad\qquad\qquad\forall\ (s, a, h, k) \in \mathcal{S} \times \mathcal{A} \times [H] \times [K]\Bigg\},
\eeqnn
where $ \eta_1 = C_1\eta\;\hbox{and}\;\eta_4=16\eta_1 + 14\max\{C_1, 1\}\eta + 2\beta.$
\begin{lemma}
    \label{goe}
    $$\pr(\mathcal{E}) \geq 1 - 2|\mathcal{C}_{\sigma^3/KH^2}|HK(\lfloor\log_2(HK)\rfloor + 3)\delta.$$
\end{lemma}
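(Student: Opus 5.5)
The plan is to prove the concentration bound first at the points of a fine net and then extend it to all of $\mathcal{S}\times\mathcal{A}$ by Lipschitz continuity of the weights. The probability count $2|\mathcal{C}_{\sigma^3/KH^2}|HK(\lfloor\log_2(HK)\rfloor+3)\delta$ shows the intended structure: a union bound over a $\sigma^3/(KH^2)$-net of $\mathcal{S}\times\mathcal{A}$, over $h\in[H]$, over $k\in[K]$, with Theorem \ref{aux.4} applied once per triple.

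\textbf{Step 1: fixed $(s,a,h)$.} Fix $(s,a)$ in the net and $h$, and set
\[
X_l = w_h^l(s,a)\bigl(V_{h+1}^*(s_{h+1}^l) - P_hV_{h+1}^*(s_h^l,a_h^l)\bigr).
\]
Because $w_h^l(s,a)$ is $\mathcal{F}_h^l$-measurable and $V_{h+1}^*$ is deterministic, $(X_l)_l$ is a martingale difference sequence with $|X_l|\le H$.

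Theorem \ref{aux.4}, applied with $n=k-1$ and union-bounded over $k$, gives
\[
\Bigl|\sum_l X_l\Bigr| \le 2\sqrt{\sum_l X_l^2\,\eta} + 7\max\{H,1\}\eta .
\]
This is where the constant $\eta_2$ enters. Dividing by $C_h^k(s,a)$ bounds $\sum_l \widetilde w_h^{k,l}(V^*_{h+1}(s_{h+1}^l) - P_hV^*_{h+1}(s_h^l,a_h^l))$.

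To pass to the variance term, use $w^2\le w$ (since $g\le 1$):
\[
\sum_l X_l^2 \le \sum_l w_h^l\bigl(V^*_{h+1}(s_{h+1}^l)-P_hV^*_{h+1}(s_h^l,a_h^l)\bigr)^2 .
\]
Replacing the centering $P_hV^*(s_h^l,a_h^l)$ by $\widehat P_h^kV^*(s,a)$ yields $C_h^k\,\widehat{\V}_h^k(V^*_{h+1},s,a)$ plus cross terms. The cross terms are controlled by the bias bound below together with the algebraic fact that the weighted mean minimizes the weighted squared deviation.

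\textbf{Step 2: kernel bias.} The second piece is
\[
\sum_l\widetilde w_h^{k,l}\bigl(P_hV^*_{h+1}(s_h^l,a_h^l)-P_hV^*_{h+1}(s,a)\bigr) - \frac{\beta}{C_h^k}P_hV^*_{h+1}(s,a).
\]
The regularization part is at most $H\beta/C_h^k$, absorbed into $H\eta_4/C$. For the rest, Assumption \ref{assu.2} and Lemma \ref{lem.1} show $P_hV^*_{h+1}$ is $\lambda_pL_1$-Lipschitz, so each summand is at most $\lambda_pL_1\rho((s,a),(s_h^l,a_h^l))$. The Gaussian tail $g(z)\le C_1e^{-z^2/2}$ makes the weighted average of $\rho$ of order $\sigma\sqrt{\log(HK/\beta+e)}$; splitting at radius $\sigma\gamma$ bounds the far points by $C_1(HK)^{-c}$. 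This produces the $\eta_3$ term in $\sigma D_\delta$.

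\textbf{Step 3: extension off the net.} This is the main obstacle. For arbitrary $(s,a)$, take the nearest net point $(s',a')$, at distance at most $\epsilon=\sigma^3/(KH^2)$. By $\sup|g'|\le C_2$, each weight changes by at most $C_2\epsilon/\sigma$, so $C_h^k$ changes by at most $KC_2\epsilon/\sigma$. The normalized weights $\widetilde w$ change by $O(KC_2\epsilon/(\sigma\beta))$ in $\ell_1$, using $C\ge\beta$. Hence $\widehat P^kV^*$, $\widehat{\V}^k$, $P_hV^*$ and $1/C$ each move by amounts of order $\sigma^2/(H\beta^2)$, $\sigma^2/(KH^2)$, and similar. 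These produce exactly the extra $\sigma D_\delta$ components, namely $C_2\eta_4\sigma/(H\beta^2)$, $\lambda_pL_1\sigma^2/(KH^2)$, and the $\sqrt{C_2\eta_1\eta/\beta^3}$ term coming from perturbing the square-root variance term.

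The delicate point is that the square root $\sqrt{\widehat{\V}/C}$ is not Lipschitz near zero. I would handle it with $\sqrt{x}-\sqrt{y}\le\sqrt{|x-y|}$, which gives the $\sqrt{\cdot/\beta^3}$ shape. Finally, I would collect the constants into $8\sqrt{\eta_1}$, $\eta_4$ and $D_\delta$, and take the union bound to get the stated probability.
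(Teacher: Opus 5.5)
Your architecture (Theorem \ref{aux.4} at net points with a union bound over $(h,k)$, the Gaussian-tail bound on the kernel bias, and the covering extension via $\sqrt{x}-\sqrt{y}\le\sqrt{|x-y|}$) is the paper's. However, Step 1 has a genuine gap where you turn $\sum_l X_l^2$ into $C^k_h\widehat{\V}^k_h(V^*_{h+1},s,a)$.

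Write $Y_l=V^*_{h+1}(s^l_{h+1})$, $\widehat m=\widehat P^k_hV^*_{h+1}(s,a)$, $m=P_hV^*_{h+1}(s,a)$ and $m_l=P_hV^*_{h+1}(s^l_h,a^l_h)$. Recentering $Y_l-m_l=(Y_l-\widehat m)+(\widehat m-m)+(m-m_l)$ produces three kinds of terms. There are bias pieces in $m-m_l$. There is the linear term $2(\widehat m-m)\sum_l w_h^l(Y_l-\widehat m)=2\beta\widehat m(\widehat m-m)$, which is the only thing the minimizing property of the weighted mean handles. And there is the quadratic term
\[
\sum_{l<k} w_h^l(s,a)(\widehat m-m)^2=\bigl(C^k_h(s,a)-\beta\bigr)\bigl[(\widehat P^k_h-P_h)V^*_{h+1}(s,a)\bigr]^2 .
\]
This is the full estimation error, i.e.\ the very quantity the lemma bounds, not a bias term. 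So Step 2 does not control it. The minimizing property, read as an inequality, goes the wrong way: it lower-bounds the spread about any other centre by the empirical spread. As written, the right side of your Bernstein bound therefore contains a term of order $\sqrt{\eta/C^k_h(s,a)}\,|(\widehat P^k_h-P_h)V^*_{h+1}(s,a)|$, and the argument is circular.

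The missing idea is a self-bounding step, which is exactly what the paper does. Bound $[(\widehat P^k_h-P_h)V^*_{h+1}]^2\le H\,|(\widehat P^k_h-P_h)V^*_{h+1}|$. Then, with $x=|(\widehat P^k_h-P_h)V^*_{h+1}(s,a)|$, one gets
\[
x\le 4\sqrt{\frac{\widehat{\V}^k_h(V^*_{h+1},s,a)\eta_1}{C^k_h(s,a)}}+4\sqrt{\frac{H\eta_1x}{C^k_h(s,a)}}+4\lambda_pL_1\eta\Delta^k_h(s,a)+\frac{H\eta_2}{C^k_h(s,a)}.
\]
Completing the square in $\sqrt{x}$ yields $x\le 8\sqrt{\widehat{\V}^k_h\eta_1/C^k_h}+8\lambda_pL_1\eta\Delta^k_h+(16\eta_1+2\eta_2)H/C^k_h$. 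This is precisely where the factor $8$ and the constant $\eta_4=16\eta_1+2\eta_2$ in $\mathcal{E}$ come from, so without this step you cannot collect the constants into the stated form.

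You could instead keep $x^2$ and absorb it when $C^k_h(s,a)$ is large compared with $\eta$, using the trivial bound $x\le H$ otherwise, at the cost of different constants. Either way, some self-bounding argument is required. With this step inserted, the rest of your plan goes through as in the paper.
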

\begin{proof}
 For notation convenience, denote by $V_h = V_{h}^*$.
 We divide the proof into two steps.
 \bmhead{Step 1} We prove that for any given $(s, a) \in \mathcal{S} \times \mathcal{A}$, with probability at least $1 - \delta$,
 \begin{align}
\big|\big(\widehat{P}^k_h - P_h\big)V_{h + 1}(s, a)\big| \leq 8\sqrt{\frac{\widehat{\V}^k(V_{h+1}, s, a)\eta_1}{C^k_h(s, a)}} + \frac{H\eta_4}{C^k_h(s, a)} + \sigma\eta_3
    \label{lem.goe.1}
    \end{align}
holds for all $(h, k) \in [H] \times [K]$.
 By the definition of $\widehat{P}^k$,
\begin{align}
\big|\big(\widehat{P}^k_h - P_h\big)V_{h+1}(s, a)\big| &\leq \bigg|\sum_{l = 1}^{k - 1}\widetilde{\mathnormal{w}}_{h}^{k, l}(s, a)\big(V_{h+1}(s^l_{h + 1}) - P_hV_{h+1}(s, a)\big)\bigg| + \Big|\frac{\beta P_hV_{h+1}(s, a)}{C^k_h(s, a)}\Big|\notag\\
        &\leq \bigg|\sum_{l = 1}^{k - 1}\widetilde{\mathnormal{w}}_{h}^{k, l}(s, a)\big(V_{h+1}(s^l_{h + 1})-P_hV_{h+1}(s, a)\big)\bigg| + \frac{H\beta}{C^k_h(s, a)}.
    \label{app.goe.1}
    \end{align}
    Recall $V_{h+1}$ is $L_1$-Lipschitz and $\widetilde{\mathnormal{w}}_{h}^{k, l}(s, a) = \frac{{\mathnormal{w}}_{h}^{l}(s, a)}{C^k_h(s, a)}$, it follows that
    \begin{align}
        &\bigg|\sum_{l = 1}^{k - 1}\widetilde{\mathnormal{w}}_{h}^{k, l}(s, a)\big(V_{h+1}(s^l_{h + 1}) - P_hV_{h+1}(s, a)\big)\bigg|\notag\\
        &\leq \bigg|\sum_{l = 1}^{k - 1}\widetilde{\mathnormal{w}}_{h}^{k, l}(s, a)\big(V_{h+1}(s^l_{h + 1}) - P_hV_{h+1}(s_{h}^l, a_h^l)\big)\bigg| + \bigg|\sum_{l = 1}^{k - 1}\widetilde{\mathnormal{w}}_{h}^{k, l}(s, a)\big(P_hV_{h+1}(s_{h}^l, a_h^l) - P_hV_{h+1}(s, a)\big)\bigg|\notag\\
        &\leq \frac{1}{C^k_h(s, a)}\bigg|\sum_{l = 1}^{k - 1}{\mathnormal{w}}_{h}^{l}(s, a)\big(V_{h+1}(s^l_{h + 1}) - P_hV_{h+1}(s_{h}^l, a_h^l)\big)\bigg| + \lambda_pL_1\Delta^k_h(s, a),\label{app.goe.2}
    \end{align}
    where $\Delta^k_h(s, a) = \sum_{l = 1}^{k - 1}\widetilde{\mathnormal{w}}_{h}^{k, l}(s, a)\rho\big[(s, a), (s_h^l, a_h^l)\big]$.
    For the first term define $$W_l = {\mathnormal{w}}_{h}^{l}(s, a)\big(V_{h+1}(s^{l}_{h + 1}) - PV_{h+1}(s_{h}^{l}, a_{h}^{l})\big),$$ then $\{W_l\}_l$ is a martingale w.r.t. $\{\mathcal{F}_{h}^l\}_{l}$. Note that $|W_l| \leq C_1H$, by Theorem \ref{aux.4}, for any given $(s, a) \in \mathcal{S} \times \mathcal{A}$, with probability at least $1 - 2KH(\lfloor\log_2(KH)\rfloor + 3)\delta$,
    \begin{align}
        &\Bigg|\sum_{l = 1}^{k - 1}{\mathnormal{w}}_{h}^{l}(s, a)\big(V_{h+1}(s^l_{h + 1})- P_hV_{h+1}(s_{h}^l, a_h^l)\big)\Bigg| = \Bigg|\sum_{l = 1}^{k - 1}W_l\Bigg| \leq 2\sqrt{\eta\sum_{l = 1}^{k - 1}W_l^2} + 7\max\{C_1H, 1\}\eta\notag\\
        &= 2\sqrt{\eta\sum_{l = 1}^{k - 1}\Big[{\mathnormal{w}}_{h}^{l}(s, a)\big(V_{h+1}(s^l_{h + 1}) - P_hV_{h+1}(s_{h}^l, a_h^l)\big)\Big]^2} + 7\max\{C_1H, 1\}\eta\label{app.goe.3}.
    \end{align}
    holds for all $(k, h) \in [K] \times [H]$. Recall that $V_h$ is $L_1$-Lipschitz. It is easy to get that
    \begin{align}
        &\sum_{l = 1}^{k - 1}\Big[{\mathnormal{w}}_{h}^{l}(s, a)\big(V_{h+1}(s^l_{h + 1}) - PV_{h+1}(s_{h}^l, a_h^l)\big)\Big]^2\notag\\
        &\leq 2\sum_{l = 1}^{k - 1}\Big[{\mathnormal{w}}_{h}^{l}(s, a)\big(V_{h+1}(s^l_{h + 1}) - P_hV_{h+1}(s, a)\big)\Big]^2 + 2\sum_{l = 1}^{k - 1}\Big[{\mathnormal{w}}_{h}^{l}(s, a)\big(P_hV_{h+1}(s, a) - P_hV_{h+1}(s_{h}^l, a_h^l)\big)\Big]^2\notag\\
        &\leq 2\sum_{l = 1}^{k - 1}\Big[{\mathnormal{w}}_{h}^{l}(s, a)\big(V_{h+1}(s^l_{h + 1}) - P_hV_{h+1}(s, a)\big)\Big]^2 + 2\lambda_p^2L_1^2\sum_{l = 1}^{k - 1}\Big[{\mathnormal{w}}_{h}^{l}(s, a)\rho\big[(s, a), (s_{h}^l, a_h^l)\big]\Big]^2.\label{app.goe.4}
    \end{align}
    Observe that
    \[
        \sqrt{\frac{\sum_{l = 1}^{k - 1}\Big[{\mathnormal{w}}_{h}^{l}(s, a)\rho\big[(s, a), (s_{h}^l, a_h^l)\big]\Big]^2}{\big[C^k_h(s, a)\big]^2}} \leq \frac{\sum_{l = 1}^{k - 1}{\mathnormal{w}}_{h}^{l}(s, a)\rho\big[(s, a), (s_{h}^l, a_h^l)\big]}{C^k_h(s, a)} = \Delta^k_h(s, a),
    \]
    and that
    \begin{align}
        &\sum_{l = 1}^{k - 1}\Big[{\mathnormal{w}}_{h}^{l}(s, a)\big(V_{h+1}(s^l_{h + 1}) - P_hV_{h+1}(s, a)\big)\Big]^2\notag\\
        &\leq 2\sum_{l = 1}^{k - 1}\Big[{\mathnormal{w}}_{h}^{l}(s, a)\big(V_{h+1}(s^l_{h + 1}) - \widehat{P}^kV_{h+1}(s, a)\big)\Big]^2 + 2\sum_{l = 1}^{k - 1}\Big[{\mathnormal{w}}_{t}^{l}(s, a)(\widehat{P}^k - P)V_{h+1}(s, a)\Big]^2\notag\\
        &\leq 2C_1\sum_{l = 1}^{k - 1}{\mathnormal{w}}_{h}^{l}(s, a)\big[V_{h+1}(s^l_{h + 1}) - \widehat{P}^k_hV_{h+1}(s, a)\big]^2 + 2C_1\sum_{l = 1}^{k - 1}{\mathnormal{w}}_{h}^{l}(s, a)[(\widehat{P}^k_h - P_h)V_{h+1}(s, a)]^2\notag\\
        &\leq 2C_1C^k_h(s, a)\Big[\sum_{l = 1}^{k - 1}\widetilde{\mathnormal{w}}_{h}^{k, l}(s, a)\big(V_{h+1}(s^l_{h + 1}) - \widehat{P}^kV_{h+1}(s, a)\big)^2 + \sum_{l = 1}^{k - 1}\widetilde{\mathnormal{w}}_{h}^{k, l}(s, a)[(\widehat{P}^k_h - P_h)V_{h+1}(s, a)]^2\Big]\notag\\
        &\leq 2C_1C^k_h(s, a)\Big[\widehat{\V}^k_h(V_{h+1}, s, a)+H\big|\big(\widehat{P}^k - P\big)V_{h+1}(s, a)\big|\Big].
        \label{app.goe.6}
    \end{align}
    where the second inequality holds since ${\mathnormal{w}}_{h}^{l}(s, a) \leq C_1$ and the final inequality applies $\sum_{l = 1}^{k - 1}\sum_{t = 1}^{H}\widetilde{\mathnormal{w}}_{h}^{k, l}(s, a) \leq 1$.

    Combining equations (\ref{app.goe.2}), (\ref{app.goe.3}), (\ref{app.goe.4}) and (\ref{app.goe.6}) yields that
    \begin{align}
        &\bigg|\sum_{l = 1}^{k - 1}\widetilde{\mathnormal{w}}_{h}^{k, l}(s, a)\big(V_{h+1}(s^l_{h + 1}) - P_hV_{h+1}(s, a)\big)\bigg|\notag\\
        &\leq 4\sqrt{\frac{\widehat{\V}^k_h(V_{h+1}, s, a)\eta_1}{C^k_h(s, a)}} +  4\sqrt{\frac{H\big|\big(\widehat{P}^k - P\big)V_{h+1}(s, a)\big|\eta_1}{C^k_h(s, a)}}\notag\\
        &\qquad\qquad\qquad\qquad\qquad\qquad\qquad\qquad+ \lambda_pL_1(1 + 3\sqrt{\eta})\Delta^k(s, a) + \frac{7\max\{C_1H, 1\}\eta}{C^k_h(s, a)},\label{app.goe.7}
    \end{align}
    where $\eta_1 = C_1\eta$. From equations (\ref{app.goe.1}) and (\ref{app.goe.7}), it follows that for any given $(s, a) \times \mathcal{S} \times \mathcal{A}$ and any $\delta \in (0, 2e^{-1})$, with probability at least $1 - 2KH(\lfloor\log_2(HK)\rfloor + 3)\delta$,
    \begin{align}
        &\big|\big(\widehat{P}^k_h - P_h\big)V_{h+1}(s, a)\big|\notag\\ &\leq 4\sqrt{\frac{\widehat{\V}^k_h(V_{h+1}, s, a)\eta_1}{C^k_h(s, a)}} +  4\sqrt{\frac{H\big|\big(\widehat{P}^k_h - P_h\big)V_{h+1}(s, a)\big|\eta_1}{C^k(s, a)}} + 4\lambda_pL_1\eta\Delta^k(s, a) + \frac{H\eta_2}{C^k_h(s, a)}
    \end{align}
    holds for all $(h, k) \in
    [H] \times [K]$, where $\eta_2 = 7\max\{C_1, 1\}\eta + \beta$. Therefore,
    $$
     \Bigg(\sqrt{\big|\big(\widehat{P}^k_h - P_h\big)V_{h+1}(s, a)\big|} - 2\sqrt{\frac{H\eta_1}{C^k_h(s, a)}}\Bigg)^2 \leq 4\sqrt{\frac{\widehat{\V}^k_h(V_{h+1}, s, a)\eta_1}{C^k_h(s, a)}} + 4\lambda_pL_1\eta\Delta^k_h(s, a) + \frac{(4\eta_1 + \eta_2)H}{C^k_h(s, a)} ,$$
     which implies that
     \begin{align}
        \label{app.goe.8}
         \big|\big(\widehat{P}^k_h - P_h\big)V_{h+1}(s, a)\big| \leq 8\sqrt{\frac{\widehat{\V}^k(V_{h+1}, s, a)\eta_1}{C^k_h(s, a)}} + 8\lambda_pL_1\eta\Delta^k_h(s, a) + \frac{(16\eta_1 + 2\eta_2)H}{C^k_h(s, a)},
     \end{align}
    where we use the inequality $(a + b)^2 \leq 2a^2 + 2b^2$. Additionally, Lemma \ref{aux.5} shows that
    \begin{equation}
        \Delta^k_h(s, a) = \sum_{l = 1}^{k - 1}\widetilde{\mathnormal{w}}_{h}^{k, l}(s, a)\rho\big[(s, a), (s_h^l, a_h^l)\big] \leq \sigma\gamma.\label{lem7DK}
    \end{equation}
  Let $\eta_3=8\lambda_pL_1\eta\gamma$ and $ \eta_4 = 16\eta_1 + 2\eta_2$ , substituting (\ref{lem7DK}) into (\ref{app.goe.8}) yields (\ref{lem.goe.1}).
\bmhead{Step 2} We extend the above claim to all state-action pairs $(s, a) \in \mathcal{S} \times \mathcal{A}$ with necessary modifications by covering argument. Let $\sigma' = \sigma^3/(KH^2)$ and $\mathcal{C}_{\sigma'}$ be the
 $\sigma$-cover set of $\mathcal{S} \times \mathcal{A}$. For any $(s ,a) \in \mathcal{S} \times \mathcal{A}$, denote
\[
    (s', a') = \arg\min_{(s_j, a_j) \in \mathcal{C}_{\sigma'}}\rho[(s, a), (s_j, a_j)],
\]
where ties are broken uniformly. Then
\[
    \rho[(s, a), (s', a')] \leq \frac{\sigma^3}{KH^2},
\]
and
\begin{align}
    \label{app.goe.decomp}
    \big|\big(\widehat{P}^k_h - P_h\big)V_{h + 1}(s, a)\big| \leq \big|\big(\widehat{P}^k_h - P_h\big)\big( V_{h+1}(s, a) - V_{h+1}(s', a')\big)\big| + \big|\big(\widehat{P}^k_h - P_h\big)V_{h+1}(s', a')\big|.
\end{align}
Note that
\[
    \big|\big(\widehat{P}^k_h - P_h\big)V_{h+1}(s', a')\big| \leq \sum_{(s_j, a_j) \in \mathcal{C}_{\sigma'}}\big|\big(\widehat{P}^k_h - P_h\big)V_h(s_j, a_j)\big|\mathbf{1}[(s', a') = (s_j, a_j)]
\]
holds uniformly for each $(s, a) \in \mathcal{S} \times \mathcal{A}$. By the simple union, we have that with probability at least $1 - 2|\mathcal{C}_{\sigma'}|HK(\lfloor\log_2(HK)\rfloor + 3)\delta$, (\ref{lem.goe.1}) holds for all $(h, k) \in [H] \times [K]$ and all $(s_j, a_j) \in \mathcal{C}_{\sigma'}$. Therefore
with probability at least $1 - 2|\mathcal{C}_{\sigma'}|HK(\lfloor\log_2(HK)\rfloor + 3)\delta$,
\begin{align}
    &\qquad\big|\big(\widehat{P}^k_h - P_h\big)V_{h+1}(s', a')\big|\\
    &\leq \sum_{(s_j, a_j) \in \mathcal{C}_{\sigma'}}\Bigg[8\sqrt{\frac{\widehat{\V}^k_h(V_{h+1}, s_j, a_j)\eta_1}{C^k_h(s_j, a_j)}} + \frac{H\eta_4}{C^k_h(s_j, a_j)}, + \sigma\eta_3\Bigg]\mathbf{1}[(s', a') = (s_j, a_j)] \notag\\
    &\leq 8\sqrt{\frac{\widehat{\V}^k_h(V_{h+1}, s, a)\eta_1}{C^k_h(s, a)}} + \frac{H\eta_4}{C^k_h(s, a)} + \sigma\eta_3\notag\\
    &\qquad+\sum_{(s_j, a_j) \in \mathcal{C}_{\sigma'}}\Bigg\{8\Bigg[\sqrt{\frac{\widehat{\V}^k_h(V_{h
+1}, s_j, a_j)\eta_1}{C^k_h(s_j, a_j)}} - \sqrt{\frac{\widehat{\V}^k_h(V_{h+1}, s, a)\eta_1}{C^k_h(s, a)}}\Bigg]\notag\\
    &\qquad+ \Bigg[\frac{1}{C^k_h(s_j, a_j)} - \frac{1}{C^k_h(s, a)}\Bigg]H\eta_4\Bigg\}\mathbf{1}_{(s', a') = (s_j, a_j)}.
\end{align}
Note that for any $k, h$ and any $(s, a) \in \mathcal{S} \times \mathcal{A}$,
\begin{align}
    \label{app.goe.9}
    &\Bigg|\sqrt{\frac{\widehat{\V}^k(V_{h
+1}, s_j, a_j)\eta_1}{C^k_h(s_j, a_j)}} - \sqrt{\frac{\widehat{\V}^k_h(V_{h+1}, s, a)\eta_1}{C^k(s, a)}}\Bigg| \leq \sqrt{\eta_1}\sqrt{\Bigg|\frac{\widehat{\V}^k_h(V_{h
+1}, s_j, a_j)}{C^k_h(s_j, a_j)} - \frac{\widehat{\V}^k_h(V_{h+1}, s, a)}{C^k_h(s, a)}\Bigg|}\notag\\
    &\leq \sqrt{\eta_1}\sqrt{\Bigg|\frac{\widehat{\V}^k_h(V_{h+1}, s_j, a_j) - \widehat{\V}^k_h(V_{h+1}, s, a)}{C^k_h(s_j, a_j)}\Bigg| + \widehat{\V}^k_h(V_{h+1}, s, a)\Bigg|\frac{1}{C^k_h(s_j, a_j)} - \frac{1}{C^k_h(s, a)}\Bigg|}\notag\\
    &\leq \sqrt{\eta_1}\sqrt{\Bigg|\frac{\widehat{\V}^k_h(V_{h+1}, s_j, a_j) - \widehat{\V}^k_h(V_{h+1}, s, a)}{\beta}\Bigg| + H^2\Bigg|\frac{1}{C^k_h(s_j, a_j)} - \frac{1}{C^k_h(s, a)}\Bigg|},
\end{align}
where we use the facts that $C^k_h(s, a) \geq \beta$ and $\widehat{\V}^k(V_{h+1}, s, a) \leq H^2$. Moreover, using the definition
of $\widehat{\V}^k_h$ and inequalities (\ref{app.aux.1}) and (\ref{app.aux.2}) we have that
\begin{align}
    \label{app.goe.10}
    \Bigg|\widehat{\V}^k_h(V_{h+1}, s_j, a_j) - \widehat{\V}^k_h(V_{h+1}, s, a)\Bigg|\leq 3C_2H^2k\frac{\rho[(s, a), (s_j, a_j)]}{\beta^2\sigma},
\end{align}
and
\begin{align}
    \label{app.goe.11}
    \Bigg|\frac{1}{C^k(s_j, a_j)} - \frac{1}{C^k(s, a)}\Bigg| \leq C_2k\frac{\rho[(s, a), (s_j, a_j)]}{\beta^2\sigma}.
\end{align}
Substituting (\ref{app.goe.10}) and (\ref{app.goe.11}) into (\ref{app.goe.9}) yields that
\begin{align}
    \label{app.goe.12}
    \Bigg|\sqrt{\frac{\widehat{\V}^k_h(V_{h+1}, s_j, a_j)\eta_1}{C^k_h(s_j, a_j)}} - \sqrt{\frac{\widehat{\V}^k_h(V_{h+1}, s, a)\eta_1}{C^k_h(s, a)}}\Bigg| \leq 2\sqrt{\frac{C_2\eta_1 H^2K\rho[(s, a), (s_j, a_j)]}{\beta^3\sigma}}.
\end{align}
Hence
\begin{align}
    &\sum_{(s_j, a_j) \in \mathcal{C}_{\sigma'}}\Bigg\{8\Bigg[\sqrt{\frac{\widehat{\V}^k_h(V_{h+1}, s_j, a_j)\eta_1}{C^k_h(s_j, a_j)}} - \sqrt{\frac{\widehat{\V}^k_h(V_{h+1}, s, a)\eta_1}{C^k_h(s, a)}}\Bigg]\\
    &\qquad\qquad\qquad\qquad\qquad\qquad\qquad\qquad\qquad\qquad+ \Bigg[\frac{1}{C^k_h(s_j, a_j)} - \frac{1}{C^k_h(s, a)}\Bigg]H\eta_4\Bigg\}\mathbf{1}_{(s', a') = (s_j, a_j)}\notag\\
    &\qquad\leq \sum_{(s_j, a_j) \in \mathcal{C}_{\sigma'}}\Bigg\{16\sqrt{\frac{C_2\eta_1 H^2K\rho[(s, a), (s_j, a_j)]}{\beta^3\sigma}}+ \frac{C_2\eta_4HK\rho[(s, a), (s_j, a_j)]}{\beta^2\sigma}\Bigg\}\mathbf{1}_{(s', a') = (s_j, a_j)}\notag\\
    &\qquad\leq \sum_{(s_j, a_j) \in \mathcal{C}_{\sigma'}}\Bigg\{16\sqrt{\frac{C_2\eta_1\eta\sigma^2}{\beta^3}} + \frac{C_2\eta_4\sigma^2}{H\beta^2}\Bigg\}\mathbf{1}_{(s', a') = (s_j, a_j)}\notag\\
    &\qquad\leq 16\sqrt{\frac{C_2\eta_1\eta\sigma^2}{\beta^3}}+ \frac{C_2\eta_4\sigma^2}{H\beta^2}.\label{app.goe.13}
\end{align}
In addition, since $0 \leq V_{h+1} \leq H$, inequality (\ref{app.aux.1}) and Assumption \ref{assu.2} show that
\begin{align}
    &\big|\big(\widehat{P}^k_h - P_h\big)\big( V_{h+1}(s, a) - V_{h+1}(s', a')\big)\big| \notag
    \\&\leq  \big|\widehat{P}^k_h\big( V_{h+1}(s, a) - V_{h+1}(s', a')\big)\big| +\big|P_h\big( V_{h+1}(s, a) - V_{h+1}(s', a')\big)\big|\notag\\
    &\leq 2C_2\frac{\sigma^2}{H\beta^2} + \lambda_pL_1\frac{\sigma^3}{KH^2}.\label{app.goe.14}
\end{align}
Let
\[
    D_{\delta} = 16\sqrt{\frac{C_2\eta_1\eta}{\beta^3}} + \big(C_2\eta_4 + 2C_2\big)\frac{\sigma}{H\beta^2} + \lambda_pL_1\frac{\sigma^2}{KH^2} + \eta_3.
\]
Combining (\ref{app.goe.decomp}) and (\ref{app.goe.8}), (\ref{app.goe.13}), (\ref{app.goe.14}) yields the desired results.
\end{proof}

\section{Optimism}

\begin{lemma}
    \label{opt}
    On $\mathcal{E}$, $V^k_h \geq V^*_h$ for all $(h, k) \in [H + 1] \times [K]$.
\end{lemma}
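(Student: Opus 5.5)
We argue by backward induction on $h$, for a fixed $k$, following the sketch in Section~\ref{tech_over}. The base case $h=H+1$ is immediate, since $V^k_{H+1}\equiv 0 = V^*_{H+1}$. For the inductive step we assume $V^k_{h+1}(s)\ge V^*_{h+1}(s)$ for all $s\in\mathcal{S}$. We will show $\widetilde{Q}^k_h(s,a)\ge Q^*_h(s,a)$ for every $(s,a)$, which is all that is needed at the data points $(s_h^l,a_h^l)$. Since $\widetilde{Q}^k_h - Q^*_h = \widehat{P}^k_hV^k_{h+1} - P_hV^*_{h+1} + b^k_h$, we write
\[
\widehat{P}^k_hV^k_{h+1} - P_hV^*_{h+1} = \widehat{P}^k_h\bigl(V^k_{h+1}-V^*_{h+1}\bigr) + \bigl(\widehat{P}^k_h-P_h\bigr)V^*_{h+1}.
\]
The first term is nonnegative because $\widehat{P}^k_h$ has nonnegative weights and the induction hypothesis holds. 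On $\mathcal{E}$ the second term is at least
\[
-8\sqrt{\widehat{\V}^k_h(V^*_{h+1},s,a)\eta_1/C^k_h} - H\eta_4/C^k_h - \sigma D_\delta .
\]
Optimism at $(s,a)$ therefore reduces to showing that $b^k_h(s,a)$ dominates this error, together with whatever part of $\widehat{P}^k_h(V^k_{h+1}-V^*_{h+1})$ we give up along the way.

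The main obstacle is that the good event controls the deviation through the empirical variance of $V^*_{h+1}$, while the bonus uses the empirical variance of $V^k_{h+1}$. We plan to bridge the two with the standard device of \citet{azar2017minimax} and \citet{Zhang2020IsRL}. Write $\|\cdot\|$ for the $L^2(\widehat{P}^k_h(\cdot|s,a))$-seminorm of centered functions. The triangle inequality gives
\[
\sqrt{\widehat{\V}^k_h(V^*_{h+1})} \le \sqrt{\widehat{\V}^k_h(V^k_{h+1})} + \sqrt{\widehat{P}^k_h\bigl(V^k_{h+1}-V^*_{h+1}\bigr)^2}.
\]
Because $0\le V^k_{h+1}-V^*_{h+1}\le H$, we have $\widehat{P}^k_h(V^k_{h+1}-V^*_{h+1})^2 \le H\,\widehat{P}^k_h(V^k_{h+1}-V^*_{h+1})$. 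Applying AM--GM then yields
\[
8\sqrt{\frac{\eta_1 H\,\widehat{P}^k_h(V^k_{h+1}-V^*_{h+1})}{C^k_h}} \le \widehat{P}^k_h\bigl(V^k_{h+1}-V^*_{h+1}\bigr) + \frac{16H\eta_1}{C^k_h}.
\]
This cross term is therefore absorbed by the nonnegative first term plus a range-type term. The constants in $b^k_h$ are more than sufficient: $9\sqrt{\eta_4}\ge 8\sqrt{\eta_1}$ because $\eta_4\ge 16\eta_1$, and $162H\eta_4\ge H\eta_4 + 16H\eta_1$. The bias term $\sigma D_\delta$ appears identically in both. Combining these gives $\widetilde{Q}^k_h(s,a)\ge r_h(s,a)+P_hV^*_{h+1}(s,a)=Q^*_h(s,a)$.

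The rest follows Section~\ref{tech_over}. Lemma~\ref{lem.1} says $Q^*_h$ is $L_h$-Lipschitz, so for each $l\in[k-1]$,
\[
Q^*_h(s,a)\le Q^*_h(s_h^l,a_h^l)+L_h\rho\bigl[(s,a),(s_h^l,a_h^l)\bigr]\le \widetilde{Q}^k_h(s_h^l,a_h^l)+L_h\rho\bigl[(s,a),(s_h^l,a_h^l)\bigr].
\]
Taking the minimum over $l$ gives $Q^k_h\ge Q^*_h$. For $k=1$ the estimates are initialized to $H$, so the claim is trivial there. Finally, $\max_a Q^k_h(s,a)\ge V^*_h(s)$, and $V^*_h\le H$ by \eqref{assu.1}, so the truncation at $H$ preserves $V^k_h\ge V^*_h$. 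This closes the induction.
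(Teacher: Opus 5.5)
Your proof is correct. Its skeleton is the paper's: backward induction, $\widetilde{Q}^k_h\ge Q^*_h$ on $\mathcal{E}$, Lipschitz extension, then the minimum over $l$. The difference is in how you trade $\widehat{\V}^k_h(V^*_{h+1},s,a)$, which appears in $\mathcal{E}$, for $\widehat{\V}^k_h(V^k_{h+1},s,a)$, which appears in the bonus.

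The paper does this with the monotone-operator device of Lemma~\ref{opt.1}. The map $v\mapsto pv+\max\{c_1\sqrt{\V(p,v)l/c},\,c_2v_{\max}l/c\}$ is non-decreasing provided $c_2=2c_1^2$, which is where $9$ and $162=2\cdot 9^2$ come from. So $V^k_{h+1}$ is replaced by $V^*_{h+1}$ in the mean and the bonus at the same time, and then $\max\{9\sqrt{x},162y\}\ge 8\sqrt{x}+y$ matches the form of $\mathcal{E}$.

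You instead use that $f\mapsto\sqrt{\widehat{\V}^k_h(f,s,a)}$ is a seminorm and bound the correction by $\sqrt{H\,\widehat{P}^k_h(V^k_{h+1}-V^*_{h+1})}$. You then absorb it by AM--GM into the nonnegative slack $\widehat{P}^k_h(V^k_{h+1}-V^*_{h+1})$, at the cost of an extra $16H\eta_1/C^k_h$.

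Your route is more elementary and less sensitive to constants. It needs neither the max-structure nor any coupling between the two bonus coefficients. Since $\eta_4\ge 16\eta_1$, it would prove this lemma even with coefficients $2$ and $2$ in place of $9$ and $162$.

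The one step you leave implicit is $\widehat{\V}^k_h(\Delta,s,a)\le\widehat{P}^k_h\Delta^2$ for $\Delta=V^k_{h+1}-V^*_{h+1}$. It does hold for the sub-probability weights, because $\widehat{\V}^k_h(\Delta,s,a)=\widehat{P}^k_h\Delta^2-(2-m)(\widehat{P}^k_h\Delta)^2$ with total mass $m\le 1$. By comparison, the paper's monotonicity computation is written as if $p$ had mass one, and it extends to $\widehat{P}^k_h$ only after checking $m(2-m)\le 1$.

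What the paper's route buys is a clean structural statement that does not spend the slack term. Your explicit handling of $k=1$ (where $Q^1_h\equiv H$) and of the truncation at $H$ (via $V^*_h\le H$ from \eqref{assu.1}) also fills in points the paper leaves unstated.
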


\label{app.opt}
The proof of this claim relies on the property of $\mathcal{T}_{p, c, l}$ defined in the following lemma.
\begin{lemma}
    \label{opt.1}
    For any function $v$ on $\mathcal{S}$ which takes values in $[0, v_{\max}]$, let $$\mathcal{T}_{p, c,l}(v) = p\mathnormal{v} + \max\Big\{c_1\sqrt{\frac{\mathbb{V}(p, \mathnormal{v})l}{c}},  c_2\frac{v_{\max}l}{c}
    \Big\}$$ with $c_1 = 9, c_2 = 162$, where $p$ is a measure on $\mathcal{S}$, $c \in \mathbb{R}_{> 0}$, $pv = \int vdp$ and $\mathbb{V}(p, v) = p[(v - pv)^2]$. Then for each $p$ and $c, l > 0$, $\mathcal{T}_{p, c, l}$ satisfies
    \begin{itemize}
        \item[1.] $\mathcal{T}_{p, c, l}$ is no-decreasing in $v$, that is, if $v_1 \geq v_2 \geq 0$, then $\mathcal{T}_{p, c, l}(v_1) \geq \mathcal{T}_{p, c, l}(v_2)$;
        \item[2.] $\mathcal{T}_{p, c,l}(v) \geq pv + 8\sqrt{\frac{\mathbb{V}(p, v)l}{c}} + \frac{ v_{\max}l}{c}$.
    \end{itemize}
\end{lemma}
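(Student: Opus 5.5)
Both claims are elementary properties of the map $v\mapsto \mathcal{T}_{p,c,l}(v)$. I would prove item 2 first, because it only compares the maximum with a sum, and then turn to item 1, the monotonicity.

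\emph{Item 2.} Write $A=\sqrt{\mathbb{V}(p,v)l/c}$ and $B=v_{\max}l/c$. The maximum of two nonnegative numbers dominates any convex combination of them. Hence
\[
\max\{9A,162B\}\ \geq\ \tfrac{8}{9}\cdot 9A+\tfrac19\cdot 162B \;=\; 8A+18B\;\geq\; 8A+B,
\]
and adding $pv$ to both sides gives item 2.

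\emph{Item 1.} This is the main step. Let $0\le v_2\le v_1\le v_{\max}$ and $\Delta=v_1-v_2\ge 0$. Since $pv$ is linear in $v$, $pv_1-pv_2=p\Delta\ge 0$. The variance term does not behave this way: $\mathbb{V}(p,v)$ is not monotone in $v$. We need to show
\[
\max\{9A_1,162B\}\ \ge\ \max\{9A_2,162B\}-p\Delta .
\]
Two cases.
\begin{itemize}
\item If $9A_2\le 162B$, the right side is at most $162B$, which is at most the left side. Done.
\item Otherwise it suffices to show $9(A_2-A_1)\le p\Delta$. The map $v\mapsto\sqrt{\mathbb{V}(p,v)}$ is the $L^2(p)$ seminorm of $v-pv$, so the triangle inequality gives
\[
\sqrt{\mathbb{V}(p,v_2)}-\sqrt{\mathbb{V}(p,v_1)}\ \le\ \sqrt{\mathbb{V}(p,\Delta)}\ \le\ \sqrt{p\Delta^2}\ \le\ \sqrt{v_{\max}\,p\Delta},
\]
using $0\le\Delta\le v_{\max}$. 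Multiplying by $9\sqrt{l/c}$ yields $9(A_2-A_1)\le 9\sqrt{B\,p\Delta}$.
\end{itemize}

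Now compare $9\sqrt{B\,p\Delta}$ with $p\Delta$.
\begin{itemize}
\item If $p\Delta\ge 81B$, then $9\sqrt{B\,p\Delta}\le p\Delta$ and we are done.
\item If $p\Delta<81B$, use the case assumption $9A_2>162B$ together with $9A_1\ge 9A_2-9\sqrt{B\,p\Delta}$. This gives
\[
9A_1 \;>\; 162B-9\cdot 9B \;=\; 81B .
\]
Since $81B$ falls short of $162B$, this naive split does not close the argument directly.
\end{itemize}

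A cleaner route is to show that $f(x)=\max\{9\sqrt{x},162B\}$ satisfies $f(x_2)-f(x_1)\le 81\cdot\tfrac{1}{162}\sqrt{\cdot}$-type bounds. Concretely: on the region where the maximum is attained by $9A$, we have $A\ge 18B$, and the relevant slope is controlled by $9\sqrt{B\,p\Delta}\le p\Delta/2+81B/2$, combined with this lower bound on $A$. The cleanest version is to compare squares: bound $81(A_2^2-A_1^2)$ via $\mathbb{V}(p,v_2)-\mathbb{V}(p,v_1)\le 2v_{\max}\,p\Delta$, and divide by $9(A_1+A_2)\ge 9A_2\ge 162B$. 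This gives
\[
9(A_2-A_1)\ \le\ \frac{81\cdot 2\,B\,p\Delta}{162B}\ =\ p\Delta .
\]
The only remaining point is the variance inequality. For $0\le v_2\le v_1\le v_{\max}$, I would establish
\[
\mathbb{V}(p,v_2)-\mathbb{V}(p,v_1)\ \le\ 2v_{\max}\,p\Delta
\]
by writing $\mathbb{V}(p,v)=p v^2-(pv)^2$ and expanding the difference. The difference is $p(v_2^2-v_1^2)+(pv_1)^2-(pv_2)^2$. The first term is at most $0$, and the second equals $p\Delta\,(pv_1+pv_2)\le 2v_{\max}\,p\Delta$. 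This constant-matching step ($c_2=162=2\cdot 81$) is exactly where the argument is tight, and it is what I expect to be the delicate part.
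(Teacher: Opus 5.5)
Your final argument for item~1 (compare squares, divide by $9(A_1+A_2)\ge 9A_2>162B$, and use $\mathbb{V}(p,v_2)-\mathbb{V}(p,v_1)\le 2v_{\max}\,p\Delta$, so that $c_2=2c_1^2$ closes the estimate) is exactly the paper's proof, and your convex-combination bound for item~2 is a tidier form of the paper's case split, so the proposal is correct and follows the same route. In a write-up, drop the abandoned triangle-inequality detour and the garbled ``$81\cdot\tfrac{1}{162}$-type'' sentence, and note that your identity $\mathbb{V}(p,v)=pv^2-(pv)^2$ presumes mass one; the needed inequality still holds for mass $m\le 1$ (then $\mathbb{V}(p,v)=pv^2-(2-m)(pv)^2$ and $m(2-m)\le 1$), which is the case of the deficit measure $\widehat{P}^k_h$ to which the lemma is applied, a point the paper also leaves implicit.
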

\begin{proof}
    We begin at the first claim, suppose that $v_1 \geq v_2 \geq 0$, then we have
    \[
        \begin{aligned}
            \mathcal{T}_{p, c, l}(v_1) - \mathcal{T}_{p, c, l}(v_2) &\geq p(v_1 - v_2) + {c}_1\mathbf{1}\Bigg\{{c}_1\sqrt{\frac{\mathbb{V}(p, \mathnormal{v}_2)l}{c}} \geq {c}_2\frac{v_{\max}l}{c}\Bigg\}\Bigg(\sqrt{\frac{\mathbb{V}(p, \mathnormal{v}_1)l}{c}} - \sqrt{\frac{\mathbb{V}(p, \mathnormal{v}_2)l}{c}}\Bigg)\\
            &= p(v_1 - v_2) + {c}_1\sqrt{\frac{l}{c}}\mathbf{1}\Bigg\{{c}_1\sqrt{\frac{\mathbb{V}(p, \mathnormal{v}_2)l}{c}} \geq {c}_2\frac{v_{\max}l}{c}\Bigg\}\frac{\mathbb{V}(p, \mathnormal{v}_1) - \mathbb{V}(p, \mathnormal{v}_2)}{\sqrt{\mathbb{V}(p, \mathnormal{v}_1)} + \sqrt{\mathbb{V}(p, \mathnormal{v}_2)}}.
        \end{aligned}
        \]
        If $\mathbb{V}(p, \mathnormal{v}_1) \geq  \mathbb{V}(p, \mathnormal{v}_2)$,  then the claim (1) holds. Below we assume $\mathbb{V}(p, \mathnormal{v}_1) <  \mathbb{V}(p, \mathnormal{v}_2)$. In this case, we have that
        \beqnn
        &&c_1\sqrt{\frac{l}{c}}\mathbf{1}\Bigg\{{c}_1\sqrt{\frac{\mathbb{V}(p, \mathnormal{v}_2)l}{c}} \geq {c}_2\frac{v_{\max}l}{c}\Bigg\}\frac{\mathbb{V}(p, \mathnormal{v}_1) - \mathbb{V}(p, \mathnormal{v}_2)}{\sqrt{\mathbb{V}(p, \mathnormal{v}_1)} + \sqrt{\mathbb{V}(p, \mathnormal{v}_2)}}
        \\&&\geq {c}_1\sqrt{\frac{l}{c}}\mathbf{1}\Bigg\{\sqrt{\mathbb{V}(p, \mathnormal{v}_2)}\geq \frac{c_2v_{\max}}{c_1}\sqrt{\frac{l}{c}}\Bigg\}\frac{\mathbb{V}(p, \mathnormal{v}_1) - \mathbb{V}(p, \mathnormal{v}_2)}{ \sqrt{\mathbb{V}(p, \mathnormal{v}_2)}}
        \\&& \geq \frac{c_1^2}{c_2v_{\max}}\mathbf{1}\Bigg\{\sqrt{\mathbb{V}(p, \mathnormal{v}_2)}\geq \frac{c_2v_{\max}}{c_1}\sqrt{\frac{l}{c}}\Bigg\}\Big(\mathbb{V}(p, \mathnormal{v}_1) -\mathbb{V}(p, \mathnormal{v}_2)\Big).
        \eeqnn
        Therefore
        \[
        \begin{aligned}
         \mathcal{T}_{p, c, l}(v_1) - \mathcal{T}_{p, c, l}(v_2) &\geq  p(v_1 - v_2) + \frac{{c}^2_1}{{c}_2v_{\max}}\Big(\mathbb{V}(p, \mathnormal{v}_1) - \mathbb{V}(p, \mathnormal{v}_2)\Big)\\
            &\geq  p(v_1 - v_2) + \frac{{c}^2_1}{{c}_2v_{\max}}\Big(p(v_1 + v_2)(v_1 - v_2) - p(v_1 + v_2)p(v_1 - v_2)\Big)\\
            &\geq  p(v_1 - v_2) - \frac{2{c}^2_1}{{c}_2}p(v_1 - v_2)= p(v_1 - v_2)\bigg(1 - \frac{2{c}^2_1}{{c}_2}\bigg)= 0.
        \end{aligned}
    \]
    The third and the fourth inequalities hold because of $v_1 \geq v_2\geq 0$ and
    the last equation holds since $c_2 = 2c_1^2$.

   It is easy to check that
    \beqnn
    \max\Big\{9\sqrt{\frac{\mathbb{V}(p, \mathnormal{v})l}{c}},  162\frac{v_{\max}l}{c}
    \Big\}\geq\begin{cases} 8\sqrt{\frac{\mathbb{V}(p, \mathnormal{v})l}{c}}+\frac{162}{9}\times\frac{v_{\max}l}{c}= 8\sqrt{\frac{\mathbb{V}(p, \mathnormal{v})l}{c}}+18\frac{lv_{\max}}{c}, &9\sqrt{\frac{\mathbb{V}(p, \mathnormal{v})l}{c}}>162\frac{lv_{\max}}{c},
    \\ \frac{161}{162}\times 9\sqrt{\frac{\mathbb{V}(p, \mathnormal{v})l}{c}}+\frac{lv_{\max}}{c}>8\sqrt{\frac{\mathbb{V}(p, \mathnormal{v})l}{c}}+\frac{lv_{\max}}{c}, &11\sqrt{\frac{\mathbb{V}(p, \mathnormal{v})l}{c}}\leq 242\frac{lv_{\max}}{c}.
    \end{cases}
    \eeqnn
   The second claim holds.
\end{proof}
\begin{proof}[Proof of Lemma \ref{opt}]
     The claim is true when $h = H + 1$ since $V^k_{H + 1}(s) = V^*_{H + 1}(s) = 0$. Now consider $1 \leq h \leq H$. Suppose that the claim holds for $h + 1$ on the good event $\mathcal{E}$. Recall we choose $c_1 = 9, c_2 = 162$, it follows that for any $(s, a) \in \mathcal{D}$ the agent visited and any $k \in [K]$,
    \[
        \begin{aligned}
            \widetilde{Q}_{h}^k(s, a)&=  r_h(s, a) + \widehat{P}^k_hV^{k}_{h + 1}(s, a) + b^k_{h}(s, a)\\
            &= r_h(s, a) + \widehat{P}^k_hV^k_{h+ 1}(s, a) + c_1\sqrt{\frac{\widehat{\V}^k(V_{h + 1}^k, s, a)\eta_4}{C^k_h(s, a)}} + \frac{c_2H\eta_4}{C^k(s, a)} + \sigma D_{\delta}\\
            &\geq r_h(s, a) + \widehat{P}^k_hV^k_{h + 1}(s, a) + \max\Bigg\{{c}_1\sqrt{\frac{\widehat{\V}^k_h(V_{h + 1}^k, s, a)\eta_4}{C^k_h(s, a)}}, \frac{{c}_2H\eta_4}{C^k_h(s, a)}\Bigg\}+ \sigma D_{\delta}\\
            &\geq r_h(s, a) + \widehat{P}^kV^*_{h + 1}(s, a) + \max\Bigg\{{c}_1\sqrt{\frac{\widehat{\V}^k_h(V_{h + 1}^*, s, a)\eta_4}{C^k_h(s, a)}}, \frac{{c}_2H\eta_4}{C^k_h(s, a)}\Bigg\}+ \sigma D_{\delta}\\
            &\geq r_h(s, a) + \widehat{P}_h^kV^*_{h + 1}(s, a) + 8\sqrt{\frac{\widehat{\V}^k_h(V^*_{h+1}, s, a)\eta_1}{C^k_h(s, a)}} + \frac{H\eta_4}{C^k_h(s, a)} + \sigma D_{\delta}\\
            &\geq r_h(s, a) + P_hV^*_{h + 1}(s, a)\\
            &=Q^*_h(s, a),
        \end{aligned}
    \]
    where the first and the second equations are based on the definition of $\widetilde{Q}$ and $b_h^k$, the second inequality follows from the first claim of Lemma \ref{opt.1}
    . The third inequality holds due to the second assertion of Lemma \ref{opt.1}, along with the definitions of $\eta_1$ and $\eta_4$. The last inequality is from the definition of $\mathcal{E}$ and Lemma \ref{goe}. It follows that for all $(s, a, k, h, l) \in \mathcal{S} \times \mathcal{A} \times [K] \times [H] \times [k - 1]$,
    \[
        \begin{aligned}
            \widetilde{Q}_h^k(s_{h}^l, a_h^l) + L_h\rho[(s, a), (s_{h}^l, a_h^l)] &\geq Q^*_h(s_{h}^l, a_h^l) + L_h\rho[(s, a), (s_{h}^l, a_h^l)]\geq Q^*_h(s, a).
        \end{aligned}
    \]
    Therefore, $Q^k_h(s, a) \geq Q^*_{h}(s, a)$, which implies $V^k_h(s) \geq V^*_{h}(s)$.
\end{proof}

\section{Concentration}

\label{app.con}

The theoretical analysis of the regret requires to bound $\big(\widehat{P}^k_h - P_h\big)f(s, a)$ for some bounded Lipschitz function $f$. The parameters $L$ and $F$ are flexible in the following lemma
\begin{lemma}
    \label{concentration}
    The following event $\mathcal{E}_{L, F}$ holds with probability at least $1 -2|\mathcal{C}_{\sigma^3/(D_{L,F,\sigma}KHF^2)}|HK(\lfloor \log_2 (HK)\rfloor+3)\delta$. Here
    \begin{align*}
        \mathcal{E}_{L, F} := \Bigg\{\big|\big(\widehat{P}^k_h - P_h\big)f(s, a)\big|
        &\leq 2\sqrt{\frac{\V_h(f, s, a)D_{L,F,\sigma}}{C^k_h(s, a)}}+\frac{2FD_{L,F,\sigma}}{C^k_h(s, a)}+ \sigma\widetilde{D}_{L, F},
        \\&\qquad\qquad\qquad\qquad\text{for all}\ (s, a, k) \in \mathcal{S} \times \mathcal{A} \times [K], f \in \mathcal{F}_{L, F}\Bigg\},
    \end{align*}

where $$\mathcal{F}_{L, F}= \{f : \mathcal{S} \to [0, F]\; \text{is}\; L \text{-Lipschitz w.r.t.}\ \rho_{\mathcal{S}}\},$$
and $\widetilde{D}_{L, F} = D_{L,F}+12L, D_{L,F,\sigma}=\eta_5+|\widetilde{\mathcal{C}}_{\sigma}|\log(\frac{2F}{L\sigma})$ are two constants. The definition of $\eta_5$ and $D_{L,F}$ can be found in Lemma \ref{con.1}
\end{lemma}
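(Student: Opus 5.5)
The plan is to follow the two-step template of Lemma \ref{goe}: first prove a pointwise bound for a fixed $(s,a)$ and a fixed function $f$, then extend it to all $(s,a)$ and all $f\in\mathcal{F}_{L,F}$ by covering arguments. The new feature is uniformity over the function class. This is what produces the entropy term $|\widetilde{\mathcal{C}}_{\sigma}|\log\frac{2F}{L\sigma}$ inside $D_{L,F,\sigma}$, and it also explains why the bound uses the \emph{true} variance $\V_h(f,s,a)$ (Lemma \ref{aux.2}) rather than the empirical one.

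\textbf{Step 1 (fixed $(s,a)$, fixed $f$).} As in (\ref{app.goe.1})--(\ref{app.goe.2}), I will split
\[
\bigl(\widehat{P}^k_h-P_h\bigr)f(s,a)=\frac{1}{C^k_h(s,a)}\sum_{l<k}W_l+\sum_{l<k}\widetilde{w}^{k,l}_h(s,a)\bigl(P_hf(s_h^l,a_h^l)-P_hf(s,a)\bigr)-\frac{\beta P_hf(s,a)}{C^k_h(s,a)},
\]
where $W_l=w_h^l(s,a)\bigl(f(s_{h+1}^l)-P_hf(s_h^l,a_h^l)\bigr)$ is a martingale difference sequence with respect to $\{\mathcal{F}_h^l\}_l$ and satisfies $|W_l|\le C_1F$.

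\begin{itemize}
\item The bias term is at most $\lambda_pL\,\Delta^k_h(s,a)\le \lambda_pL\sigma\gamma$, using Assumption \ref{assu.2} and (\ref{lem7DK}).
\item The regularization term is at most $F\beta/C^k_h$.
\item For the martingale term I apply Lemma \ref{aux.2}, which gives the conditional variance
\[
\sum_l\E[W_l^2\mid\mathcal{F}]\le C_1\sum_l w_h^l(s,a)\,\V_h(f,s_h^l,a_h^l).
\]
\end{itemize}

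Next I compare $\V_h(f,s_h^l,a_h^l)$ with $\V_h(f,s,a)$. Write $\V_h(f,\cdot)=P_hf^2-(P_hf)^2$. The function $f^2$ is $2FL$-Lipschitz, so by the Wasserstein assumption the difference is at most $4FL\lambda_p\,\rho$. After weighting, this contributes a term of order $\sqrt{FL\lambda_p\sigma\gamma\,\eta/C^k_h}$. Splitting it via AM--GM into $\sigma L$ and $F\eta/C^k_h$ pieces yields the extra $O(L)\sigma$ and $F/C^k_h$ contributions. This comparison is what the $12L$ in $\widetilde D_{L,F}$ and the constants in $D_{L,F}$ are meant to absorb.

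The result is a bound of the form
\[
2\sqrt{\frac{\V_h(f,s,a)\,\eta'}{C^k_h}}+\frac{2F\eta'}{C^k_h}+\sigma\,\cdot\,\mathrm{const},
\]
holding with probability $1-(\lfloor\log_2 HK\rfloor+3)\delta'$ simultaneously for all $k$ and $h$.

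\textbf{Step 2 (uniformity in $f$).} I will build an $L\sigma$-net of $\mathcal{F}_{L,F}$ in sup norm. Functions that are $L$-Lipschitz with values in $[0,F]$ are determined up to $O(L\sigma)$ by their values on a $\sigma$-cover $\widetilde{\mathcal{C}}_\sigma$ of $\mathcal{S}$, discretized at resolution $L\sigma$. This gives a net of log-cardinality about $|\widetilde{\mathcal{C}}_\sigma|\log\frac{2F}{L\sigma}$. I then apply Step 1 with $\delta'=\delta\exp(-|\widetilde{\mathcal{C}}_\sigma|\log\frac{2F}{L\sigma})$, so that $\log(2/\delta')$ becomes $D_{L,F,\sigma}$ (with $\eta_5$ absorbing the constants).

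To pass from a net element $\tilde f$ to $f$, I need to control three differences, each by a small multiple of $L\sigma$:
\begin{itemize}
\item $|(\widehat P-P)(f-\tilde f)|\le 2\|f-\tilde f\|_\infty$;
\item $|\sqrt{\V_h(f)}-\sqrt{\V_h(\tilde f)}|\le 2\|f-\tilde f\|_\infty$;
\item the resulting cross term $\sqrt{\eta/C}\cdot L\sigma$, which I bound by AM--GM.
\end{itemize}
Together these produce the additional $O(L)\sigma$ in $\widetilde D_{L,F}$.

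\textbf{Step 3 (uniformity in $(s,a)$).} I repeat Step 2 of Lemma \ref{goe} with the finer cover $\mathcal{C}_{\sigma^3/(D_{L,F,\sigma}KHF^2)}$. The Lipschitz estimates (\ref{app.aux.1})--(\ref{app.aux.2}) of $w$ and $1/C^k_h$ in $(s,a)$ control the changes in $\widehat P^k_h f$ and $1/C^k_h$. The change in $\sqrt{\V_h(f,\cdot)/C^k_h}$ is handled as in (\ref{app.goe.9})--(\ref{app.goe.12}), using that $\V_h(f,\cdot)$ is Lipschitz by Assumption \ref{assu.2}. The cover radius is chosen exactly so that these perturbations are $O(\sigma)$ times the constants appearing in $D_{L,F}$, which contains terms like $\frac{C_2\sigma}{F^2\beta^2}$ and $\lambda_pL\frac{\sigma^2}{KF^2}$. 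A final union bound over the cover and over $h$ gives the stated probability.

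\textbf{Main obstacle.} I expect the hardest step to be the variance comparison in Step 1, which replaces the data-dependent $\sum_l w_h^l\V_h(f,s_h^l,a_h^l)$ by $C^k_h\,\V_h(f,s,a)$. The difficulty is that the error must land only in the $\sigma$ and $F/C$ terms, without an extra $\sqrt{F}$ dependence. If the AM--GM split does not give exactly the claimed constants, I will fall back to bounding this error through $\Delta^k_h\le\sigma\gamma$ and absorbing it into $D_{L,F}$.
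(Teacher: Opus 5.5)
Your proposal is correct and essentially follows the paper's route. The paper first proves the fixed-$f$ bound uniformly in $(s,a)$ as Lemma \ref{con.1}, using Lemma \ref{aux.2}, the $4\lambda_pLF\rho$ Lipschitz comparison of $\V_h(f,\cdot)$ with AM--GM, and the $\sigma^3/(\cdot)$ cover; it then unions over a $4L\sigma$ sup-norm net of $\mathcal{F}_{L,F}$ of size $(2F/(L\sigma))^{|\widetilde{\mathcal{C}}_\sigma|}$ from Lemma \ref{aux.6} and passes to arbitrary $f$ exactly as in your Step 2, so your reordering of the two covering steps is cosmetic. One bookkeeping point: Lemma \ref{aux.2} holds only for a fixed $n$, so getting your Step 1 simultaneously over all $k$ (not just $h$) needs the union bound that produces the $HK$ factor in the stated probability.
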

The proof of Lemma \ref{concentration} relies the following lemma.

\begin{lemma}
\label{con.1}
Let $f : \mathcal{S} \to [0, F]$ be any given function with Lipschitz constant $L$, where $F$ is an arbitrary given positive constant. The following event $\mathcal{E}_f$ holds with probability at least $1 - 2|\mathcal{C}_{\sigma^3/(\log(2/\delta)KHF^2)}|HK(\lfloor\log_2 (HK)+3)\delta$.
\[
    \begin{aligned}
        \mathcal{E}_{f} = \Bigg\{\big|\big(\widehat{P}^k_h - P_h\big)f(s, a)\big| \leq 2\sqrt{\frac{\V(f, s, a)\eta_1}{C^k(s, a)}} + \frac{F\eta_5}{C^k(s, a)} + \sigma{D}_{L, F}, \forall (s, a, k, h)\Bigg\},
    \end{aligned}
\]
 where
\[
    \begin{aligned}
        \eta_5 &= (2C_1 + 4)\eta + \beta;\\
        D_{L, F} &= 2\sqrt{\frac{4C_1\lambda_pL\sigma}{\beta KF^2} + \frac{C_1C_2}{\beta^2}} + (\eta_5F + 3)\frac{C_2\sigma}{F^2\beta^2} + \lambda_pL\frac{\sigma^2}{KHF^2} + 8\lambda_pL\gamma.
    \end{aligned}
\]
\end{lemma}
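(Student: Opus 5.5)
The plan follows the two-step template used for Lemma \ref{goe}. The one structural change is that we now want a \emph{population}-variance Bernstein bound. So we use Lemma \ref{aux.2} (Freedman with conditional variances) instead of the empirical version Theorem \ref{aux.4}.

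\textbf{Step 1: fixed $(s,a)$ and fixed $f$.} Start exactly as in (\ref{app.goe.1})--(\ref{app.goe.2}), with $V_{h+1}$ replaced by $f$. This gives
\[
\big|(\widehat P^k_h-P_h)f(s,a)\big|\le \frac{1}{C^k_h(s,a)}\Big|\sum_{l<k}W_l\Big| + \lambda_p L\,\Delta^k_h(s,a)+\frac{F\beta}{C^k_h(s,a)},
\]
where $W_l=w^l_h(s,a)\big(f(s^l_{h+1})-P_hf(s^l_h,a^l_h)\big)$. This is a martingale difference sequence with respect to $\{\mathcal F^l_h\}_l$, and $|W_l|\le C_1F$. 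Its conditional variance is $w^l_h(s,a)^2\,\V_h(f,s^l_h,a^l_h)\le C_1 w^l_h(s,a)\V_h(f,s^l_h,a^l_h)$.

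Apply Lemma \ref{aux.2} uniformly over $(h,k)$ by a union bound; this accounts for the $HK(\lfloor\log_2(HK)\rfloor+3)$ factor. The result is
\[
\Big|\sum_{l<k}W_l\Big|\le 2\sqrt{C_1\eta\textstyle\sum_l w^l_h(s,a)\V_h(f,s^l_h,a^l_h)}+C_1F(\eta+\sqrt\eta).
\]
Next, move the variance from $(s^l_h,a^l_h)$ to $(s,a)$. Write $\V_h(f,\cdot)=P_hf^2-(P_hf)^2$. Since $f^2$ is $2FL$-Lipschitz and $P_hf$ is $\lambda_pL$-Lipschitz (Assumption \ref{assu.2}), we get $|\V_h(f,s^l_h,a^l_h)-\V_h(f,s,a)|\le 4\lambda_pFL\,\rho[(s,a),(s^l_h,a^l_h)]$. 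Hence
\[
\sum_l w^l_h(s,a)\V_h(f,s^l_h,a^l_h)\le C^k_h(s,a)\big(\V_h(f,s,a)+4\lambda_pFL\Delta^k_h(s,a)\big).
\]
Divide by $C^k_h(s,a)$ and use $\sqrt{x+y}\le\sqrt x+\sqrt y$. Bound $\Delta^k_h(s,a)\le\sigma\gamma$ by (\ref{lem7DK}). The cross term $\sqrt{\lambda_pFL\sigma\gamma/C^k_h}$ is handled by AM--GM: it splits into $F/C^k_h$, absorbed into $F\eta_5/C^k_h$, plus an $O(\lambda_pL\sigma\gamma)$ term absorbed into $8\lambda_pL\gamma\sigma$. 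This gives the claimed bound at a fixed $(s,a)$ with constants $\eta_1,\eta_5$ and a $\sigma\cdot 8\lambda_pL\gamma$ remainder.

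\textbf{Step 2: extension to all $(s,a)$ by covering.} Take a $\sigma'$-cover with $\sigma'=\sigma^3/(\eta KHF^2)$, union bound over it, and for arbitrary $(s,a)$ pass to the nearest cover point $(s',a')$. Then control the perturbation of each term using the Lipschitz-in-$(s,a)$ bounds (\ref{app.aux.1})--(\ref{app.aux.2}) for $\widehat P^k_h$ and $1/C^k_h$, which are of order $C_2k\rho/(\beta^2\sigma)$.
\begin{itemize}
\item $\widehat P^k_hf$ moves by at most $C_2FK\sigma'/(\beta^2\sigma)$, and $P_hf$ moves by at most $\lambda_pL\sigma'$.
\item $1/C^k_h$ moves by at most $C_2K\sigma'/(\beta^2\sigma)$.
\item For the square-root term, use $|\sqrt{x}-\sqrt{y}|\le\sqrt{|x-y|}$. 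With $C^k_h\ge\beta$ and $|\V_h(f,s,a)-\V_h(f,s',a')|\le 4\lambda_pFL\sigma'$, it moves by at most $\sqrt{\eta_1}\sqrt{4\lambda_pFL\sigma'/\beta+F^2C_2K\sigma'/(\beta^2\sigma)}$.
\end{itemize}
Substituting $\sigma'$ makes each perturbation $O(\sigma)$ times the constants in $D_{L,F}$. This is where the terms $2\sqrt{4C_1\lambda_pL\sigma/(\beta KF^2)+C_1C_2/\beta^2}$ and $(\eta_5F+3)C_2\sigma/(F^2\beta^2)$ come from.

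\textbf{Main obstacle.} The hard part is Step 2, specifically the square-root variance term. Unlike Lemma \ref{goe}, the variance here is the true $\V_h$, so its continuity needs the Lipschitz property of $P_h f^2$ rather than the kernel-weight bounds. The bookkeeping must also come out exactly as $\sigma D_{L,F}$ with the stated constants; if some constant does not match, I would adjust the cover radius by a constant factor. A secondary subtlety is that $P_h$ is only Wasserstein-Lipschitz, so $P_hf^2$ is Lipschitz only because $f^2$ is $2FL$-Lipschitz; I would check this explicitly.
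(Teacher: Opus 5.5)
Your proposal is correct and takes essentially the same route as the paper's proof. The paper also starts from the decomposition (\ref{app.goe.1})--(\ref{app.goe.2}) and applies the conditional-variance Bernstein bound (Lemma \ref{aux.2}, not the empirical Theorem \ref{aux.4}) at each fixed $(s,a)$. It then moves $\V_h(f,\cdot)$ from $(s_h^l,a_h^l)$ to $(s,a)$ using the $4\lambda_pLF\rho$ Lipschitz bound, handles the cross term by AM--GM, and finishes with a cover at scale $\sigma^3/(\eta KF^2)$ (up to the factor $H$), using (\ref{app.aux.1})--(\ref{app.aux.2}) and $|\sqrt{x}-\sqrt{y}|\le\sqrt{|x-y|}$. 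The only differences are in constant bookkeeping, e.g.\ whether $w^2\le C_1 w$ is applied before or after moving the variance; the paper's own constants are equally loose, so nothing substantive is missing.
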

\begin{proof}
    Similar to the proof of Lemma \ref{goe}, (see equation (\ref{app.goe.1}) and(\ref{app.goe.2}) and their proofs), we have
    \begin{align}
    \big|\big(\widehat{P}_h^k - P_h\big)f(s, a)\big| \leq \bigg|\sum_{l = 1}^{k - 1}\frac{{\mathnormal{w}}_{h}^{l}(s, a)}{C^k_h(s,a)}\big(f(s^l_{h + 1}) - P_hf(s_{h}^l, a_h^l)\big)\bigg| + \frac{F\beta}{C^k_h(s, a)} + \lambda_pL\Delta^k_h(s, a)\label{app.con.1}.
\end{align}
By the similar discussion leading to (\ref{app.goe.3}), for any given $(s, a, h, k) \in \mathcal{S} \times \mathcal{A} \times
[H] \times [K]$ with probability $1 - 2 HK(\lfloor\log_2 (HK)+3)\delta,$
\begin{align}
    \Bigg|\sum_{l = 1}^{k - 1}{\mathnormal{w}}_{h}^{l}(s, a)\big(f(s^l_{h + 1}) - P_hf(s_{h}^l, a_h^l)\big)\Bigg|
    \leq2\sqrt{\eta\sum_{l = 1}^{k - 1}[w_h^l(s, a)]^2\V_h(f, s_{h}^l, a_h^l)} + 2C_1F\eta.\label{app.con.2}
\end{align}
Since $f\in [0, F]$ is Lipschitz with constant $L$, for any $(s, a), (s', a') \in \mathcal{S} \times \mathcal{A}$,
   \beqlb\label{app.con.3}
            &&|\V_h(f, s', a') - \V_h(f, s, a)|\nonumber\\
            &&\leq |P_hf^2(s', a') - P_hf^2(s, a)| +  \big|[P_hf(s, a)]^2 - [P_hf(s', a')]^2\Big|\nonumber\\
            &&\leq |P_hf^2(s', a') - P_hf^2(s, a)| +  2F|P_hf(s, a) - P_hf(s', a')|\nonumber\\
            &&\leq 4\lambda_pLF\rho\big[(s', a'), (s, a)\big].
    \eeqlb
Therefore, using (\ref{lem7DK}) and noting that $w_t^l(s, a)\in[0, C_1]$, we have that
\beqlb
\label{app.con.4}
    \big|\big(\widehat{P}^k_h - P_h\big)f(s, a)\big| &\leq & 2\sqrt{\frac{\sum_{l = 1}^{k - 1}[w_h^l(s, a)]^2\V_h(f, s, a)\eta}{[C^k_h(s, a)]^2}}+ \frac{2C_1F\eta + F\beta}{C^k_h(s, a)} +  4\lambda_p L\gamma\sigma\nonumber\\
    &&+ 4\sqrt{\frac{\lambda_pLF\sum_{l = 1}^{k - 1}[w_h^l(s, a)]^2\rho\big[(s, a), (s_h^k, a_h^k)\big]\eta}{[C^k_h(s, a)]^2}}\nonumber\\
    &\leq &2\sqrt{\frac{\V_h(f, s, a)\eta_1}{C^k_h(s, a)}} + 8\sqrt{\frac{\sigma\lambda_p L \gamma F\eta}{C^k_h(s, a)}} + \frac{2C_1F\eta + F\beta}{C^k_h(s, a)} +  4\lambda_p L\gamma\sigma\nonumber\\
    &\leq  &2\sqrt{\frac{\V_h(f, s, a)\eta_1}{C^k_h(s, a)}} + \frac{(2C_1 + 4)F\eta + F\beta}{C^k_h(s, a)} +  8\lambda_p L\gamma\sigma,
\eeqlb
where we use the inequality $2\sqrt{ab} \leq a + b$  in the last line.

The remainder is to extend (\ref{app.con.4}) to all
$(s, a) \in \mathcal{S} \times \mathcal{A}$ by the covering arguments which are similar to Step 2 in the proof of Lemma \ref{goe}. We
remind that
\begin{align}
    \label{app.con.5}
    |\V(f, s, a) - \V(f, s', a')| \leq 4\lambda_pLF\rho\big[(s', a'), (s, a)\big].
\end{align}
Following the same arguments leading to (\ref{app.goe.13}), note that $\V(f, s, a) \leq F^2$, we have that for each $(s, a) \in \mathcal{S} \times \mathcal{A}$,
\begin{align}
    &\sum_{(s_i, a_i) \in \mathcal{C}_{\sigma''}}\Bigg\{2\Bigg[\sqrt{\frac{\V_h(f, s_i, a_i)\eta_1}{C^k_h(s_i, a_i)}} - \sqrt{\frac{\V_h(f, s, a)\eta_1}{C^k_h(s, a)}}\Bigg]\notag\\
    &\qquad\qquad\qquad\qquad\qquad+ \Bigg[\frac{(2C_1 + 4)F\eta + F\beta}{C^k_h(s_i, a_i)} - \frac{(2C_1 + 4)F\eta + F\beta}{C^k_h(s, a)}\Bigg]\Bigg\}\mathbf{1}_{(s', a') = (s_i, a_i)}\notag\\
    \leq &2\sqrt{\frac{4C_1\lambda_pL\sigma^3}{\beta KF^2} + \frac{C_1C_2\sigma^2}{\beta^2}} + \big((2C_1 + 4)\eta + \beta\big)\frac{C_2\sigma^2}{\beta^2},\label{app.con.6}
\end{align}
where we use inequality (\ref{app.goe.11}), $\log(2/\delta) \geq 1$,
 $\sigma'' = \frac{\sigma^3}{\log(2/\delta)KF^2}$. In addition, similar to
(\ref{app.goe.14}), we have that
\begin{align*}
    \big|\big(\widehat{P}^k_h - P_h\big)\big( f(s, a) - f(s', a')\big)\big| &\leq  \big|\widehat{P}^k_h\big( f(s, a) - f(s', a')\big)\big| +\big|P_h\big( f(s, a) - f(s', a')\big)\big|\notag\\
    &\leq 2C_2\frac{\sigma^2}{F\beta^2} + \lambda_pL\frac{\sigma^3}{KF^2}.
\end{align*}
Summing up, we get the desired conclusion.
\end{proof}
\begin{proof}[Proof of Lemma \ref{concentration}]
According to Lemma \ref{aux.6}, the $4L\sigma$-covering number of $\mathcal{F}_{L, F}$ is bounded by $(2F/(L\sigma))^{|\widetilde{\mathcal{C}}_{\sigma}|}$, where $\widetilde{\mathcal{C}}_{\sigma} = \mathcal{N}(\sigma, \mathcal{S}, \rho)$. For convenience, denote the corresponding $4L\sigma$-covering set of $\mathcal{F}_{L, F}$ by $\widetilde{\mathcal{C}}$. From Lemma \ref{con.1}, we know that with probability at least $1 - 2|\mathcal{C}_{\sigma^3/(D_{L,F,\sigma}KF^2)}|HK(\lfloor\log_2 (HK)+3)\delta$,
\[
    \begin{aligned}
        \big|\big(\widehat{P}^k_h - P_h\big)f(s, a)\big| \leq 2\sqrt{\frac{\V_h(f, s, a)D_{L,F,\sigma}}{C^k_h(s, a)}} + \frac{FD_{L,F,\sigma}}{C^k_h(s, a)} + \sigma{D}_{L, F}
    \end{aligned}
\]
holds for all $(s, a, k, h) \in \mathcal{S} \times \mathcal{A} \times [K] \times [H]$ and all $f \in \widetilde{\mathcal{C}}$, where $D_{L,F,\sigma} = \eta_5 + |\widetilde{\mathcal{C}}_{\sigma}|\log\frac{2F}{L\sigma}$.\\
For any $L$-Lipschitz function $\mathcal{S} \to [0, F]$, there exists a function $f' \in \widetilde{\mathcal{C}}$ such that $\Vert f - f'\Vert_{\infty} \leq 4L\sigma$, it is easy to see that
\begin{align*}
    \big|\big(\widehat{P}^k_h - P_h\big)f(s, a)\big| \leq \big|\big(\widehat{P}^k_h - P_h\big)f'(s, a)\big| + 2\Vert f - f'\Vert_{\infty}.
\end{align*}
Therefore, in the event $\mathcal{E}_{L, F}$, by the covering argument,
\[
    \begin{aligned}
        \big|\big(\widehat{P}^k_h - P_h\big)f\big| &\leq 2\sqrt{\frac{\V_h(f', s, a)D_{L,F,\sigma}}{C^k_h(s, a)}} + \frac{FD_{L,F,\sigma}}{C^k(s, a)} + \sigma{D}_{L, F} + 8L\sigma\\
        &\leq 2\sqrt{\frac{\V_h(f, s, a)D_{L,F,\sigma}}{C^k_h(s, a)}} + + \frac{2FD_{L,F,\sigma}}{C^k_h(s, a)} + \sigma{D}_{L, F} + 12L\sigma.
    \end{aligned}
\]
Taking $\widetilde{D}_{L, F} = D_{L, F} + 12L$, the result follows.
\end{proof}

\section{Proof of Theorem \ref{thm.1}}
\label{pot}

\subsection{Bounding the Decomposed Terms}

Before presenting the formal proof of Theorem~\ref{thm.1}, we state two key lemmas that control each term in the regret decomposition~\eqref{regret}.

\begin{lemma}
    \label{reb.1}
    For any $\delta > 0$, with probability at least $1 - \delta$,
    \[
        |I_1| \leq \sqrt{2H^2 K\eta}.
    \]
\end{lemma}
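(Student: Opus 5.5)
The plan is to recognize $I_1$ as a sum of bounded martingale differences indexed by episodes and apply Azuma--Hoeffding. For each $k\in[K]$ set
\[
    Z_k := \sum_{h=1}^H r_h(s_h^k,a_h^k) - V_1^{\pi^k}(s_1^k),
\]
so that $I_1=\sum_{k=1}^K Z_k$. Let $\mathcal{G}_{k}$ denote the $\sigma$-algebra generated by all trajectories $\tau^1,\dots,\tau^{k}$ together with the initial state $s_1^{k+1}$.

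The first step is to verify the martingale-difference property. The policy $\pi^k$ is greedy with respect to $Q_h^k$, and $Q_h^k$ is computed by Algorithm~\ref{alg.2} from data of episodes $1,\dots,k-1$ only. Hence $\pi^k$ is $\mathcal{G}_{k-1}$-measurable, and so is $s_1^k$. Given $\mathcal{G}_{k-1}$, the trajectory $\tau^k$ is generated by running the fixed deterministic policy $\pi^k$ in $\mathcal{M}$ from $s_1^k$. By the definition of the value function,
\[
    \E\Big[\sum_{h=1}^H r_h(s_h^k,a_h^k)\,\Big|\,\mathcal{G}_{k-1}\Big]=V_1^{\pi^k}(s_1^k),
\]
so $\E[Z_k\mid\mathcal{G}_{k-1}]=0$. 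Also, $Z_k$ is $\mathcal{G}_k$-measurable.

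The second step is the range bound. By \eqref{assu.1}, $0\le \sum_{h=1}^H r_h(s_h^k,a_h^k)\le H$ almost surely under every policy, and taking expectations gives $0\le V_1^{\pi^k}(s_1^k)\le H$. Conditionally on $\mathcal{G}_{k-1}$, $Z_k$ is therefore a shifted copy of a variable lying in an interval of length $H$, and in particular $|Z_k|\le H$. This is the only place the weaker reward assumption enters, and it is exactly what makes the per-episode range $H$ rather than depending on individual $r_h$.

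The last step applies the Azuma--Hoeffding inequality to $\{Z_k\}$ with $|Z_k|\le H$:
\[
    \pr\Big(\Big|\sum_{k=1}^K Z_k\Big|\ge t\Big)\le 2\exp\Big(-\frac{t^2}{2KH^2}\Big).
\]
Setting $t=\sqrt{2H^2K\log(2/\delta)}=\sqrt{2H^2K\eta}$ makes the right-hand side equal to $\delta$, which is the claim.

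No step is a real obstacle. The only point needing care is the measurability argument: $\pi^k$ and $s_1^k$ must be fixed before episode $k$ begins, so the episode-level filtration, not the step-level $\mathcal{F}_h^k$, is the right one. If initial states were adversarial, one would include $s_1^k$ in $\mathcal{G}_{k-1}$ as done above, and the argument is unchanged.
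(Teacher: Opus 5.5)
Your proposal is correct and essentially identical to the paper's proof. The paper uses the same episode-level filtration (previous trajectories together with $s_1^k$), notes that the per-episode terms form a martingale difference sequence with $0\le\sum_h r_h\le H$, and applies Hoeffding's inequality for martingales (Lemma~\ref{aux.1}) with $c=H$ and $n=K$ to obtain $\sqrt{2H^2K\eta}$.
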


\begin{lemma}
    \label{reb.2}
    With probability at least
    \(
        1 - \big(3 + 8\,|\mathcal{C}_{\sigma^3/(D_{L_1,F,\sigma}\, KH^4)}|\big)\,H^2 K\,(\lfloor \log_2(HK)\rfloor + 3)\,\delta,
    \)
    the following bounds hold simultaneously:
    \[
        I_2 \leq \widetilde{O}\!\left(\sqrt{H^3 K} + \sqrt{H^2\,|\mathcal{C}_{\sigma}|\,|\widetilde{\mathcal{C}}_{\sigma}|} + \sqrt{HK\sigma}\right),
    \]
    \[
        I_3 \leq \widetilde{O}\!\left(\sqrt{H^3 K\,|\mathcal{C}_{\sigma}|} + H^2\,|\mathcal{C}_{\sigma}|\,|\widetilde{\mathcal{C}}_{\sigma}| + HK\sigma\right).
    \]
\end{lemma}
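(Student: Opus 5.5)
The two terms are handled separately, since $I_2$ is a pure martingale term and $I_3$ carries all of the estimation error. Throughout I work on the intersection of $\mathcal{E}$ (Lemma \ref{goe}), the event $\mathcal{E}_{L,F}$ of Lemma \ref{concentration} with $L=2L_1$ and $F=H$, and the events produced by the martingale inequalities below. A union bound over these events is what produces the stated probability $1-(3+8|\mathcal{C}_{\sigma^3/(D_{L_1,F,\sigma}KH^4)}|)H^2K(\lfloor\log_2(HK)\rfloor+3)\delta$.

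\textbf{Bounding $I_2$.} The value function $V^k_{h+1}$ is determined by the data of episodes $1,\dots,k-1$. Hence, ordering the pairs $(k,h)$ lexicographically, $X_{k,h}=P_hV^k_{h+1}(s_h^k,a_h^k)-V^k_{h+1}(s^k_{h+1})$ is a martingale difference sequence with $|X_{k,h}|\le H$. I would apply Theorem \ref{aux.4} (or Freedman's inequality through Lemma \ref{aux.2}) with $n=HK$. The crude bound $\sum_{k,h}X_{k,h}^2\le H^3K$ then gives $I_2\le 2\sqrt{H^3K\eta}+7H\eta$, which is already inside the claimed $\widetilde O(\sqrt{H^3K}+\cdots)$. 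The remaining terms in the statement only appear if one refines the variance through $V^k-V^{\pi^k}$, and the crude bound does not need them.

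\textbf{Bounding $I_3$: reduction to a per-step error.} The optimal action $a_h^k$ attains $\max_a Q^k_h(s_h^k,\cdot)$. Let $l$ be the past index nearest to $(s_h^k,a_h^k)$. Then
\[
V^k_h(s_h^k)\le \widetilde Q^k_h(s_h^l,a_h^l)+L_1\rho[(s_h^k,a_h^k),(s_h^l,a_h^l)].
\]
I split episodes according to this distance.

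\emph{Far episodes.} If the nearest past point is farther than $2\sigma$, I bound the summand by $H$. Each $\sigma$-ball of $\mathcal{C}_\sigma$ can host at most one such episode per $h$, so these episodes contribute at most $H^2|\mathcal{C}_\sigma|$.

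\emph{Near episodes.} Here the Lipschitz bounds (\ref{app.goe.10})--(\ref{app.goe.11}) let me move $\widetilde Q$, $b$ and $C$ from $(s_h^l,a_h^l)$ back to $(s_h^k,a_h^k)$ at cost $O(\sigma)$. This leaves
\[
(\widehat P^k_h-P_h)V^*_{h+1}+(\widehat P^k_h-P_h)(V^k_{h+1}-V^*_{h+1})+b^k_h+O(L_1\sigma)
\]
evaluated at $(s_h^k,a_h^k)$. The first piece is controlled by $\mathcal{E}$ and is dominated by $b_h^k$. For the second piece I use $\mathcal{E}_{2L_1,H}$ with $f=V^k_{h+1}-V^*_{h+1}\ge 0$, which is justified by Lemma \ref{opt}. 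This gives
\[
2\sqrt{\V_h(f)D/C^k_h}+2HD/C^k_h+\sigma\widetilde D .
\]
Using $\V_h(f)\le H\,P_hf$ and $2\sqrt{xy}\le x/H+Hy$, this is at most
\[
\tfrac1H P_h(V^k_{h+1}-V^*_{h+1})+O(H D/C^k_h)+\sigma\widetilde D .
\]
Since $V^*_{h+1}\ge V^{\pi^k}_{h+1}$, the $\tfrac1H$ term is absorbed by unrolling the recursion over $h$. The factor $(1+1/H)^H\le e$ only costs a constant, and the extra martingale terms are of the $I_2$ type.

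\textbf{Summing the bonuses.} For the bonus I must convert $\widehat{\V}^k_h(V^k_{h+1})$ into $\V_h(V^{\pi^k}_{h+1})$. The plan is to use concentration from Lemma \ref{concentration} applied to $f^2$, plus $\V(V^k)\le 2\V(V^{\pi^k})+2H\,P(V^k-V^{\pi^k})$; the second part is again absorbed as above. Then Cauchy--Schwarz gives
\[
\sum_{k,h}\sqrt{\V_h(V^{\pi^k}_{h+1})/C^k_h}\le\sqrt{\textstyle\sum\V}\,\sqrt{\textstyle\sum 1/C}.
\]
For the first factor, the law of total variance together with $\sum_h r_h\le H$ gives $\sum_h\V_h(V^{\pi^k}_{h+1})\lesssim H^2$ per episode, up to a martingale correction, so $\sum\V\lesssim H^2K$. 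For the second factor, the standard covering/pigeonhole bound (as in \citet{Domingues2020KernelBasedRL}, using $g(4)>0$ and $C_3$) gives $\sum_k 1/C^k_h\lesssim|\mathcal{C}_\sigma|\log K$ for each $h$. Together these yield $\sqrt{H^3K|\mathcal{C}_\sigma|}$. The $H D_{L_1,H,\sigma}/C$ terms sum to $H^2|\mathcal{C}_\sigma||\widetilde{\mathcal C}_\sigma|$, because $D$ contains $|\widetilde{\mathcal C}_\sigma|$. All $\sigma$-terms sum to $HK\sigma$.

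The main obstacle is the variance conversion step. One must pass from the empirical variance of the \emph{data-dependent} $V^k_{h+1}$ to the true variance of $V^{\pi^k}_{h+1}$ without losing a factor of $H$, while keeping every cross term of the form $\tfrac1H P_h(V^k-V^{\pi^k})$ so that the recursive absorption still closes.
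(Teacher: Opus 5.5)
Your treatment of $I_2$ is correct: a direct Azuma/Theorem~\ref{aux.4} bound $\sqrt{2H^3K\eta}$ already meets the stated $\widetilde O(\sqrt{H^3K})$, and it is simpler than the paper's route through $I_4$. Your route to the leading $\sqrt{H^3K|\mathcal{C}_\sigma|}$ term of $I_3$ is also plausible.

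\textbf{The gap.} It is in the lower-order term of $I_3$, which is exactly where this lemma claims to improve on Kernel-UCBVI. The inequality you state is off by a factor of $H$. From $\V_h(f)\le HP_hf$ and $2\sqrt{xy}\le x/H+Hy$ with $x=P_hf$ and $y=HD/C^k_h$ you get
\[
2\sqrt{\V_h(f)D/C^k_h}\le \tfrac1H P_hf+H^2D/C^k_h ,
\]
not $\tfrac1H P_hf+O(HD/C^k_h)$. No per-step split can do better, because the coefficient of $P_hf$ must be $O(1/H)$ for your $(1+1/H)^H$ unrolling to close. Summing against $\sum_{k,h}1/C^k_h\le C_3H|\mathcal{C}_\sigma|$ then gives $\widetilde O(H^3|\mathcal{C}_\sigma||\widetilde{\mathcal{C}}_\sigma|)$. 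That is precisely the Kernel-UCBVI lower-order term, not the claimed $H^2|\mathcal{C}_\sigma||\widetilde{\mathcal{C}}_\sigma|$.

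Your bonus conversion suffers the same loss. Absorbing $9\sqrt{2HP_h(V^k_{h+1}-V^{\pi^k}_{h+1})\eta_4/C^k_h}$ at rate $1/H$ leaves $O(H^2\eta_4/C^k_h)$. This sums to $\widetilde O(H^3|\mathcal{C}_\sigma|)$, which is not dominated by $H^2|\mathcal{C}_\sigma||\widetilde{\mathcal{C}}_\sigma|$ unless $|\widetilde{\mathcal{C}}_\sigma|\gtrsim H$. The root cause is that $H\sum_hP_hf_{h+1}$ can be $H^2$ times the episode's sum of Bellman residuals. Telescoping $f^2$ along the trajectory shows that $\sum_h\V_h(f_{h+1})$ is only $O(H)$ times that sum, plus a martingale.

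\textbf{The fix.} The missing idea is the paper's global self-bounding argument (Lemmas~\ref{reb2.3}--\ref{reb2.5}).
\begin{enumerate}
\item Do not absorb per step. Keep the square-root terms $\sqrt{\V_h(V^k_{h+1}-V^*_{h+1})D_{L_1,H,\sigma}/C^k_h}$ and $\sqrt{\V_h(V^k_{h+1})\eta_4/C^k_h}$.
\item Apply Cauchy--Schwarz over all $(k,h)$ against $\sum 1/C^k_h\le C_3H|\mathcal{C}_\sigma|$.
\item Bound the total variances by telescoping $(V^k_{h+1}-V^*_{h+1})^2$ and $(V^k_{h+1})^2$. This gives $I_5\le 20H^2\eta+4H\hat I_3$ and $I_4\le 20H^2\eta+4H\hat I_3+O(H^2K)$, where the last term uses $\sum_hr_h\le H$.
\end{enumerate}
You then get $\hat I_3\lesssim\sqrt{\hat I_3H^2|\mathcal{C}_\sigma|F_1}+\sqrt{H^3K|\mathcal{C}_\sigma|}+\cdots$. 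Solving this yields $\widetilde O(H^2|\mathcal{C}_\sigma||\widetilde{\mathcal{C}}_\sigma|+\sqrt{H^3K|\mathcal{C}_\sigma|}+HK\sigma)$, and you never need to pass to $V^{\pi^k}$.

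\textbf{A secondary, fixable slip.} You cannot move $b^k_h$, $C^k_h$ or $\widehat P^k_h$ from the nearest neighbour to $(s_h^k,a_h^k)$ at cost $O(\sigma)$. The bounds (\ref{app.goe.10})--(\ref{app.goe.11}) have Lipschitz constants of order $k/(\beta^2\sigma)$, so at distance $2\sigma$ they cost $O(Hk)$; they are useful only at the $\sigma^3/(KH^2)$ covering scale. Instead, keep the empirical quantities at $(\tilde s_h^k,\tilde a_h^k)$, since the concentration events hold uniformly in $(s,a)$. Move only $r_h$, $P_hV^k_{h+1}$ and $\V_h(\cdot)$, whose Lipschitz constants do not depend on $k$ or $\sigma$.
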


\subsection{Proof of Lemma \ref{reb.1}}
\begin{proof}[Proof of Lemma \ref{reb.1}]
 Let $\mathcal{F}^k = \sigma\big(\{(s_h^l, a_h^l, s_{h + 1}^l)\}_{1 \leq h \leq H, 1 \leq l \leq k - 1}\cup\{s_1^k\}\big)$, then $\big\{\sum_{h = 1}^Hr_h(s_h^k, a_h^k) - V_1^{\pi^k}(s_1^k)\big\}_k$ is a martingale difference sequence w.r.t. $\mathcal{F}^k$. Note that $0\leq \sum_{h = 1}^Hr_h(s_h^k, a_h^k)\leq H$. By Lemma \ref{aux.1} (Hoeffding's inequality for martingale), the event
 \[
    \widetilde{\mathcal{E}}_1 = \Bigg\{|I_1| = \bigg|\sum_{k = 1}^K\bigg(\sum_{h = 1}^Hr_h(s_h^k, a_h^k) - V_1^{\pi^k}(s_1^k)\bigg)\bigg| \leq \sqrt{2H^2K\eta}\Bigg\}
 \]
holds with probability at least $1 - \delta$.
\end{proof}

\subsection{Proof of Lemma \ref{reb.2}}
Our proof of Lemma \ref{reb.2} consists of several steps described below.
\bmhead{Step 1 : bounding $I_2$ and $I_3$} Now we seek to bound $I_2$ and $I_3$. To do so, it is useful to introduce  following concepts.
\[
\begin{aligned}
I_4 &:= \sum_{k = 1}^K\sum_{h = 1}^H\mathbb{V}({V}^k_{h + 1}, s_h^k, a_h^k);\\
I_5 &:= \sum_{k = 1}^K\sum_{h = 1}^H\mathbb{V}({V}^k_{h + 1} - V^*_{h + 1}, s_h^k, a_h^k).
\end{aligned}
\]
Now we show that $I_2, I_3$ can be dominated with high probability if we bound $I_4, I_5$.
\begin{lemma}
\label{reb2.1}
With probability at least $1 - (\lfloor\log_2(HK) + 3\rfloor)\delta$, the following event holds,
\[
\widetilde{\mathcal{E}}_2 = \Big\{|I_2| \leq 2\sqrt{I_4\eta} + 2H\eta\Big\}.
\]
\end{lemma}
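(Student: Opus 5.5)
The plan is to view $I_2$ as a single sum of $HK$ martingale differences and apply Lemma~\ref{aux.2}, the predictable-variance Bernstein inequality.

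\textbf{Ordering and differences.} Order the pairs $(k,h)$ lexicographically: first by episode $k$, then by step $h$. Define
\[
X_{k,h}=P_hV^k_{h+1}(s_h^k,a_h^k)-V^k_{h+1}(s_{h+1}^k).
\]
The estimate $V^k_{h+1}$ is computed at the end of episode $k-1$. It is therefore measurable with respect to the data from episodes $1,\dots,k-1$. Let $\mathcal{G}_{k,h}$ be the $\sigma$-algebra generated by all observations up to and including $(s_h^k,a_h^k)$. Then $V^k_{h+1}$ and $(s_h^k,a_h^k)$ are both $\mathcal{G}_{k,h}$-measurable. Conditionally on $\mathcal{G}_{k,h}$, we have $s_{h+1}^k\sim P_h(\cdot\mid s_h^k,a_h^k)$, so $\E[X_{k,h}\mid\mathcal{G}_{k,h}]=0$. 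Moreover $X_{k,h}$ is $\mathcal{G}_{k,h+1}$-measurable (with the convention $\mathcal{G}_{k,H+1}\subseteq\mathcal{G}_{k+1,1}$). Hence $\{X_{k,h}\}$ is a martingale difference sequence of length $n=HK$ in this order.

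\textbf{Bounds on the differences.} Since $0\le V^k_{h+1}\le H$, we get $|X_{k,h}|\le H$. The conditional second moment is
\[
\E[X_{k,h}^2\mid\mathcal{G}_{k,h}]=\mathbb{V}_h(V^k_{h+1},s_h^k,a_h^k),
\]
so the sum of conditional variances is exactly $I_4$.

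\textbf{Applying Lemma~\ref{aux.2}.} Apply the lemma with $c=H$, $n=HK$ and $\eta=\log(2/\delta)$. With probability at least $1-(\lfloor\log_2(HK)\rfloor+3)\delta$,
\[
|I_2|\le 2\sqrt{I_4\eta}+\tfrac{2H}{3}\eta+H\sqrt{\eta}.
\]
Since $\eta\ge1$ for $\delta<e^{-1}$, we have $\sqrt\eta\le\eta$. The last two terms therefore sum to at most $\tfrac{5}{3}H\eta\le 2H\eta$, which gives the event $\widetilde{\mathcal{E}}_2$.

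\textbf{Two technical points.} One step needs care: Lemma~\ref{aux.2} is stated with conditioning on $X_{<i}$, whereas here we condition on the richer filtration $\mathcal{G}$. The Freedman inequality it rests on holds for any filtration to which the sequence is adapted, and the peeling argument is unchanged. So I will note that the lemma applies verbatim with $\E[\cdot\mid X_{<i}]$ replaced by $\E[\cdot\mid\mathcal{G}_{i}]$. The other point is the normalization $c=1$ in that lemma. Rescaling by $H$ turns the peeling grid into $V_n\le H^2 2^j$ with at most $\lfloor\log_2 n\rfloor+2$ levels, since $V_n\le nH^2$. This reproduces exactly the stated failure probability and the additive $H$-scaled terms.
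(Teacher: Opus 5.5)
Your proposal is correct and follows the paper's proof. The paper likewise indexes the pairs by $n=(k-1)H+h$, treats $W_n=P_hV^k_{h+1}(s_h^k,a_h^k)-V^k_{h+1}(s^k_{h+1})$ as a martingale difference sequence with respect to the natural filtration of the trajectory data, and applies Lemma~\ref{aux.2} with $c=H$ and $n=HK$. Your two remarks (the lemma holds with the richer filtration in place of $X_{<i}$, and $\tfrac{2H}{3}\eta+H\sqrt{\eta}\le 2H\eta$ because $\eta\ge 1$) simply make explicit steps the paper leaves implicit.
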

\begin{proof}
For any $(h, k) \in [H] \times [K]$ and any integer $n = (k-1)H + h$, let $W_n = P_hV^k_{h + 1}(s_h^k, a_h^k) - V_{h + 1}^k(s_{h + 1}^k)$ and
\[
\mathcal{F}_n = \sigma\Big(\{s_{t}^{l}, a_{t}^{l}, s_{t + 1}^{l}\}_{1 \leq l < k, 1\leq t\leq H} \cup \{s^{k}_{t}, a^{k}_{t}, s_{t + 1}^{l}\}_{1\leq t < h} \cup \{s^{k}_{h}, a^{k}_{h}\}\Big).
\]
Then $\{W_n\}_n$ is a martingale difference sequence w.r.t. the natural filtration $\{\mathcal{F}_n\}_n$ . {By Lemma \ref{aux.2}} and the definition of $I_2$,
\[
|I_2| \leq 2\sqrt{\sum_{k = 1}^K\sum_{h = 1}^H\mathbb{V}({V}^k_{h + 1}, s_h^k, a_h^k)\eta} + 2H\eta.
\]
holds with probability at least $1 - (\lfloor\log_2(HK)\rfloor+3)\delta$.
\end{proof}
For any $1 < k \leq K, h \in [H]$, define
\[
(\Tilde{s}_h^k, \Tilde{a}^k_h) = {\arg\min}_{l \in [k-1]}\rho\big[(s_h^k, a_h^k), (s_h^l, a_h^l)\big].
\]
Recall the definition of events $\mathcal{E}_{L, F}$ (Lemma \ref{concentration}) and $\mathcal{E}_{f}$ (Lemma \ref{con.1}). Let $\mathcal{E}_1$ be the event $\mathcal{E}_{L, F}$ with $L = 2L_1, F = H$ and $\mathcal{E}_2$ be the event $\mathcal{E}_{L, F}$ with $L = 4L_1H, F = H^2$. Moreover, define
\[
\begin{aligned}
\mathcal{E}_3 &= \cap_{h = 1}^{H}\mathcal{E}_{V^*_{h + 1}}\\
\mathcal{E}_4 &= \cap_{h = 1}^{H}\mathcal{E}_{(V^*_{h + 1})^2},
\end{aligned}
\]
and let
\[
\widetilde{\mathcal{E}}_3 = \mathcal{E}_1\cap\mathcal{E}_2\cap\mathcal{E}_3\cap\mathcal{E}_4.
\]
The following lemma gives an upper bound of $I_3$.
\begin{lemma}
\label{reb2.2}
In the event $\widetilde{\mathcal{E}}_3$, we have
\[
I_3 \leq \hat{I}_3 \leq I_{31} + I_{32} + I_{33} + M_{\sigma
	, \delta}HK\sigma,
\]
where
\[
\begin{aligned}
    \hat{I}_3&=\sum_{k = 1}^K\sum_{h = 1}^H\mathbf{1}\{\mathcal{E}_h^k\}\big(b^k_h(\Tilde{s}^k_h, \Tilde{a}^k_h) + \big(\widehat{P}^k_h - P_h\big){V}^k_{h + 1}(\Tilde{s}^k_h, \Tilde{a}^k_h) + 2(L_1 + \lambda_r + \lambda_p L_1)\sigma\big) + H^2|\mathcal{C}_{\sigma}|\\
	I_{31} &=  44\sum_{k = 1}^K\sum_{h = 1}^H\mathbf{1}\{\mathcal{E}_h^k\}\sqrt{\frac{\V(V^k_{h + 1} - V^*_{h + 1}, \Tilde{s}_h^k, \Tilde{a}^k_h)D_{L_1,H,\sigma}}{C^k_h(\Tilde{s}_h^k, \Tilde{a}^k_h)}};\\
	I_{32} &= 35\sum_{k = 1}^K\sum_{h = 1}^H\mathbf{1}\{\mathcal{E}_h^k\}\sqrt{\frac{\V(V^k_{h +1}, \Tilde{s}_h^k, \Tilde{a}^k_h)\eta_4}{C^k_h(\Tilde{s}_h^k, \Tilde{a}^k_h)}};\\
	I_{33} &= H^2|\mathcal{C}_{\sigma}| +  \sum_{k = 1}^K\sum_{h = 1}^H\mathbf{1}\{\mathcal{E}_h^k\}\frac{HF_1(\eta, H, \sigma)}{C^k_h(\Tilde{s}_h^k, \Tilde{a}^k_h)};\\
	M_{\sigma, \delta} &= 2L_1 + 2\lambda_r + 2\lambda_p L_1 + D_{\delta}  + 6\widetilde{D}_{L_1, H};\\
	F_1(\eta, H, \sigma) &= 12D_{L_1,H,\sigma} + 165\eta_4,
\end{aligned}
\]
events $\mathcal{E}_h^k = \{\rho[(s_h^k, a_h^k), (\widetilde{s}_h^k, \widetilde{a}_h^k)] \leq 2\sigma\}$ and $|\mathcal{C}_{\sigma}|$ are the $\sigma$-covering number of $\mathcal{S} \times \mathcal{A}$.
\end{lemma}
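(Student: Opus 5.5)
I will first establish $I_3\le \hat I_3$ and then bound $\hat I_3$ term by term.

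\textbf{Step 1: a single summand of $I_3$.} Fix $(k,h)$. Since $a_h^k$ is greedy and $V_h^k\le \max_a Q_h^k$, we have
\[
V_h^k(s_h^k)\le Q_h^k(s_h^k,a_h^k)\le \widetilde Q_h^k(\tilde s_h^k,\tilde a_h^k)+L_h\rho[(s_h^k,a_h^k),(\tilde s_h^k,\tilde a_h^k)],
\]
where the second inequality uses the definition of $Q_h^k$ as a minimum over past points. Expand $\widetilde Q_h^k=r_h+\widehat P_h^kV_{h+1}^k+b_h^k$ at $(\tilde s_h^k,\tilde a_h^k)$. Then
\[
V_h^k(s_h^k)-r_h(s_h^k,a_h^k)-P_hV_{h+1}^k(s_h^k,a_h^k)
\]
is bounded by
\[
b_h^k(\tilde s,\tilde a)+(\widehat P_h^k-P_h)V_{h+1}^k(\tilde s,\tilde a)+\bigl|r_h(\tilde s,\tilde a)-r_h(s_h^k,a_h^k)\bigr|+\bigl|P_hV_{h+1}^k(\tilde s,\tilde a)-P_hV_{h+1}^k(s_h^k,a_h^k)\bigr|+L_1\rho .
\]
The reward difference is controlled by Assumption \ref{assu.2}. $V_{h+1}^k$ is $L_{h+1}\le L_1$-Lipschitz, being a min of $L_{h+1}$-Lipschitz cones clipped at $H$, so the transition difference is at most $\lambda_pL_1\rho$ by the Wasserstein bound. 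On $\mathcal{E}_h^k$ we have $\rho\le 2\sigma$, so these errors total $2(L_1+\lambda_r+\lambda_pL_1)\sigma$.

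\textbf{Step 2: the bad summands.} On $(\mathcal{E}_h^k)^c$ I bound the summand trivially by $H$. I then count such pairs via a $\sigma$-cover: for each $h$ and each ball of radius $\sigma$ in $\mathcal{C}_\sigma$, the event $(\mathcal{E}_h^k)^c$ can occur at most once, because any later visit to that ball lies within $2\sigma$ of it. This gives the term $H\cdot H|\mathcal{C}_\sigma|$. Together with Step 1, this establishes $I_3\le\hat I_3$.

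\textbf{Step 3: the concentration term.} Write
\[
(\widehat P_h^k-P_h)V_{h+1}^k=(\widehat P_h^k-P_h)(V_{h+1}^k-V_{h+1}^*)+(\widehat P_h^k-P_h)V_{h+1}^*.
\]
The difference $V_{h+1}^k-V_{h+1}^*$ is $2L_1$-Lipschitz with values in $[0,H]$ by Lemma \ref{opt}, so on $\mathcal{E}_1$ Lemma \ref{concentration} bounds the first part by
\[
2\sqrt{\mathbb{V}(V^k_{h+1}-V^*_{h+1})D/C}+2HD/C+\sigma\widetilde D_{L_1,H}.
\]
The second part is handled on $\mathcal{E}_3$ with true-variance form.

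\textbf{Step 4: converting empirical variances to true variances.} This is the main obstacle. The bonus $b_h^k$ contains $\widehat{\mathbb{V}}^k_h(V^k_{h+1})$, and I need to replace it by $\mathbb{V}(V^k_{h+1})$. I would write
\[
\widehat{\mathbb{V}}=\widehat P(V^2)-(\widehat PV)^2
\]
and control $(\widehat P-P)(V^k_{h+1})^2$ on $\mathcal{E}_2$, using that $(V^k_{h+1})^2$ is $2HL_1$-Lipschitz with values in $[0,H^2]$, hence in the class with $L=4L_1H$, $F=H^2$. I would control $(\widehat P-P)V^k_{h+1}$ as in Step 3, and use $\mathcal{E}_4$ for the squared $V^*$ terms. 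This yields a bound of the form
\[
|\widehat{\mathbb V}-\mathbb V|\lesssim H\,|(\widehat P-P)V|+|(\widehat P-P)V^2|.
\]
Taking square roots and applying $\sqrt{x+y}\le\sqrt x+\sqrt y$ together with $2\sqrt{ab}\le a+b$ produces $\sqrt{\mathbb V\,\eta_4/C}$ plus $H(\cdot)/C$ terms plus $\sigma$-terms. The $V^*$ variance piece is merged via $\mathbb V(V^*)\le 2\mathbb V(V^k)+2\mathbb V(V^k-V^*)$.

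\textbf{Step 5: collecting constants.} Adding the numeric constants gives $I_{31}$ with constant $44$, $I_{32}$ with constant $35$, and the $1/C$ terms with $F_1=12D_{L_1,H,\sigma}+165\eta_4$. All $\sigma$-multiples are absorbed into $M_{\sigma,\delta}HK\sigma$: these are $D_\delta$ from the bonus, $6\widetilde D_{L_1,H}$ from the several concentration uses, and the Lipschitz terms from Step 1. The delicate part is the bookkeeping of the variance conversion so that the numeric constants close.
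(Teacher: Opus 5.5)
Your proposal is correct and follows the paper's proof essentially step for step. The paper makes the same moves: the interpolation bound at the nearest past point with Lipschitz error $2(L_1+\lambda_r+\lambda_pL_1)\sigma$, the one-bad-visit-per-ball count giving $H^2|\mathcal{C}_\sigma|$, the split of $(\widehat P^k_h-P_h)V^k_{h+1}$ through $V^*_{h+1}$ on $\mathcal{E}_1\cap\mathcal{E}_3$, and the conversion $\widehat{\mathbb V}^k_h(V^k_{h+1})\le\mathbb V(V^k_{h+1})+|(\widehat P^k_h-P_h)(V^k_{h+1})^2|+2H|(\widehat P^k_h-P_h)V^k_{h+1}|$ followed by $\mathbb V(V^*_{h+1})\le 2\mathbb V(V^k_{h+1})+2\mathbb V(V^k_{h+1}-V^*_{h+1})$.

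The one ingredient you leave implicit is Lemma~\ref{aux.7}, $\mathbb V(X^2)\le 4H^2\mathbb V(X)$ for $|X|\le H$, which turns the $\sqrt{\mathbb V((V^k_{h+1})^2)}$ produced by $\mathcal{E}_2$ into $H\sqrt{\mathbb V(V^k_{h+1})}$ before $2\sqrt{ab}\le a+b$ applies. Your choice to apply $\mathcal{E}_2$ directly to $(V^k_{h+1})^2$ is, if anything, cleaner than the paper's split into $(V^k_{h+1}-V^*_{h+1})^2$ and $(V^*_{h+1})^2$, which as written discards a cross term under the signed measure $\widehat P^k_h-P_h$.
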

\begin{proof}
Let $\square_h^k := \rho\big[(s_h^k, a_h^k), (\Tilde{s}_h^k, \Tilde{a}^k_h)\big]$, it is easy to see that
\beqnn
I_3 &=& \sum_{k = 1}^K\sum_{h = 1}^H(V_h^k(s_h^k) - r_h(s_h^k, a_h^k) - P{V}^k_{h + 1}(s_h^k, a_h^k))\\
&\leq &\sum_{k = 1}^K\sum_{h = 1}^H\mathbf{1}\{\mathcal{E}_h^k\}\big({V}_h^k(s_h^k) - r_h(s^k_h, a^k_h) - P{V}^k_{h + 1}(s_h^k, a_h^k)\big) + H\sum_{k = 1}^K\sum_{h = 1}^H\mathbf{1}\{\bar{\mathcal{E}}_h^k\}.
\eeqnn
By the definition of $V_h^k$, $Q_{h}^k$ and $\tilde{Q}_h^k$, we have that
$$V_h^k(s_h^k) \leq Q_h^k(s_h^k, a_h^k) \leq \Tilde{Q}_h^k(\tilde{s}_h^k, \tilde{a}^k_h) + L_h\square_h^k.$$
In addition, when the event $\bar{\mathcal{E}}_h^k$ happens, $(s_h^k, a_h^k)$ has to be in a $\sigma$-ball which does not contain the state-action pair in the past data. Therefore,
\begin{align}
	I_3 &\leq \sum_{k = 1}^K\sum_{h = 1}^H\mathbf{1}\{\mathcal{E}_h^k\}\big(\Tilde{Q}_h^k(\tilde{s}_h^k, \tilde{a}^k_h) + L_h\square_h^k  - r_h(s^k_h, a^k_h) - P{V}^k_{h + 1}(s_h^k, a_h^k)\big) + H^2|\mathcal{C}_{\sigma}|\notag\\
	& \leq \sum_{k = 1}^K\sum_{h = 1}^H\mathbf{1}\{\mathcal{E}_h^k\}\big(\Tilde{Q}_h^k(\Tilde{s}^k_h, \Tilde{a}^k_h) - r_h(\Tilde{s}^k_h, \Tilde{a}^k_h) - P_h{V}^k_{h + 1}(\Tilde{s}^k_h, \Tilde{a}^k_h) + (L_h  + \lambda_r + \lambda_p L_{h + 1})\square_h^k\big) + H^2|\mathcal{C}_{\sigma}|\notag\\
	& \leq \sum_{k = 1}^K\sum_{h = 1}^H\mathbf{1}\{\mathcal{E}_h^k\}\big(b^k_h(\Tilde{s}^k_h, \Tilde{a}^k_h) + \big|\big(\widehat{P}^k_h - P_h\big){V}^k_{h + 1}(\Tilde{s}^k_h, \Tilde{a}^k_h)\big| + 2(L_1 + \lambda_r + \lambda_p L_1)\sigma\big) + H^2|\mathcal{C}_{\sigma}| =\hat{I}_3,\label{app.reb.1}
\end{align}
where the second inequality is by the definition of $(\Tilde{s}^k_h, \Tilde{a}^k_h)$ and the last inequality is based on the definition of $\mathcal{E}_h^k$ and the fact $L_h \leq L_1$.
Recall that
\begin{equation}
	\label{app.reb.2}
	b_h^k(s, a) = 9\sqrt{\frac{\widehat{\V}^k_h(V_{h + 1}^k, s, a)\eta_4}{C^k_h(s, a)}} + \frac{162H\eta_4}{C^k_h(s, a)} + \sigma D_{\delta}.
\end{equation}
Note that $V^*_{h + 1}$ is $L_1$-Lipschitz,  $|V^*_{h + 1}| \leq H$ and $V_{h + 1}^k(s) - V^*_{h +1}(s) \in \mathcal{F}_{2L_1, H}$, Since $\mathcal{E}_1$ and $\mathcal{E}_3$ hold, for any $(s, a, h, k) \in \mathcal{S} \times \mathcal{A} \times [H] \times[K]$, we have
\begin{align}
	&\big|\big(\widehat{P}^k_h - P_h\big)V_{h + 1}^k(s,a)\big| \leq \big|\big(\widehat{P}^k_h - P\big)V_{h + 1}^*(s,a)\big| + \big|\big(\widehat{P}^k_h - P\big)(V^k_{h + 1} - V^*_{h + 1})(s, a)\big|\notag\\
	&\leq 2\sqrt{\frac{\V_h(V^*_{h + 1}, s, a)\eta_1}{C^k_h(s, a)}} + \frac{H\eta_5}{C^k_h(s, a)} + \sigma{D}_{L_1, H} +2\sqrt{\frac{\V(V^k_{h + 1} - V^*_{h +1}, s, a)D_{2L_1, H, \sigma}}{C^k_h(s, a)}}\notag\\
	& + \frac{2HD_{2L_1, H, \sigma}}{C^k_h(s, a)} + \sigma\widetilde{D}_{2L_1, H}\notag\\
	&\leq 2\sqrt{\frac{\V_h(V^*_{h + 1}, s, a)\eta_1}{C^k_h(s, a)}} + 2\sqrt{\frac{\V_h(V^k_{h + 1} - V^*_{h +1}, s, a)D_{2L_1, H, \sigma}}{C^k_h(s, a)}} + \frac{3HD_{2L_1, H, \sigma}}{C^k_h(s, a)} + 3\sigma\widetilde{D}_{L_1, H} \label{app.reb.3}\\
	&\leq \frac{1}{H}\V_h(V^*_{h + 1}, s, a) + \frac{1}{H}\V_h(V^k_{h + 1} - V^*_{h +1}, s, a) + \frac{5HD_{2L_1, H, \sigma}}{C^k_h(s, a)} + 3\sigma\widetilde{D}_{L_1, H}\label{app.reb.3.1}
\end{align}
holds for all $(s, a) \in \mathcal{S} \times \mathcal{A}$ and all $(k, h) \in [K] \times [H]$, where the second inequality holds since $D_{2L_1, H, \sigma} \geq \eta_5 \geq \eta_1$ and $2\widetilde{D}_{L_1, H} \geq \widetilde{D}_{2L_1, H}$. In the last inequality, we use the fact $2\sqrt{ab} \leq a + b$ for $a, b \geq 0$. From equations (\ref{app.reb.1}), (\ref{app.reb.2}) and the inequality (\ref{app.reb.3}), we have
\begin{align}
	I_3\leq &H^2|\mathcal{C}_{\sigma}|+\sum_{k = 1}^K\sum_{h = 1}^H\mathbf{1}\{\mathcal{E}_h^k\}\Bigg(9\sqrt{\frac{\widehat{\V}^k_h(V_{h + 1}^k, \Tilde{s}^k_h, \Tilde{a}^k_h)\eta_4}{C^k_h(\Tilde{s}^k_h, \Tilde{a}^k_h)}}  + 2\sqrt{\frac{\V_h(V^k_{h + 1} - V^*_{h +1}, \Tilde{s}^k_h, \Tilde{a}^k_h)D_{2L_1, H, \sigma}}{C^k_h(\Tilde{s}^k_h, \Tilde{a}^k_h)}}\notag\\
	& + 2\sqrt{\frac{\V_h(V^*_{h + 1}, \Tilde{s}^k_h, \Tilde{a}^k_h)\eta_1}{C^k_h(\Tilde{s}^k_h, \Tilde{a}^k_h)}}+\frac{(3D_{2L_1, H, \sigma} + 162\eta_4)H}{C^k_h(\Tilde{s}^k_h, \Tilde{a}^k_h)}
	+ \sigma\big(2L_1 + 2\lambda_r + 2\lambda_p L_1 + D_{\delta} + 3\widetilde{D}_{L_1, H}\big)\Bigg)\\
	\leq &H^2|\mathcal{C}_{\sigma}|+\sum_{k = 1}^K\sum_{h = 1}^H\mathbf{1}\{\mathcal{E}_h^k\}\Bigg(9\sqrt{\frac{\widehat{\V}^k_h(V_{h + 1}^k, \Tilde{s}^k_h, \Tilde{a}^k_h)\eta_4}{C^k_h(\Tilde{s}^k_h, \Tilde{a}^k_h)}}  + 5\sqrt{\frac{\V_h(V^k_{h + 1} - V^*_{h +1}, \Tilde{s}^k_h, \Tilde{a}^k_h)D_{2L_1, H, \sigma}}{C^k_h(\Tilde{s}^k_h, \Tilde{a}^k_h)}}\notag\\
	& + 3\sqrt{\frac{\V_h(V^k_{h + 1}, \Tilde{s}^k_h, \Tilde{a}^k_h)\eta_1}{C^k_h(\Tilde{s}^k_h, \Tilde{a}^k_h)}}+\frac{(3D_{2L_1, H, \sigma} + 162\eta_4)H}{C^k_h(\Tilde{s}^k_h, \Tilde{a}^k_h)}
	+ \sigma\big(2L_1 + 2\lambda_r + 2\lambda_p L_1 + D_{\delta} + 3\widetilde{D}_{L_1, H}\big)\Bigg)
	 \label{app.reb.4}
\end{align}
where for any $(s, a) \in \mathcal{S} \times \mathcal{A}$, it is easy to see that
\begin{align}
	\widehat{\V}^k_h(V_{h + 1}^k, s, a) &= \widehat{\V}(V_{h + 1}^k, s, a) - \V^k(V_{h + 1}^k, s, a) + \V(V_{h + 1}^k , s, a)\notag\\
	&\leq
	\big|\big(\widehat{P}^k_h - {P}_h\big)(V_{h + 1}^k)^2(s, a)\big| + 2H\big|(\widehat{P}^k_h - P)V_{h + 1}^k(s, a)\big| + {\V}(V_{h + 1}^k , s, a).\label{app.reb.5}
\end{align}
For the first term, in the events $\mathcal{E}_2$ and $\mathcal{E}_4$,
\begin{align}
	\label{app.reb.6}
	&\big|\big(\widehat{P}^k_h - {P}_h\big)(V_{h + 1}^k)^2(s, a)\big| \leq 2\big|\big(\widehat{P}^k_h - {P}_h\big)(V_{h + 1}^k - V^*_{h + 1})^2(s, a)\big| + 2\big|\big(\widehat{P}^k_h - {P}_h\big)(V^*_{h + 1})^2(s, a)\big|\notag\\
	&\qquad\leq 4\sqrt{\frac{\V_h((V_{h + 1}^k - V^*_{h + 1})^2, s, a)D_{4L_1H, H^2, \sigma}}{C^k_h(s, a)}} + \frac{4H^2D_{4L_1H, H^2, \sigma}}{C^k_h(s, a)}\notag\\
	&\qquad\quad+ {2\sigma\widetilde{D}_{4L_1H, H^2}}
	+ 4\sqrt{\frac{\V((V^*_{h})^2, s, a)\eta_1}{C^k_h(s, a)}}+\frac{2H\eta_5}{C^k_h(s, a)} + { 2\sigma{D}_{2L_1H, H^2}},\notag\\
	&\qquad\leq 8\sqrt{\frac{\V(V_{h + 1}^k - V^*_{h + 1}, s, a)H^2D_{4L_1H, H^2, \sigma}}{C^k_h(s, a)}} + \frac{4H^2D_{4L_1H, H^2, \sigma}}{C^k_h(s, a)}\notag\\
	&\qquad\quad+ 2\sigma\widetilde{D}_{4L_1H, H^2}
	+ 8\sqrt{\frac{\V(V^*_{h}, s, a)H^2\eta_1}{C^k_h(s, a)}} + \frac{2H\eta_5}{C^k_h(s, a)} + 2\sigma{D}_{2L_1H, H^2},\notag\\
	&\qquad\leq 4\V(V_{h + 1}^k - V^*_{h + 1}, s, a) + 4\V(V^*_{h}, s, a) + \frac{14H^2D_{L_1,H,\sigma}}{C^k_h(s, a)} + { 12\sigma\widetilde{D}_{L_1H, H^2}},
\end{align}
where the second inequality employs Lemmas \ref{concentration} and \ref{con.1}, the third applies Lemma \ref{aux.7}, while the final inequality directly follows from inequalities
$\widetilde{D}_{4L_1H, H^2} \leq 2\widetilde{D}_{2L_1H, H^2} \leq 4\widetilde{D}_{L_1H, H^2}\;\text{and}\;            \eta_5\leq D_{4L_1H, H^2, \sigma} \leq D_{L_1,H,\sigma}.$ Note that $\widetilde{D}_{L_1H, H}\leq H\widetilde{D}_{L_1H, H}$, invoking (\ref{app.reb.3.1}) and (\ref{app.reb.6}) to (\ref{app.reb.5}), we obtain
\begin{align}
	\label{app.reb.7}
	\widehat{\V}^k_h(V_{h + 1}^k, s, a)&\leq 6\V(V_{h + 1}^k - V^*_{h + 1}, s, a) + 6\V(V^*_{h}, s, a) + \V(V^k_{h}, s, a) +  \frac{24H^2D_{L_1,H,\sigma}}{C^k_h(s, a)}+ { 18\sigma H\widetilde{D}_{L_1, H}} \notag\\
	&\leq 18\V(V_{h + 1}^k - V^*_{h + 1}, s, a) + 13\V(V^k_{h}, s, a) +  \frac{24H^2D_{L_1,H,\sigma}}{C^k_h(s, a)}+ {18\sigma H\widetilde{D}_{L_1, H}},
\end{align}
where the final line is by the fact $\V(f + g, s, a) \leq 2\V(f, s, a) + 2\V(g, s, a)$ for any $s, a, f, g$.
Combining (\ref{app.reb.4}) and (\ref{app.reb.7}) leads to
\begin{align}
	\label{app.reb.8}
	I_3 &\leq  H^2|\mathcal{C}_{\sigma}| + \sum_{k = 1}^K\sum_{h = 1}^H\mathbf{1}\{\mathcal{E}_h^k\}\Bigg(44\sqrt{\frac{\V(V^k_{h + 1} - V^*_{h + 1}, \Tilde{s}_h^k, \Tilde{a}^k_h)D_{L_1,H,\sigma}}{C^k_h(\Tilde{s}_h^k, \Tilde{a}^k_h)}} + 35\sqrt{\frac{\V(V^k_{h + 1}, \Tilde{s}_h^k, \Tilde{a}^k_h)\eta_4}{C^k_h(\Tilde{s}_h^k, \Tilde{a}^k_h)}}\notag\\
	&+ \frac{\sqrt{24H^2\eta_{4}D_{L_1,H,\sigma}}}{C^k_h(\Tilde{s}_h^k, \Tilde{a}^k_h)} + \sqrt{\frac{18\sigma H\eta_{4}\widetilde{D}_{L_1, H}}{C^k_h(\Tilde{s}_h^k, \Tilde{a}^k_h)}} + \frac{(3D_{2L_1, H, \sigma} + 162\eta_4)H}{C^k_h(\Tilde{s}_h^k, \Tilde{a}^k_h)} \notag\\
	&  + \sigma\big(2L_1 + 2\lambda_r + 2\lambda_p L_1 + D_{\delta}  + 3\widetilde{D}_{L_1, H}\big)\Bigg)\notag\\
	&\leq  H^2|\mathcal{C}_{\sigma}| + \sum_{k = 1}^K\sum_{h = 1}^H\mathbf{1}\{\mathcal{E}_h^k\}\Bigg(44\sqrt{\frac{\V(V^k_{h + 1} - V^*_{h + 1}, \Tilde{s}_h^k, \Tilde{a}^k_h)D_{L_1,H,\sigma}}{C^k_h(\Tilde{s}_h^k, \Tilde{a}^k_h)}} + 35\sqrt{\frac{\V(V^k_{h + 1}, \Tilde{s}_h^k, \Tilde{a}^k_h)\eta_4}{C^k_h(\Tilde{s}_h^k, \Tilde{a}^k_h)}}\notag\\
	& + \frac{(12D_{L_1,H,\sigma} + 165\eta_4)H}{C^k_h(\Tilde{s}_h^k, \Tilde{a}^k_h)} + \sigma\big(2L_1 + 2\lambda_r + 2\lambda_p L_1 + D_{\delta}  + 6\widetilde{D}_{L_1, H}\big)\Bigg),
\end{align}
where the inequality is based on $9\sqrt{18} + 5 < 44, 9\sqrt{13} + 3 < 35$. Letting $F_1(\eta, H, \sigma) = 12D_{L_1,H,\sigma} + 165\eta_4$ implies the result.
\end{proof}
\begin{lemma}
\label{reb2.3}
Let $\beta = \{\beta_h^k \geq 0 : 1 \leq h \leq H, 1 \leq k \leq K\}$ be a sequence of non-negative numbers, let $\mathcal{C}_{\sigma} = \{(s_j, a_j) \in \mathcal{S} \times \mathcal{A}, j = 1, 2, \cdots, |\mathcal{C}_{\sigma}|\}$ be a $\sigma$-covering of $\mathcal{S} \times \mathcal{A}$, then the following inequalities hold.
\begin{align}
	\sum_{k = 1}^K\sum_{h = 1}^H\sqrt{\frac{\beta_h^k\mathbf{1}\{\mathcal{E}_h^k\}}{C^k_h(\Tilde{s}_h^k, \Tilde{a}_h^k)}} &\leq\sqrt{C_3H|\mathcal{C}_{\sigma}|\sum_{k = 1}^K\sum_{h = 1}^H\beta_h^k\mathbf{1}\{\mathcal{E}_h^k\}}, \label{reb2.3.1}\\
	\sum_{k = 1}^K\sum_{h = 1}^H\frac{\mathbf{1}\{\mathcal{E}_h^k\}}{C^k_h(\Tilde{s}_h^k, \Tilde{a}_h^k)} &\leq C_3H|\mathcal{C}_{\sigma}|\label{reb2.3.2}.
\end{align}
where $C_3=\frac{1}{\beta}+\frac{1}{g(4)}\log(1+\frac{g(4)K}{\beta})$.
\end{lemma}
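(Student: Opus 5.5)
The plan is to prove \eqref{reb2.3.2} first, by a pigeonhole argument over the $\sigma$-covering $\mathcal{C}_{\sigma}$, and then deduce \eqref{reb2.3.1} from it by Cauchy--Schwarz. The second step is immediate: writing each summand as $\sqrt{\beta_h^k\mathbf{1}\{\mathcal{E}_h^k\}}\cdot\sqrt{\mathbf{1}\{\mathcal{E}_h^k\}/C^k_h(\Tilde{s}_h^k,\Tilde{a}_h^k)}$ and applying Cauchy--Schwarz over the $HK$ indices gives
\[
\sum_{k,h}\sqrt{\frac{\beta_h^k\mathbf{1}\{\mathcal{E}_h^k\}}{C^k_h(\Tilde{s}_h^k,\Tilde{a}_h^k)}}\leq \sqrt{\sum_{k,h}\beta_h^k\mathbf{1}\{\mathcal{E}_h^k\}}\,\sqrt{\sum_{k,h}\frac{\mathbf{1}\{\mathcal{E}_h^k\}}{C^k_h(\Tilde{s}_h^k,\Tilde{a}_h^k)}}.
\]
Here I use $\mathbf{1}\{\cdot\}^2=\mathbf{1}\{\cdot\}$. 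Plugging \eqref{reb2.3.2} into the second factor yields \eqref{reb2.3.1}.

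For \eqref{reb2.3.2}, fix $h$ and turn the balls of $\mathcal{C}_{\sigma}$ into a partition $\{B_j\}_{j\leq|\mathcal{C}_{\sigma}|}$ of $\mathcal{S}\times\mathcal{A}$. Do this by assigning each point to the first ball of radius $\sigma$ around $(s_j,a_j)$ that contains it. For $k\in[K]$, let $j(k)$ be the cell containing $(s_h^k,a_h^k)$. Let $N_j^k=\#\{l<k:(s_h^l,a_h^l)\in B_j\}$ count the earlier step-$h$ visits to cell $j$.

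The key geometric step is a lower bound on the generalized count at the surrogate point. Suppose $\mathcal{E}_h^k$ holds, so $\rho[(s_h^k,a_h^k),(\Tilde{s}_h^k,\Tilde{a}_h^k)]\leq 2\sigma$. Take any $l<k$ with $(s_h^l,a_h^l)\in B_{j(k)}$. Then $\rho[(s_h^l,a_h^l),(s_h^k,a_h^k)]\leq 2\sigma$, since both points are within $\sigma$ of $(s_{j(k)},a_{j(k)})$. By the triangle inequality, $\rho[(s_h^l,a_h^l),(\Tilde{s}_h^k,\Tilde{a}_h^k)]\leq 4\sigma$. Because $g$ is non-increasing and $g(4)>0$ (Assumption \ref{assu.3}), this gives $w_h^l(\Tilde{s}_h^k,\Tilde{a}_h^k)\geq g(4)$. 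Dropping all other nonnegative weights,
\[
C^k_h(\Tilde{s}_h^k,\Tilde{a}_h^k)\geq \beta+g(4)N_{j(k)}^k .
\]

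Now group the episodes by cell: $\sum_k \mathbf{1}\{\mathcal{E}_h^k\}/C^k_h(\Tilde{s}_h^k,\Tilde{a}_h^k)\leq\sum_j\sum_{k:j(k)=j}1/(\beta+g(4)N_j^k)$. Within a fixed cell $j$, successive visits have $N_j^k=0,1,2,\dots$, each value at most once, and there are at most $K$ of them. Hence the inner sum is at most
\[
\sum_{n=0}^{K-1}\frac{1}{\beta+g(4)n}\leq\frac1\beta+\int_0^{K}\frac{dx}{\beta+g(4)x}=\frac1\beta+\frac{1}{g(4)}\log\Bigl(1+\frac{g(4)K}{\beta}\Bigr)=C_3 .
\]
Summing over the $|\mathcal{C}_{\sigma}|$ cells and the $H$ steps gives \eqref{reb2.3.2}.

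The only delicate point is the geometric step. The count must be taken at the surrogate point $(\Tilde{s}_h^k,\Tilde{a}_h^k)$, not at the visited point, which is why the radius $4\sigma$ and the constant $g(4)$ appear. It also relies on the fact that the weights at step $h$ use only step-$h$ data, so the counting is done separately for each $h$.
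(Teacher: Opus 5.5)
Your proof is correct and follows the paper's argument. You partition $\mathcal{S}\times\mathcal{A}$ using the $\sigma$-cover, lower-bound $C^k_h(\Tilde{s}_h^k,\Tilde{a}_h^k)$ by $\beta+g(4)N$ via the $4\sigma$ triangle-inequality step, and combine Cauchy--Schwarz with an integral comparison for the harmonic-type sum. The differences are cosmetic: you prove \eqref{reb2.3.2} first and deduce \eqref{reb2.3.1}, and you bound $\sum_{n<K}1/(\beta+g(4)n)$ directly rather than citing Lemma \ref{aux.num}. Your per-step count $N_j^k$ is also the right one, whereas the paper's definition of $N_h^k(B_j)$ mistakenly sums over $h$ as well.
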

\begin{proof}
Define a partition $\{B_j\}_{j = 1}^{|\mathcal{C}_{\sigma}|}$ of $\mathcal{S} \times \mathcal{A}$ as follows:
\[
B_j = \bigg\{(s, a) \in \mathcal{S} \times \mathcal{A} : (s_j, a_j) = {\arg\min}_{(s_i, a_i)\in \mathcal{C}_{\sigma}}\rho[(s, a), (s_i, a_i)]\bigg\}
\]
Define the number of visits to each set $B_j$ as $$N^k_h(B_j) = \sum_{l = 1}^{k - 1}\sum_{h = 1}^H\mathbf{1}\{(s_h^l, a_h^l) \in B_j\}.$$ If $\mathcal{E}_h^k$ holds, assume that $(s_h^k, a_h^k) \in B_j$, then we have $\rho\big[(s_h^k, a_h^k), (s_{h}^l, a_{h}^l)\big] \leq 4\sigma$ for each $(s_{h}^l, a_{h}^l) \in B_j$. It follows that
\begin{align}
	C^k_h(\Tilde{s}_h^k, \Tilde{a}_h^k)= &\beta + \sum_{l = 1}^{k - 1}g\bigg(\frac{\rho[(\Tilde{s}_h^k, \Tilde{a}_h^k), (s_{h}^l, a^l_{h})]}{\sigma}\bigg)\notag\\
	\geq&\beta + \sum_{l = 1}^{k - 1}g\bigg(\frac{\rho[(\Tilde{s}_h^k, \Tilde{a}_h^k), (s_{h}^l, a^l_{h})]}{\sigma}\bigg)\mathbf{1}\{(s_{h}^l, a_{h}^l) \in B_j\}\notag\\
	\geq &\beta + g(4)\sum_{l = 1}^{k - 1}\mathbf{1}\{(s_{h}^l, a_{h}^l) \in B_j\}\notag\\
	= &\beta(1 + g(4)\beta^{-1}N^k_h(B_j)),\label{reb.5.1}
\end{align}
where we use the assumption that $g$ is not increasing.
For convenience, denote $\mathbf{1}\{(s_{h}^k, a_{h}^k) \in B_j\}$ by $\mathbf{1}(h, k, j)$. By Cauchy inequality,
\begin{align}
	\label{reb.5.cauchy}
	\sum_{k = 1}^K\sum_{h = 1}^H\sqrt{\frac{\beta_h^k\mathbf{1}\{\mathcal{E}_h^k\}}{C^k_h(\Tilde{s}_h^k, \Tilde{a}_h^k)}}
	\leq \sqrt{\sum_{k = 1}^K\sum_{h = 1}^H\beta_h^k\mathbf{1}\{\mathcal{E}_h^k\}}\sqrt{\sum_{j = 1}^{|\mathcal{C}_{\sigma}|}\sum_{k = 1}^K\sum_{h = 1}^H\frac{\mathbf{1}\{\mathcal{E}_h^k\}\mathbf{1}(h, k, j)}{C^k_h(\Tilde{s}_h^k, \Tilde{a}_h^k)}}.
\end{align}
Furthermore, from (\ref{reb.5.1}) and Lemma \ref{aux.num} it follows that
\[ \begin{aligned}
	\sqrt{\sum_{j = 1}^{|\mathcal{C}_{\sigma}|}\sum_{k = 1}^K\sum_{h = 1}^H\frac{\mathbf{1}\{\mathcal{E}_h^k\}\mathbf{1}(h, k, j)}{C^k_h(\Tilde{s}_h^k, \Tilde{a}_h^k)}}&\leq \beta^{-1/2}\sqrt{\sum_{j = 1}^{|\mathcal{C}_{\sigma}|}\sum_{k = 1}^K\sum_{h = 1}^H\frac{\mathbf{1}(h, k, j)}{1 + g(4)\beta^{-1}N_h^k(B_j)}}\\
	&\leq \beta^{-1/2}\sqrt{H|\mathcal{C}_{\sigma}| + \sum_{j = 1}^{|\mathcal{C}_{\sigma}|}\sum_{h = 1}^H\int_{0}^{N^K_h(B_j)}\frac{dz}{1 + g(4)\beta^{-1}z}}
\end{aligned}
\]
which further implies that
\[
\begin{aligned}
	\sqrt{\sum_{j = 1}^{|\mathcal{C}_{\sigma}|}\sum_{k = 1}^K\sum_{h = 1}^H\frac{\mathbf{1}\{\mathcal{E}_h^k\}\mathbf{1}(h, k, j)}{C^k_h(\Tilde{s}_h^k, \Tilde{a}_h^k)}}&\leq \beta^{-1/2}\sqrt{H|\mathcal{C}_{\sigma}| + \frac{\beta}{g(4)}\sum_{j = 1}^{|\mathcal{C}_{\sigma}|}\sum_{h = 1}^H\log(1 + g(4)\beta^{-1}N_h^K(B_j))}\\
	&\leq \sqrt{H|\mathcal{C}_{\sigma}|\Big(\frac{1}{\beta}+\frac{1}{g(4)}\log(1+\frac{g(4)K}{\beta})\Big)},
\end{aligned}
\]
Similarly, noting that
\[
\sum_{k = 1}^K\sum_{h = 1}^H\frac{\mathbf{1}\{\mathcal{E}_h^k\}}{C^k_h(\Tilde{s}_h^k, \Tilde{a}_h^k)} \leq \sum_{j = 1}^{|\mathcal{C}_{\sigma}|}\sum_{k = 1}^K\sum_{h = 1}^H\frac{\mathbf{1}\{\mathcal{E}_h^k\}\mathbf{1}(h, k, j)}{C^k_h(\Tilde{s}_h^k, \Tilde{a}_h^k)},
\]
we also have (\ref{reb2.3.2}).
\end{proof}
\begin{lemma}
\label{reb2.4}
In the event $\widetilde{\mathcal{E}}_3$, we have
\[
\begin{aligned}
	I_3 \leq \hat{I}_3 \leq 44\sqrt{I_5C_3|\mathcal{C}_{\sigma}|HD_{L_1,F,\sigma}} + 35\sqrt{I_4C_3|\mathcal{C}_{\sigma}|H\eta_{4}} + 232 HK\sigma + H^2|\mathcal{C}_{\sigma}|F_2(\eta, H, \sigma)+M_{\sigma
		, \delta}HK\sigma,
\end{aligned}
\]
where
\[
F_2(\eta, H, \sigma) =1+ C_3F_1(\eta, H, \sigma) + 44C_3\lambda_pL_1D_{L_1,H,\sigma} + 35C_3\lambda_pL_1\eta_4.
\]
\end{lemma}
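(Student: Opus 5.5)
The plan is to start from the bound of Lemma \ref{reb2.2}, $I_3\le\hat I_3\le I_{31}+I_{32}+I_{33}+M_{\sigma,\delta}HK\sigma$, valid on $\widetilde{\mathcal{E}}_3$, and bound each of $I_{31},I_{32},I_{33}$ with the pigeonhole-type estimates of Lemma \ref{reb2.3}. The term $I_{33}$ is immediate: by (\ref{reb2.3.2}),
\[
\sum_{k,h}\mathbf{1}\{\mathcal{E}_h^k\}\frac{HF_1(\eta,H,\sigma)}{C^k_h(\tilde s_h^k,\tilde a_h^k)}\le C_3H^2|\mathcal{C}_\sigma|F_1(\eta,H,\sigma).
\]
Adding the $H^2|\mathcal{C}_\sigma|$ term gives the ``$1+C_3F_1$'' part of $H^2|\mathcal{C}_\sigma|F_2$.

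For $I_{31}$ and $I_{32}$ we apply (\ref{reb2.3.1}) with $\beta_h^k=\V(V^k_{h+1}-V^*_{h+1},\tilde s_h^k,\tilde a_h^k)$ and with $\beta_h^k=\V(V^k_{h+1},\tilde s_h^k,\tilde a_h^k)$ respectively. This yields
\[
44\sqrt{C_3H|\mathcal{C}_\sigma|D_{L_1,H,\sigma}\textstyle\sum_{k,h}\mathbf{1}\{\mathcal{E}_h^k\}\V(V^k_{h+1}-V^*_{h+1},\tilde s_h^k,\tilde a_h^k)},
\]
and similarly for $I_{32}$. It then remains to replace the variances at the anchor point $(\tilde s_h^k,\tilde a_h^k)$ by variances at the visited pair $(s_h^k,a_h^k)$, so that the sums become $I_5$ and $I_4$. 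For this we use the Lipschitz estimate (\ref{app.con.3}). For an $L$-Lipschitz $f$ with values in $[0,F]$, it gives $|\V_h(f,s',a')-\V_h(f,s,a)|\le 4\lambda_pLF\rho$. Here $V^k_{h+1}-V^*_{h+1}$ is $2L_1$-Lipschitz, and $V^k_{h+1}$ is $L_1$-Lipschitz, both bounded by $H$. On $\mathcal{E}_h^k$ we have $\rho\le2\sigma$, so each anchor variance exceeds the true one by at most $c\lambda_pL_1H\sigma$. Summing over at most $HK$ terms, we get $\sum\beta_h^k\le I_5+c\lambda_pL_1H^2K\sigma$.

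Next we split with $\sqrt{a+b}\le\sqrt a+\sqrt b$. The main part gives $44\sqrt{I_5C_3|\mathcal{C}_\sigma|HD}$ and $35\sqrt{I_4C_3|\mathcal{C}_\sigma|H\eta_4}$. The cross term has the form $\sqrt{C_3H|\mathcal{C}_\sigma|D\cdot\lambda_pL_1H^2K\sigma}$, and we handle it by AM-GM, $2\sqrt{xy}\le x+y$, with $x=HK\sigma$ and $y=C_3H^2|\mathcal{C}_\sigma|\lambda_pL_1D$. This produces an $HK\sigma$ contribution, which accounts for the constant $232HK\sigma$ after collecting $44$ and $35$ times numerical factors. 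It also produces an $H^2|\mathcal{C}_\sigma|$ contribution with the coefficients $44C_3\lambda_pL_1D_{L_1,H,\sigma}$ and $35C_3\lambda_pL_1\eta_4$ appearing in $F_2$. The $M_{\sigma,\delta}HK\sigma$ term is carried over unchanged.

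The main obstacle is the anchor-to-visited transfer. We must confirm that the Lipschitz constants of $V^k_{h+1}$ and $V^k_{h+1}-V^*_{h+1}$ are bounded as claimed. The constant for $V^k_{h+1}$ follows from the $L_{h+1}$-Lipschitz interpolation in the definition of $Q^k_{h+1}$ and the truncation by $H$. The constant for the difference follows from Lemma \ref{lem.1}. The remaining work is tracking the numerical constants so that they match $232$ and $F_2$. If the exact constants fail to match, the structure of the bound (and hence the $\widetilde O$ statement) is unaffected.
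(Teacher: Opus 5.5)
Your proposal is correct and follows the paper's proof essentially step for step: apply Lemma \ref{reb2.3} to $I_{31}$, $I_{32}$, $I_{33}$; move the variances from $(\tilde s_h^k,\tilde a_h^k)$ to $(s_h^k,a_h^k)$ via (\ref{app.con.3}), at extra cost $16\lambda_pL_1H^2K\sigma$ and $8\lambda_pL_1H^2K\sigma$ respectively; then split with $\sqrt{a+b}\le\sqrt a+\sqrt b$ and absorb the cross terms by AM--GM.

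The only caveat is bookkeeping that the paper shares. With $x=HK\sigma$, $y=C_3H^2|\mathcal{C}_\sigma|\lambda_pL_1D_{L_1,H,\sigma}$ and $y'=C_3H^2|\mathcal{C}_\sigma|\lambda_pL_1\eta_4$, the cross terms are $44\sqrt{16xy}$ and $35\sqrt{8xy'}$. Keeping the $F_2$ coefficients $44$ and $35$ on $y$ and $y'$ forces $176x+70x=246\,HK\sigma$ rather than $232\,HK\sigma$, so your claim that the AM--GM step produces the constant $232$ does not actually hold; this is immaterial for the $\widetilde O$ conclusion.
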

\begin{proof}
Following the same discussion in (\ref{app.con.3}), we have
\begin{align}
	\sum_{k = 1}^K\sum_{h = 1}^H\mathbf{1}\{\mathcal{E}_h^k\}\V
	_h(V_h^k, \tilde{s}_h^k, \tilde{a}_h^k) &\leq \sum_{k = 1}^K\sum_{h = 1}^H\V_h(V_h^k, s_h^k, a_h^k) + 8\lambda_pL_1H^2K\sigma,\label{app.reb.9.1}\\
	\sum_{k = 1}^K\sum_{h = 1}^H\mathbf{1}\{\mathcal{E}_h^k\}\V_h(V_{h + 1}^k - V^*_{h + 1}, \tilde{s}_h^k, \tilde{a}_h^k) &\leq \sum_{k = 1}^K\sum_{h = 1}^H\V_h(V_{h + 1}^k - V^*_{h + 1}, s_h^k, a_h^k) + 16\lambda_pL_1H^2K\sigma.\label{app.reb.9.2}
\end{align}
Lemma \ref{reb2.3} implies that
\[
\begin{aligned}
	I_{31} &\leq 44\sqrt{\sum_{k = 1}^K\sum_{h = 1}^H\mathbf{1}\{\mathcal{E}_h^k\}\V_h(V^k_{h + 1} - V^*_{h + 1}, \Tilde{s}_h^k, \Tilde{a}^k_h)C_3H|\mathcal{C}_{\sigma}|D_{L_1,H,\sigma}} ;\\
	I_{32} &\leq 35\sqrt{\sum_{k = 1}^K\sum_{h = 1}^H\mathbf{1}\{\mathcal{E}_h^k\}\V_h(V^k_{h +1}, \Tilde{s}_h^k, \Tilde{a}^k_h)C_3H|\mathcal{C}_{\sigma}|\eta_{4}};\\
	I_{33} &\leq H^2|\mathcal{C}_{\sigma}| +  C_3H^2|\mathcal{C}_{\sigma}|F_1(\eta, H, \sigma).
\end{aligned}
\]
Combining the above inequalities and applying Cauchy's inequality, we get the desired result from Lemma \ref{reb2.2}.
\end{proof}

\bmhead{Step 2 : bounding $I_4, I_5$} Now we seek to bound $I_4$ and $I_5$.
\begin{lemma}
\label{reb2.5}
$\pr(\widetilde{\mathcal{E}}_4) \geq 1 - 2(\lfloor\log_{2}(HK)\rfloor + 3)\delta$ where
$$
\widetilde{\mathcal{E}}_4 = \big\{I_4 \leq 20H^2\eta + 4H\max\{I_3, 0\} + 2H^2K\big\}\cap\big\{I_5 \leq 20H^2\eta + 4H\max\{I_3, 0\}\big\}.
$$

\end{lemma}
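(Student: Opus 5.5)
The plan is the standard ``law of total variance'' argument: rewrite each conditional variance as a telescoping sum plus a martingale, control the martingale with Lemma~\ref{aux.2}, and solve the resulting self-bounding quadratic inequality. I treat $I_4$ and $I_5$ in parallel, with $f^k_{h+1}=V^k_{h+1}$ for $I_4$ and $f^k_{h+1}=V^k_{h+1}-V^*_{h+1}$ for $I_5$; both take values in $[-H,H]$ (in fact in $[0,H]$ for the second on the optimism event of Lemma~\ref{opt}).

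\textbf{Step 1: decomposition.} Writing $\V_h(f,s,a)=P_hf^2(s,a)-(P_hf(s,a))^2$, I split
\begin{align*}
\sum_{k,h}\V_h(f^k_{h+1},s_h^k,a_h^k)
&=\sum_{k,h}\Big(P_h(f^k_{h+1})^2(s_h^k,a_h^k)-f^k_{h+1}(s_{h+1}^k)^2\Big)\\
&\quad+\sum_{k,h}\Big(f^k_{h+1}(s_{h+1}^k)^2-f^k_{h}(s_{h}^k)^2\Big)\\
&\quad+\sum_{k,h}\Big(f^k_h(s_h^k)^2-(P_hf^k_{h+1}(s_h^k,a_h^k))^2\Big).
\end{align*}
The middle sum telescopes in $h$ to $\sum_k\big(f^k_{H+1}(s^k_{H+1})^2-f^k_1(s^k_1)^2\big)\le 0$, because $f^k_{H+1}\equiv 0$.

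For the third sum I use $x^2-y^2=(x+y)(x-y)\le 2H(x-y)$ whenever $x-y\ge 0$ and $x,y\in[0,H]$. When $x<y$ the term is negative and can be dropped; this is why $\max\{I_3,0\}$ appears in the statement.
\begin{itemize}
\item For $I_4$: $V^k_h(s^k_h)-P_hV^k_{h+1}(s^k_h,a^k_h)=r_h(s^k_h,a^k_h)+\big(V^k_h(s^k_h)-r_h-P_hV^k_{h+1}\big)$. Summing over $h,k$ with $\sum_h r_h\le H$ bounds the third sum by $2H^2K+2H\,I_3$.
\item For $I_5$: on the optimism event, $V^*_h(s^k_h)\ge Q^*_h(s^k_h,a^k_h)=r_h+P_hV^*_{h+1}$. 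Hence
\[
f^k_h(s^k_h)-P_hf^k_{h+1}\le V^k_h(s^k_h)-r_h-P_hV^k_{h+1},
\]
and the third sum is at most $2H\,I_3$ with no reward term.
\end{itemize}

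\textbf{Step 2: the martingale term.} Order the pairs $(k,h)$ lexicographically with the filtration $\mathcal{F}_n$ used in Lemma~\ref{reb2.1}. Then
\[
X_n=P_h(f^k_{h+1})^2(s_h^k,a_h^k)-f^k_{h+1}(s^k_{h+1})^2
\]
is a martingale difference sequence with $|X_n|\le H^2$, because $f^k_{h+1}$ is determined before episode $k$. Lemma~\ref{aux.2} (applied to $X_n/H^2$ and rescaled) gives, with probability $1-(\lfloor\log_2(HK)\rfloor+3)\delta$,
\[
\Big|\sum X_n\Big|\le 2\sqrt{\eta\textstyle\sum\E[X_n^2|\mathcal{F}_n]}+2H^2\eta .
\]
The key observation is that $\E[X_n^2\mid\mathcal{F}_n]=\V_h((f^k_{h+1})^2)\le 4H^2\,\V_h(f^k_{h+1})$, using $|a^2-b^2|\le 2H|a-b|$. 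Therefore $\sum\E[X_n^2|\mathcal{F}_n]\le 4H^2 I$, with $I\in\{I_4,I_5\}$.

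\textbf{Step 3: solving the quadratic.} Combining the steps gives
\[
I\le 4H\sqrt{\eta I}+2H^2\eta+A,\qquad A=2H\max\{I_3,0\}+2H^2K\ \text{(or without $2H^2K$ for $I_5$)}.
\]
Using $4H\sqrt{\eta I}\le I/2+8H^2\eta$ yields
\[
I\le 20H^2\eta+2A,
\]
which matches the stated constants. A union bound over the two applications (one for $I_4$, one for $I_5$) gives the probability $1-2(\lfloor\log_2(HK)\rfloor+3)\delta$.

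\textbf{Expected obstacle.} The delicate point is the $I_5$ bound. It requires optimism so that $f^k\ge0$ and the inequality $V^*_h\ge r_h+P_hV^*_{h+1}$ can be used; I will therefore state it on the intersection with $\mathcal{E}$ (Lemma~\ref{opt}). Care is also needed to keep the variance proxy in Lemma~\ref{aux.2} predictable, i.e.\ $f^k$ must be $\mathcal{F}_n$-measurable, which holds because $V^k$ is computed from data collected before episode $k$.
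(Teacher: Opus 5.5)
Your route is the same as the paper's: the total-variance telescoping, Lemma~\ref{aux.2} applied to $P_h(f^k_{h+1})^2-f^k_{h+1}(s^k_{h+1})^2$ together with $\V_h(f^2)\le 4H^2\V_h(f)$, and then solving the quadratic. However, the step that produces $\max\{I_3,0\}$ is a genuine gap.

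Put $d^k_h=V^k_h(s^k_h)-r_h(s^k_h,a^k_h)-P_hV^k_{h+1}(s^k_h,a^k_h)$, so $I_3=\sum_{k,h}d^k_h$. Dropping the terms with $x<y$ only gives $2H\sum_{k,h}\max\{x-y,0\}\le 2H^2K+2H\sum_{k,h}\max\{d^k_h,0\}$ for the third sum in $I_4$, and $2H\sum_{k,h}\max\{d^k_h,0\}$ for $I_5$. That is a sum of positive parts. But $\sum_{k,h}\max\{d^k_h,0\}\ge\max\{I_3,0\}$, strictly as soon as some $d^k_h<0$. For $I_4$, every dropped term has $d^k_h<-r_h\le 0$, so the retained $d^k_h$ sum to more than $I_3$ whenever anything is dropped.

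Negative $d^k_h$ do occur. In the first episode $V^1\equiv H$ gives $d^1_h=-r_h$ for $h<H$. Later, the truncation at $H$, the interpolating minimum defining $Q^k_h$, and the error $(\widehat P^k_h-P_h)V^k_{h+1}$ can all make $d^k_h$ negative; the bonus is calibrated against the error on $V^*_{h+1}$, not on $V^k_{h+1}$. Your sentence ``negative terms can be dropped, hence $\max\{I_3,0\}$'' conflates $\sum\max\{\cdot,0\}$ with $\max\{\sum\cdot,0\}$, and that inequality goes the wrong way.

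The missing ingredient, which the paper uses, is a \emph{nonnegative} per-step upper bound on $d^k_h$ applied before summing. By the chain in the proof of Lemma~\ref{reb2.2}, on $\mathcal{E}^k_h$ one has $d^k_h\le b^k_h(\tilde s^k_h,\tilde a^k_h)+|(\widehat P^k_h-P_h)V^k_{h+1}(\tilde s^k_h,\tilde a^k_h)|+2(L_1+\lambda_r+\lambda_pL_1)\sigma$, and $d^k_h\le H$ otherwise. Hence $\sum_{k,h}\max\{d^k_h,0\}\le\hat I_3$. With this substitution your argument yields $I_4\le 20H^2\eta+4H\hat I_3+4H^2K$ and $I_5\le 20H^2\eta+4H\hat I_3$. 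That is exactly what the paper's proof establishes (not the $\max\{I_3,0\}$ form of the statement), and it is what Step~3 needs, since the self-bounding there, via Lemma~\ref{reb2.4}, is in $\hat I_3$.

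Two smaller points:
\begin{itemize}
\item Your own arithmetic, $I\le 20H^2\eta+2A$ with $2H^2K$ inside $A$, gives $4H^2K$, not the $2H^2K$ of the statement, so the constants do not match as you claim. The paper's proof also ends with $4H^2K$, which is harmless downstream.
\item Your observation that the $I_5$ bound needs optimism, so that it actually holds on the intersection with $\mathcal{E}$, is correct. The paper uses this only tacitly.
\end{itemize}
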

\begin{proof}
It is easy to see that
\begin{align}
	&I_4 = \sum_{k = 1}^K\sum_{h = 1}^H\V\big(V^k_{h + 1}, s_h^k, a_h^k\big)\notag\\
	&= \sum_{k = 1}^K\sum_{h = 1}^H\Big[\Big(P\big[V^k_{h +1}\big]^2(s_h^k, a_h^k) - \big[V_{h+1}^k(s_{h+1}^k)\big]^2\Big) + \Big(\big[V_{h}^k(s_{h}^k)\big]^2 - \big[PV^k_{h + 1}\big]^2(s_h^k, a_h^k)\Big)\Big] - \sum_{k = 1}^K\big[V_{1}^k(s_{1}^k)\big]^2\notag\\
	&\leq \sum_{k = 1}^{K}\sum_{h = 1}^{H}\Big(P[V^k_{h + 1}]^2(s_h^k, a_h^k) - \big[V^k_{h + 1}(s_{h + 1}^k)\big]^2\Big) + 2H\sum_{k = 1}^{K}\sum_{h = 1}^{H}\max\left\{\Big(V^k_{h}(s_{h}^k) - PV^k_{h + 1}(s_h^k, a_h^k)\Big), 0\right\}.\label{i4_upper_bound}
\end{align}

Similar to the proof of Lemma \ref{reb2.1}, with probability at least $1 - (\lfloor\log_{2}(HK)\rfloor + 3)\delta$,

\begin{align}
	\sum_{k = 1}^{K}\sum_{h = 1}^{H}\Big(P[V^k_{h + 1}]^2(s_h^k, a_h^k) - \big[V^k_{h + 1}(s_{h + 1}^k)\big]^2\Big) &\leq 2\sqrt{\sum_{k = 1}^{K}\sum_{h = 1}^{H}\V((V_{h + 1}^k)^2, s_h^k, a_h^k)\eta} + 2H^2\eta\notag\\
	&\leq 4H\sqrt{\sum_{k = 1}^{K}\sum_{h = 1}^{H}\V(V_{h + 1}^k, s_h^k, a_h^k)\eta} + 2H^2\eta,\label{i4_first_1upper_bound}
\end{align}

where the final inequality is based on Lemma \ref{aux.7}.
By the definition of $\hat{I}_3$, we have that
\begin{align}
    &\sum_{k = 1}^{K}\sum_{h = 1}^{H}\max\left\{\Big(V^k_{h}(s_{h}^k) - PV^k_{h + 1}(s_h^k, a_h^k)\Big), 0\right\}\notag\\
    \leq &\sum_{k = 1}^{K}\sum_{h = 1}^{H}\mathbf{1}\{\mathcal{E}_h^k\}\max\left\{\Big(Q^k_{h}(s_{h}^k, a_h^k) - PV^k_{h + 1}(s_h^k, a_h^k)\Big), 0\right\} + \sum_{k = 1}^{K}\sum_{h = 1}^{H}\mathbf{1}\{\bar{\mathcal{E}}_h^k\}\notag\\
    \leq &\sum_{k = 1}^{K}\sum_{h = 1}^{H}\max\left\{\Big(\widetilde{Q}^k_{h}(\tilde{s}_{h}^k, \tilde{a}_h^k) - PV^k_{h + 1}(\tilde{s}_h^k, \tilde{a}_h^k) + L_h\square_h^k\Big), 0\right\} + H|\mathcal{C}_{\sigma}|\notag\\
    \leq &\sum_{k = 1}^{K}\sum_{h = 1}^{H}\mathbf{1}\{\mathcal{E}_h^k\}\max\left\{\Big(r^k_{h}(\tilde{s}_{h}^k, \tilde{a}_h^k) + b_h^k(\tilde{s}_{h}^k, \tilde{a}_h^k) + |(\widehat{P}^k_h - P)V^k_{h + 1}(\tilde{s}_h^k, \tilde{a}_h^k)| + L_h\sigma\Big), 0\right\} + H|\mathcal{C}_{\sigma}|\notag\\
    \leq &\sum_{k = 1}^{K}\sum_{h = 1}^{H}\mathbf{1}\{\mathcal{E}_h^k\}\max\left\{\Big(b_h^k(\tilde{s}_{h}^k, \tilde{a}_h^k) + |(\widehat{P}^k_h - P)V^k_{h + 1}(\tilde{s}_h^k, \tilde{a}_h^k)| + (L_r + L_h)\sigma\Big), 0\right\} + H|\mathcal{C}_{\sigma}| + \sum_{k = 1}^{K}\sum_{h = 1}^{H}r_h(s_h^k,a_h^k)\notag\\
    \leq &\hat{I}_3 + HK \label{i4_second_upper_bound}
\end{align}
Combining \eqref{i4_upper_bound}, \eqref{i4_first_1upper_bound} and \eqref{i4_second_upper_bound} yields that
\[
\begin{aligned}
	I_4 &\leq \sum_{k = 1}^{K}\sum_{h = 1}^{H}\Big(P[V^k_{h + 1}]^2(s_h^k, a_h^k) - \big[V^k_{h + 1}(s_{h + 1}^k)\big]^2\Big) + 2H\hat{I}_3 + 2H^2K\\
	&\leq 4H\sqrt{I_4\eta} + 2H^2\eta + 2H\hat{I}_3 + 2H^2K.
\end{aligned}
\]
It follows that
\[
\big(\sqrt{I_4} - 2H\sqrt{\eta}\big)^2 \leq 6H^2\eta + 2H\hat{I}_3 + 2H^2K,
\]
which implies
\begin{align}
	I_4 \leq 20H^2\eta + 4H\hat{I}_3 + 4H^2K\label{app.reb.10}.
\end{align}
Let $\check{V}_h^k = V^k_h - V^*_{h}$. Similarly, we have
\[
\begin{aligned}
	I_5 &= \sum_{k = 1}^K\sum_{h = 1}^H\V\big(\check{V}^k_{h + 1}, s_h^k, a_h^k\big)\\
	&= \sum_{k = 1}^K\sum_{h = 1}^H\Big[\Big(P\big[\check{V}^k_{h +1}\big]^2(s_h^k, a_h^k) - \big[\check{V}_{h +1}^k(s_{h + 1}^k)\big]^2\Big) + \Big(\big[\check{V}_{h}^k(s_{h}^k)\big]^2 - \big[P\check{V}^k_{h + 1}\big]^2(s_h^k, a_h^k)\Big)\Big] - \sum_{k=1}^K\big[\check{V}_{1}^k(s_{1}^k)\big]^2\\
	&\leq \sum_{k = 1}^{K}\sum_{h = 1}^{H}\Big(P[\check{V}^k_{h + 1}]^2(s_h^k, a_h^k) - \big[\check{V}^k_{h + 1}(s_{h + 1}^k)\big]^2\Big)+ 2H\sum_{k = 1}^{K}\sum_{h = 1}^{H}\max\left\{\Big(\check{V}^k_{h + 1}(s_{h + 1}^k) - P\check{V}^k_{h + 1}(s_h^k, a_h^k)\Big), 0\right\}.
\end{aligned}
\]
With probability at least $1 - (\lfloor\log_{2}(HK)\rfloor + 3)\delta$,
\beqnn
\sum_{k = 1}^{K}\sum_{h = 1}^{H}\Big(P[\check{V}^k_{h + 1}]^2(s_h^k, a_h^k) - \big[\check{V}^k_{h + 1}(s_{h + 1}^k)\big]^2\Big) &\leq &2\sqrt{\sum_{k = 1}^{K}\sum_{h = 1}^{H}\V((\check{V}_{h + 1}^k)^2, s_h^k, a_h^k)\eta} + 2H^2\eta\\
&\leq & 4H\sqrt{\sum_{k = 1}^{K}\sum_{h = 1}^{H}\V(\check{V}_{h + 1}^k, s_h^k, a_h^k)\eta} + 2H^2\eta.
\eeqnn
It is easy to check that
\[
\widetilde{V}^k_{h}(s_{h + 1}^k)-P\check{V}^k_{h + 1}(s_h^k, a_h^k) \leq V^k_{h}(s_h^k, a_h^k)-r_h(s_h^k, a_h^k) - PV^k_{h + 1}(s_h^k, a_h^k).
\]
By the same analysis of \eqref{i4_second_upper_bound}, we have
\[
\begin{aligned}
	I_5 &\leq \sum_{k = 1}^{K}\sum_{h = 1}^{H}\Big(P[\check{V}^k_{h + 1}]^2(s_h^k, a_h^k) - \big[\check{V}^k_{h + 1}(s_{h + 1}^k)\big]^2\Big) + 2H\hat{I}_3\\
	&\leq 4H\sqrt{I_5\eta} + 2H^2\eta + 2H\hat{I}_3.
\end{aligned}
\]
Similar to the discussion of (\ref{app.reb.10}), we have that
$I_5 \leq 20H^2\eta + 4H\hat{I}_3.$
\end{proof}
\bmhead{Step 3 : putting them together} Lemma \ref{reb2.1}, Lemma \ref{reb2.4} and Lemma \ref{reb2.5}  show that on the event $\widetilde{\mathcal{E}}_2\cap\widetilde{\mathcal{E}}_3\cap\widetilde{\mathcal{E}}_4$, we have
\[
\begin{aligned}
I_2 &\leq 2\sqrt{I_4\eta} + 2H\eta,\\
I_3 &\leq\hat{I}_3\leq 44\sqrt{I_5C_3|\mathcal{C}_{\sigma}|HD_{L_1,H,\sigma}} + 35\sqrt{I_4C_3|\mathcal{C}_{\sigma}|H\eta_{4}}\\
&\qquad\qquad\qquad\qquad\qquad\qquad\qquad\qquad + H^2|\mathcal{C}_{\sigma}|F_2(\eta, H, \sigma)+232HK\sigma +M_{\sigma
	, \delta}HK\sigma,\\
I_4 &\leq 20H^2\eta + 4H\hat{I}_3 + 4H^2K,\\
I_5 &\leq 20H^2\eta + 4H\hat{I}_3.
\end{aligned}
\]
It follows that
\[
\hat{I}_3 \leq 158\sqrt{\hat{I}_3C_3H^2|\mathcal{C}_{\sigma}|F_1(\eta, H, \sigma)} + G(H, K, \delta, \sigma),
\]
where
\[
\begin{aligned}
G(H, K, \delta, \sigma) :=& 150\sqrt{5C_3H^3|\mathcal{C}_{\sigma}|F_1(\eta, H, \sigma)\eta} +  70\sqrt{C_3H^3K|\mathcal{C}_{\sigma}|\eta_4}\\
&\qquad\qquad\qquad\qquad\qquad+  H^2|\mathcal{C}_{\sigma}|F_2(\eta, H, \sigma) +(M_{\sigma
	, \delta} + 232)HK\sigma.
	\end{aligned}
	\]
	Observe that
	\[
	\begin{aligned}
M_{\sigma, \delta} &= 2L_1 + 2\lambda_r + 2\lambda_p L_1 + D_{\delta}  + 6\widetilde{D}_{L_1, H}\\
&\leq 2L_1 + 2\lambda_r + 2\lambda_p L_1 + 16\sqrt{\frac{C_2\eta_1\eta}{\beta^3}} + \big(C_2\eta_4 + 2C_2\big)\frac{\sigma}{H\beta^2} + \lambda_pL_1\frac{\sigma^2}{KH^2} + \eta_3\\
&\quad+ 12\sqrt{\frac{4C_1\lambda_pL_1\sigma}{\beta KH^2} + \frac{C_1C_2}{\beta^2}} + 6(\eta_5H + 3)\frac{C_2\sigma}{H^2\beta^2} + 6\lambda_pL_1\frac{\sigma^2}{KH^2} + 48\lambda_pL_1\gamma + 72L_1\\
&\leq \widetilde{O}\big(1\big),\\
F_1(\eta, H, \sigma) &= 12D_{L_1,H,\sigma} + 165\eta_4 \leq \widetilde{O}(\widetilde{\mathcal{C}}_{\sigma})\\
F_2(\eta, H, \sigma) &=1+ C_3F_1(\eta, H, \sigma) + 44C_3\lambda_pL_1D_{L_1,H,\sigma} + 35C_3\lambda_pL_1\eta_4\leq \widetilde{O}(\widetilde{\mathcal{C}}_{\sigma}).
\end{aligned}
\]
It follows that $G(H, K, \delta, \sigma)\leq \widetilde{O}\big(HK\sigma + H^2|\mathcal{C}_{\sigma}||\widetilde{\mathcal{C}}_{\sigma}| + \sqrt{H^3K|\mathcal{C}_{\sigma}|}\big)$. Consequently,
\[
\begin{aligned}
I_3\leq \hat{I}_3 &\leq 158\sqrt{\hat{I}_3C_3H^2|\mathcal{C}_{\sigma}|F_1(\eta, H, \sigma)} + G(H, K, \delta, \sigma)\\
&\leq \widetilde{O}\big(HK\sigma + H^2|\mathcal{C}_{\sigma}||\widetilde{\mathcal{C}}_{\sigma}| + \sqrt{H^3K|\mathcal{C}_{\sigma}|}\big),
\end{aligned}
\]
and
\[
\begin{aligned}
I_2 &\leq 2\sqrt{I_4\eta} + 2H\eta \leq \widetilde{O}\big(\sqrt{H^3K|\mathcal{C}_{\sigma}|} + \sqrt{H^2|\mathcal{C}_{\sigma}||\widetilde{\mathcal{C}}_{\sigma}|} +\sqrt{HK\sigma}\big).
\end{aligned}
\]
Since
\[
\pr(\widetilde{\mathcal{E}}_2\cap\widetilde{\mathcal{E}}_3\cap\widetilde{\mathcal{E}}_4) \geq 1 - \bigg(3 + 8|\mathcal{C}_{\sigma^3/(
D_{L_1,F,\sigma} KH^4)}|\bigg)H^2K(\lfloor \log_2 (HK)\rfloor+3)\delta,
\]
we complete the proof of Lemma \ref{reb.2}.\qed

\subsection{Proof of Theorem \ref{thm.1}}
For any $\delta > 0$, from Lemma \ref{goe}, Lemma \ref{opt}, Lemma \ref{concentration}, Lemma \ref{reb.1}, and Lemma \ref{reb.2}, we have
\[
\mathcal{R}(K) \leq I_1 + I_2 + I_3 \leq \widetilde{O}\big(\sqrt{H^3K|\mathcal{C}_{\sigma}|} + H^2|\mathcal{C}_{\sigma}||\widetilde{\mathcal{C}}_{\sigma}| + HK\sigma\big)
\]
holds on events $\mathcal{E}\cap\widetilde{\mathcal{E}}_1\cap\widetilde{\mathcal{E}}_2\cap\widetilde{\mathcal{E}}_3\cap\widetilde{\mathcal{E}}_4$, and
\[
\pr(\mathcal{E}\cap\widetilde{\mathcal{E}}_1\cap\widetilde{\mathcal{E}}_2\cap\widetilde{\mathcal{E}}_3\cap\widetilde{\mathcal{E}}_4) \geq 1 - \delta - \bigg(3 + 10|\mathcal{C}_{\sigma^3/(
D_{L_1,F,\sigma} KH^4)}|\bigg)H^2K(\lfloor \log_2 (HK)\rfloor+3)\delta.
\]
Let $\delta' = \delta + \big(3 + 10|\mathcal{C}_{\sigma^3/(
D_{L_1,F,\sigma} KH^4)}|\big)H^2K(\lfloor \log_2 (HK)\rfloor+3)\delta$. Then
\[
\begin{aligned}
\log\delta' &\sim \log\delta + \log(HK) + \log\big(|\mathcal{C}_{\sigma^3/(
	D_{L_1,F,\sigma} KH^4)}|\big)\\
&\sim \log\delta + \log(HK) + d_{_{SA}}\log (1/\sigma).
\end{aligned}
\]
With probability at least $1 - \delta'$, we have
\[
\mathcal{R}(K) \leq \widetilde{O}\big(\sqrt{H^3K|\mathcal{C}_{\sigma}|} + H^2|\mathcal{C}_{\sigma}||\widetilde{\mathcal{C}}_{\sigma}| + HK\sigma\big).
\]
Denote $\delta'$ as $\delta$ again, we get the desired result.\qed

\section{Auxiliary Lemmas}
\label{app.aux}
We collect
some auxiliary results here. They have been established in previous works and are restated
for the convenience of reference.
\begin{lemma}[Lemma 5 in \cite{Domingues2020KernelBasedRL}]
    \label{aux.6}
    Let $\mathcal{F}_{L, F}$ be a set of $L$-Lipschitz functions from metric space $(\mathcal{S}, \rho_{\mathcal{S}})$ to $[0, F]$. Then its $\varepsilon$-covering number
with respect to the infinity norm is bounded as follows
    \[
        \mathcal{N}(\varepsilon, \mathcal{F}_{L, F}, \Vert\cdot\Vert_{\infty}) \leq \bigg(\frac{8F}{\varepsilon}\bigg)^{\mathcal{N}(\varepsilon/(4L), \mathcal{S}, \rho)}.
    \]
\end{lemma}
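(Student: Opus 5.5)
We need to prove Lemma aux.6: a covering number bound for $L$-Lipschitz functions into $[0,F]$. The approach is a standard discretization argument: approximate each $f$ by a piecewise-constant function built from a net of the domain together with a grid of the range.

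\textbf{Setup.} Set $r=\varepsilon/(4L)$ and let $\{x_1,\dots,x_N\}$ be an $r$-cover of $\mathcal{S}$ with $N=\mathcal{N}(\varepsilon/(4L),\mathcal{S},\rho)$. Form the associated Voronoi-type partition $B_i$, where each point is assigned to its nearest center with ties broken by index. For any $f\in\mathcal{F}_{L,F}$ and any $x\in B_i$, Lipschitzness gives $|f(x)-f(x_i)|\le Lr=\varepsilon/4$. So $f$ is within $\varepsilon/4$ in sup norm of the piecewise-constant function $\sum_i f(x_i)\mathbf 1_{B_i}$.

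\textbf{Quantizing the range.} Next discretize $[0,F]$ with the grid $G=\{j\varepsilon/4: j=0,1,\dots,\lceil 4F/\varepsilon\rceil\}$. Round each value $f(x_i)$ to the nearest grid point $q_i$, so $|f(x_i)-q_i|\le\varepsilon/8$. The function $\tilde f=\sum_i q_i\mathbf 1_{B_i}$ then satisfies $\|f-\tilde f\|_\infty\le 3\varepsilon/8<\varepsilon/2$. The number of distinct such $\tilde f$ is at most $|G|^N\le(4F/\varepsilon+2)^N$.

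\textbf{Centers inside the class.} The $\tilde f$ are not Lipschitz, so they may lie outside $\mathcal{F}_{L,F}$. If the cover is required to consist of class members, handle this in the usual way. For each $\tilde f$ whose $\varepsilon/2$-ball meets $\mathcal{F}_{L,F}$, pick one member $f_{\tilde f}$ in that ball. By the triangle inequality, every $f$ assigned to $\tilde f$ satisfies $\|f-f_{\tilde f}\|_\infty\le\varepsilon$. This yields an $\varepsilon$-cover of size at most $(4F/\varepsilon+2)^N$.

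\textbf{Final bound and main obstacle.} It remains to bound $4F/\varepsilon+2\le 8F/\varepsilon$, which holds whenever $\varepsilon\le 2F$. If $\varepsilon>2F$, a single center (the constant $F/2$, which is $0$-Lipschitz) already covers the class, and the bound holds trivially since $8F/\varepsilon\ge1$ for $\varepsilon\le 8F$, while the claim is vacuous or trivial otherwise. The only real care points are this bookkeeping of constants and the membership-of-centers issue. Both are routine, so I expect no substantive obstacle.
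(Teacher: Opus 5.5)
Your argument ($\varepsilon/(4L)$-net of $\mathcal{S}$, rounding each $f(x_i)$ to an $\varepsilon/4$-grid of $[0,F]$, then replacing the non-Lipschitz step functions by class members at the cost of at most doubling the radius) is the standard proof of this lemma, which the paper does not prove but cites from Domingues et al., and it is correct whenever $\varepsilon\le 8F$. One correction to your last paragraph: for $\varepsilon>8F$ the inequality is not vacuous but false, since with $N=\mathcal{N}(\varepsilon/(4L),\mathcal{S},\rho)\ge 1$ the right-hand side $(8F/\varepsilon)^{N}$ is $<1$ while the nonempty class $\mathcal{F}_{L,F}$ has covering number at least $1$; so the lemma tacitly requires $\varepsilon\le 8F$, which holds where the paper applies it ($\varepsilon=4L\sigma$ with $\sigma$ small).
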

\begin{lemma}[Lemma 30 in \cite{chen2021}]
    \label{aux.7}
    For any random variable X satisfying $|X| \leq C$, we have
    \[
        \var[X^2] \leq 4C^2\var[X].
    \]
\end{lemma}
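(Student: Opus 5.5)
The plan is to use the variational characterization of the variance: for any square-integrable random variable $Y$ and any constant $t\in\mathbb{R}$,
\[
\var[Y] = \E\big[(Y-\E Y)^2\big] \leq \E\big[(Y-t)^2\big],
\]
because $\E[(Y-t)^2] = \var[Y] + (\E Y - t)^2$. I will apply this with $Y = X^2$. Rather than centering at $\E[X^2]$, I will center at a constant adapted to $X$, namely $t = m^2$ where $m := \E X$. The point of this choice is that the difference $X^2 - m^2$ factors.

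First, boundedness gives $|m| \leq \E|X| \leq C$. Next, with $t=m^2$ we get
\[
\var[X^2] \leq \E\big[(X^2 - m^2)^2\big] = \E\big[(X-m)^2 (X+m)^2\big].
\]
Then, since $|X|\leq C$ almost surely and $|m|\leq C$, we have $(X+m)^2 \leq (|X|+|m|)^2 \leq 4C^2$ almost surely. Substituting this bound,
\[
\var[X^2] \leq 4C^2\,\E\big[(X-m)^2\big] = 4C^2\var[X],
\]
which is the claim.

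There is no real obstacle here. The only step that needs care is choosing the centering constant $m^2$ instead of $\E[X^2]$, since that choice is what produces the factor $(X-m)^2$ and hence $\var[X]$. The argument applies unchanged to conditional variances, with $\E[\cdot\mid\mathcal{G}]$ in place of $\E$. This covers the way the lemma is used in the paper, for instance with $\V_h((V^k_{h+1})^2, s, a)$ under $P_h(\cdot\mid s,a)$.
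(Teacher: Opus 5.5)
Your proof is correct: centering at $(\E X)^2$ gives $\var[X^2]\le \E\big[(X-\E X)^2(X+\E X)^2\big]\le 4C^2\var[X]$, which is the standard argument for Lemma 30 in \cite{chen2021}; the paper itself gives no proof and only cites that result. Your remark that the argument carries over verbatim to conditional variances, such as those under $P_h(\cdot\mid s,a)$, covers how the paper applies the lemma.
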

The following lemma is a direct corollary of Lemmas 7 and 8 in \cite{Domingues2020KernelBasedRL} under the setting of the current paper.

\begin{lemma}
    \label{aux.5}
    For $(k, h) \in [K] \times [H]$ and $(s, a) \in \mathcal{S} \times \mathcal{A}$, we have
    \[
        \sum_{l = 1}^{k-1}\widetilde{\mathnormal{w}}_h^{k, l}(s, a)\rho[(s, a), (s_h^l, a_h^l)] \leq 2\sigma\big(1 + \sqrt{\log(k/\beta + e)}\big).
    \]
    In addition, for any real numbers $y_{h, l}$ and $(s', a') \in \mathcal{S} \times \mathcal{A}$,
    \begin{align}
        \label{app.aux.1}
        \sum_{l = 1}^{k-1}[\mathnormal{w}_h^{k, l}(s, a) - \mathnormal{w}_h^{k, l}(s', a')]y_{l} \leq 2C_2k\max_{h, l}|y_{h, l}|\frac{\rho[(s, a), (s', a')]}{\beta\sigma},
    \end{align}
    and
    \begin{align}
        \label{app.aux.2}
        \bigg|\frac{1}{C^k_h(s, a)} - \frac{1}{C^k_h(s', a')}\bigg| \leq C_2k\frac{\rho[(s, a), (s', a')]}{\beta^2\sigma}.
    \end{align}
\end{lemma}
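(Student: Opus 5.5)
The statement collects three deterministic facts about the kernel weights, so no probabilistic argument is needed. Each part follows from Assumption \ref{assu.3} (Gaussian-type tail $g(z)\le C_1e^{-z^2/2}$, and $|g'|\le C_2$), the triangle inequality for $\rho$, and the lower bound $C^k_h\ge\beta$. I will prove the three claims in the order (\ref{app.aux.2}), (\ref{app.aux.1}), then the bias bound on $\sum_l\widetilde{w}^{k,l}_h\rho$. The first two are direct perturbation estimates; the third needs a truncation argument.

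For the perturbation bounds, write $\rho_l=\rho[(s,a),(s_h^l,a_h^l)]$ and $\rho'_l=\rho[(s',a'),(s_h^l,a_h^l)]$. The mean value theorem with $\sup|g'|\le C_2$, together with $|\rho_l-\rho'_l|\le\rho[(s,a),(s',a')]$, gives
\[
|w_h^l(s,a)-w_h^l(s',a')|\le C_2\,\rho[(s,a),(s',a')]/\sigma .
\]
Summing over at most $k$ indices yields $|C^k_h(s,a)-C^k_h(s',a')|\le C_2k\rho/\sigma$. Dividing by $C^k_h(s,a)C^k_h(s',a')\ge\beta^2$ then gives (\ref{app.aux.2}).

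For (\ref{app.aux.1}), the weights there are the normalized ones $\widetilde{w}^{k,l}_h$. I will use the decomposition
\[
\frac{w}{C}-\frac{w'}{C'}=\frac{w-w'}{C}+w'\,\frac{C'-C}{CC'} .
\]
After multiplying by $y_l$ and summing over $l$, the first piece is at most $k\max|y|\,C_2\rho/(\beta\sigma)$. The second piece equals $\big(\sum_l \widetilde{w}^{k,l}_h(s',a')y_l\big)(C'-C)/C$. Since $\sum_l\widetilde{w}^{k,l}_h\le 1$, it is at most $\max|y|\cdot C_2k\rho/(\beta\sigma)$. Adding the two pieces gives the factor $2$.

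For the bias bound, fix a threshold $u>0$ and split the indices into near points ($\rho_l\le u\sigma$) and far points. The near points contribute at most $u\sigma$, because the normalized weights sum to at most one. For a far point, $\widetilde{w}^{k,l}_h\rho_l\le \beta^{-1}\sigma\,\rho_l\sigma^{-1} g(\rho_l/\sigma)$, and $z\,g(z)\le C_1 z e^{-z^2/2}$ is decreasing for $z\ge1$. Hence the far contribution is at most $k\sigma C_1 u e^{-u^2/2}/\beta$. Choosing $u=\sqrt{2\log(k/\beta+e)}$ makes it at most a constant multiple of $\sigma u\,\beta/(k+e\beta)\cdot k/\beta\lesssim\sigma u$. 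The total is then of the order $\sigma(1+\sqrt{\log(k/\beta+e)})$.

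The main obstacle is matching the exact constant $2$ claimed in the statement. A careless version of this step picks up factors of $C_1$ and $\sqrt2$. To get the constant, I would follow Lemmas 7–8 of \citet{Domingues2020KernelBasedRL}: there the far-point mass is bounded relative to $C^k_h$ itself rather than to $\beta$, which absorbs these constants. The truncation and the choice of $u$ stay the same.
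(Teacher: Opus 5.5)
Your proofs of \eqref{app.aux.2} and \eqref{app.aux.1} are correct. They are the standard argument behind Lemma 8 of \citet{Domingues2020KernelBasedRL}, and that citation is all the paper gives for this lemma; it contains no proof of its own. Reading $w_h^{k,l}$ in \eqref{app.aux.1} as the normalized weights is the sensible choice. With unnormalized weights the bound is immediate, since $\beta\le 1$.

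The gap is in the bias bound, and it is more than a matter of constants. With your threshold $u=\sqrt{2\log(k/\beta+e)}$, the factor $e^{-u^2/2}$ only cancels the count $k/\beta$. The factor $z$ in $ze^{-z^2/2}$ survives, so the far part is $C_1\sigma u$. What you actually prove is therefore $\sqrt{2}(1+C_1)\sigma\sqrt{\log(k/\beta+e)}$. Your proposed remedy, normalizing by $C^k_h$ rather than $\beta$, cannot absorb $C_1$, as the example below shows.

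The missing step is to split the Gaussian exponent in half. Truncate at $u=2\sqrt{\log(C_1k/\beta+e)}$ and use $ze^{-z^2/2}\le e^{-z^2/4}$, which holds for all $z\ge 0$ because $\max_z ze^{-z^2/4}=\sqrt{2/e}<1$. Keep the crude bound $\widetilde{w}^{k,l}_h\le w^l_h/\beta$. Then each far term is at most $\frac{C_1\sigma}{\beta}e^{-u^2/4}\le \sigma/k$, so the far part is at most $\sigma$. The near part is at most $u\sigma$, and together these give $2\sigma\bigl(1+\sqrt{\log(C_1k/\beta+e)}\bigr)$.

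Note that $C_1$ has to appear inside the logarithm. With $\log(k/\beta+e)$, as printed, the same argument yields $2\sigma\sqrt{\log(k/\beta+e)}+C_1\sigma$. That matches the stated bound only when $C_1\le 2$, for example the Gaussian kernel with $C_1=1$.

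For large $C_1$ the printed inequality is false. Assumption~\ref{assu.3} permits $g\equiv 1$ on $[0,z_0]$ for any $z_0\ge 4$, at the price $C_1\ge e^{z_0^2/2}$. Place $k-1$ data points at distance $z_0\sigma$ from $(s,a)$. The left-hand side is then $\frac{k-1}{\beta+k-1}z_0\sigma$, which exceeds $2\sigma\bigl(1+\sqrt{\log(k/\beta+e)}\bigr)$ once $z_0$ is large. So no choice of normalization can remove $C_1$.

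Your weaker bound has the right order and would still suffice for the $\widetilde{O}$ regret statement downstream. It does not, however, establish the lemma as stated.
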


\begin{lemma}[Lemma 9 in \cite{Domingues2020KernelBasedRL}]
    \label{aux.num}
   Let $A_t = \sum_{n = 1}^{t - 1}a_n$, where $\{a_n\}_{n \geq 1}\subset[0,c]$ for some constant $c > 0$. Then for any $b > 0$ and $p > 0$,
    \[
        \sum_{t = 1}^T\frac{a_t}{1 + bA_t} \leq c + \int_{0}^{A_{T + 1} - c}\frac{1}{(1 + bz)^p}dz.
    \]
\end{lemma}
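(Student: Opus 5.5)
I would prove the inequality by comparing each summand with a piece of the integral of the decreasing function $\varphi(z)=(1+bz)^{-p}$ on $z\ge 0$. Read literally, the left-hand side has exponent $1$ and the right-hand side exponent $p$, so I will treat the statement as intended with the same exponent on both sides: $\sum_{t=1}^T a_t\,\varphi(A_t)\le c+\int_0^{A_{T+1}-c}\varphi(z)\,dz$. This covers the case $p=1$, which is the only one used in Lemma \ref{reb2.3}.

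The key observation is an ordering of interval endpoints. Since $A_{t+1}=A_t+a_t$ and $a_t\le c$, we have $A_{t+1}-c\le A_t$. So $\varphi(A_t)\le\varphi(z)$ for every $z\ge0$ in $J_t:=[A_t-c,\,A_{t+1}-c]$. The intervals $J_t$ have lengths $a_t$, are consecutive and non-overlapping, and together tile $[-c,\,A_{T+1}-c]$.

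Next I split each $J_t$ at $0$, writing $J_t=J_t^-\cup J_t^+$ with $J_t^-=J_t\cap(-\infty,0]$ and $J_t^+=J_t\cap[0,\infty)$. Then
\[
a_t\varphi(A_t)=|J_t^-|\,\varphi(A_t)+|J_t^+|\,\varphi(A_t)\le |J_t^-|+\int_{J_t^+}\varphi(z)\,dz .
\]
Here $\varphi(A_t)\le 1$ is used on the negative part. On the positive part I use the monotonicity observation above, which is valid there because $z\ge 0$ keeps us in the domain where $\varphi$ is decreasing and well defined.

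Finally I sum over $t$. Since the $J_t^-$ are disjoint subsets of $[-c,0]$, we get $\sum_t|J_t^-|\le c$. Since the $J_t^+$ are disjoint and their union is $[0,\,A_{T+1}-c]$ (empty if $A_{T+1}\le c$), the integrals add up to $\int_0^{A_{T+1}-c}\varphi(z)\,dz$, with the convention that this is $0$ when $A_{T+1}<c$. This gives the claim.

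The only delicate point is exactly this boundary issue: the shifted intervals may dip below $0$, where $(1+bz)^{-p}$ is not controlled and may blow up near $z=-1/b$. Splitting at $0$ and paying the additive constant $c$ for the negative portion is what resolves it.
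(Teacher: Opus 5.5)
Your proof is correct and complete for the intended statement, i.e.\ with $(1+bA_t)^p$ in the denominator on the left. The paper gives no proof of its own; it only restates Lemma 9 of \cite{Domingues2020KernelBasedRL}. Your argument is the standard integral comparison for this kind of sum: the tiling by $J_t=[A_t-c,A_{t+1}-c]$, the ordering $A_{t+1}-c\le A_t$, and paying at most $c$ for the part of the tiling in $[-c,0]$.

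Both of your interpretive choices are genuinely needed, not cosmetic.
\begin{itemize}
\item Read literally, the statement fails for $p>1$. Take $a_t\equiv c$: the left side is at least $b^{-1}\log(1+bcT)$, while the right side stays below $c+\frac{1}{b(p-1)}$.
\item It also fails when $A_{T+1}<c$ if the integral is read with orientation. For example, $T=1$, $a_1=0$, $c=1$, $b=1/2$, $p=1$ makes the right side $1-2\log 2<0$. So your convention that the integral vanishes there, i.e.\ upper limit $(A_{T+1}-c)_+$, is required.
\end{itemize}

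For $p\le 1$ the literal version does follow from yours at exponent $1$, since $(1+bz)^{-1}\le(1+bz)^{-p}$ for $z\ge0$. As you note, $p=1$ is the only case used in Lemma~\ref{reb2.3}.
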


\begin{lemma}[Hoeffding's inequality for martingale]
    \label{aux.1}
    Suppose $X_1,, X_2, \cdots$ is a martingale difference sequence where $|X_i| \leq c \in \mathbb{R}_+$ almost surely. Then for any $n \in \mathbb{N}_+$ and $\delta\in(0, 1)$, we have with probability at least $1 - \delta$,
    \[
        \bigg|\sum_{i = 1}^nX_i\bigg| \leq c\sqrt{2n\log(2/\delta)}.
    \]
\end{lemma}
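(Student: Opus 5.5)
The statement to prove is the last one, Lemma \ref{aux.1}, which is the classical Azuma--Hoeffding inequality. I would prove it by the standard Chernoff/exponential-supermartingale argument.

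Let $\mathcal{G}_i$ denote the filtration generated by $X_1,\dots,X_i$, so that $\E[X_i\mid\mathcal{G}_{i-1}]=0$ and $|X_i|\le c$.

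\textbf{Step 1: conditional moment generating function.} The first step is Hoeffding's lemma applied conditionally. Because $x\mapsto e^{\lambda x}$ is convex on $[-c,c]$, every $x\in[-c,c]$ satisfies
\[
e^{\lambda x}\le \frac{c-x}{2c}e^{-\lambda c}+\frac{c+x}{2c}e^{\lambda c}.
\]
Taking conditional expectations and using the martingale difference property gives
\[
\E[e^{\lambda X_i}\mid\mathcal{G}_{i-1}]\le\cosh(\lambda c)\le e^{\lambda^2c^2/2},
\]
where the last inequality follows by comparing Taylor series term by term.

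\textbf{Step 2: supermartingale and tail bound.} Define $M_n=\exp\{\lambda\sum_{i\le n}X_i-n\lambda^2c^2/2\}$. By the tower property, $\E M_n\le 1$. Markov's inequality then gives
\[
\pr\Big(\sum_{i=1}^n X_i\ge t\Big)\le \exp\{-\lambda t+n\lambda^2c^2/2\}.
\]
Optimizing at $\lambda=t/(nc^2)$ yields the bound $e^{-t^2/(2nc^2)}$.

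\textbf{Step 3: two-sided bound and choice of $t$.} The sequence $-X_i$ is also a martingale difference sequence bounded by $c$, so the same bound holds for the lower tail. A union bound gives
\[
\pr\Big(\big|\textstyle\sum_{i=1}^n X_i\big|\ge t\Big)\le 2e^{-t^2/(2nc^2)}.
\]
Setting $t=c\sqrt{2n\log(2/\delta)}$ makes the right-hand side equal to $\delta$, which is exactly the claim.

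I do not expect a real obstacle. The only point needing care is justifying the conditional form of Hoeffding's lemma, namely that the convexity bound is applied pointwise and then conditional expectation is taken, with the conditional mean of $X_i$ equal to zero.
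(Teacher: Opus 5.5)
Your proof is correct and complete. It is the standard Azuma--Hoeffding argument: a conditional Hoeffding lemma via the chord bound and $\cosh x\le e^{x^2/2}$, then the exponential supermartingale, Chernoff optimization at $\lambda=t/(nc^2)$, and a union bound over the two tails with $t=c\sqrt{2n\log(2/\delta)}$.

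The paper gives no proof of this lemma. It lists it among auxiliary results ``established in previous works'' and only invokes it in Lemma~\ref{reb.1}, so your write-up supplies the classical argument the paper takes for granted.

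Working with the natural filtration loses nothing relative to that application. A martingale difference sequence with respect to a larger filtration, such as the $\mathcal{F}^k$ used in Lemma~\ref{reb.1}, is also one with respect to its own natural filtration by the tower property. Your exponential-supermartingale step also goes through verbatim for any filtration to which the sequence is adapted.

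The degenerate case $c=0$, where your chord bound and choice of $\lambda$ divide by $c$, is trivial because then every $X_i=0$ almost surely.
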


\section{Experimental Results and Figures}
\label{sec:experimental}
\renewcommand{\thefigure}{\arabic{figure}}
\setcounter{figure}{3}

\begin{figure}[H]
	\centering
	\includegraphics[width=0.8\textwidth]{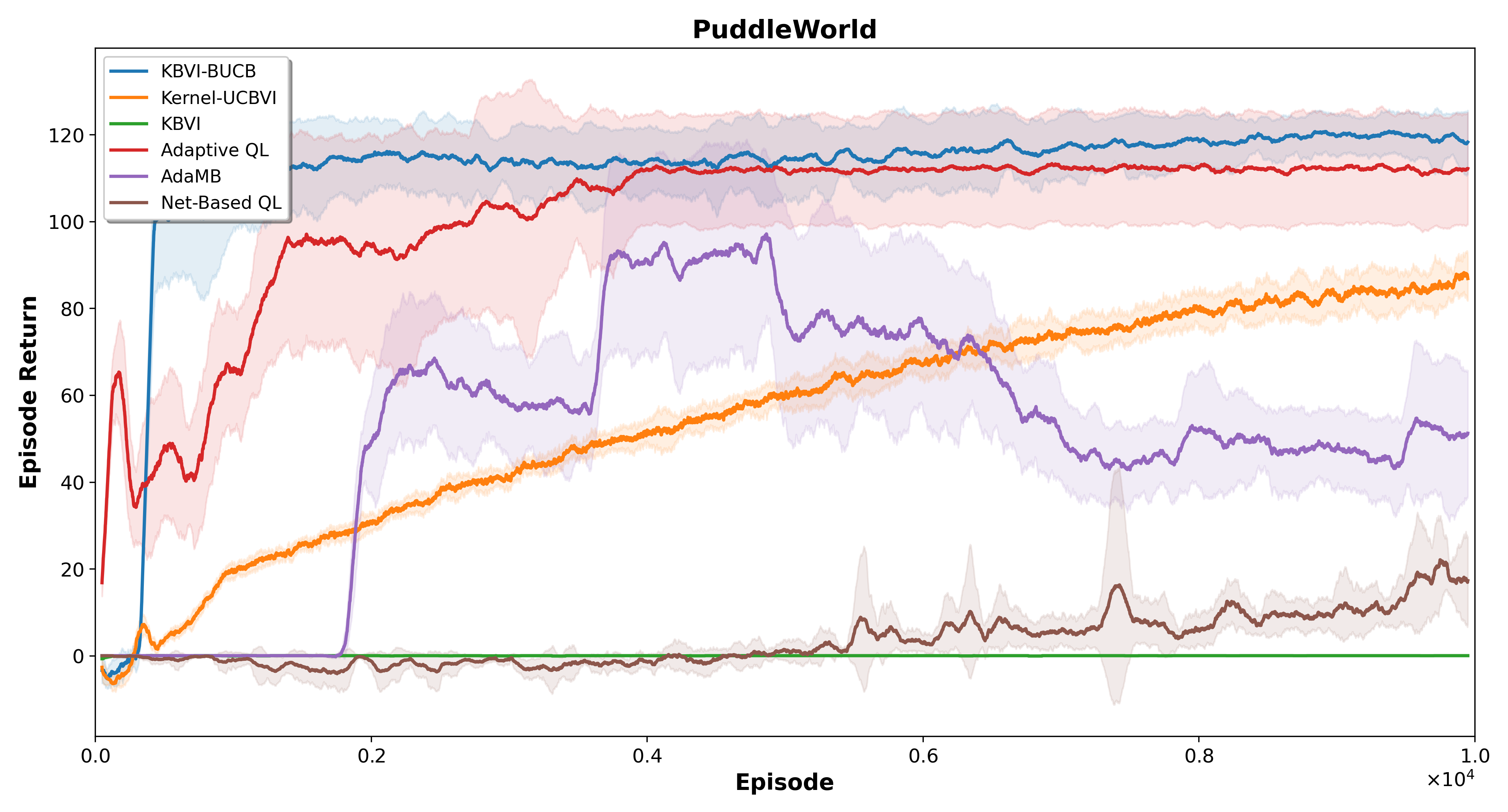}
	\caption{Returns on PuddleWorld with the goal reward set to +10. Solid lines show the mean and shaded regions show one standard deviation across 8 random seeds.}
	\label{fig4}
\end{figure}

We repeat the PuddleWorld experiment with the goal reward reduced from +100 to +10. As shown in Figure \ref{fig4}, KBVI-BUCB still achieves the highest return, confirming that our method is robust to different reward scales.

\begin{figure}[H]
	\centering
	\includegraphics[width=0.8\textwidth]{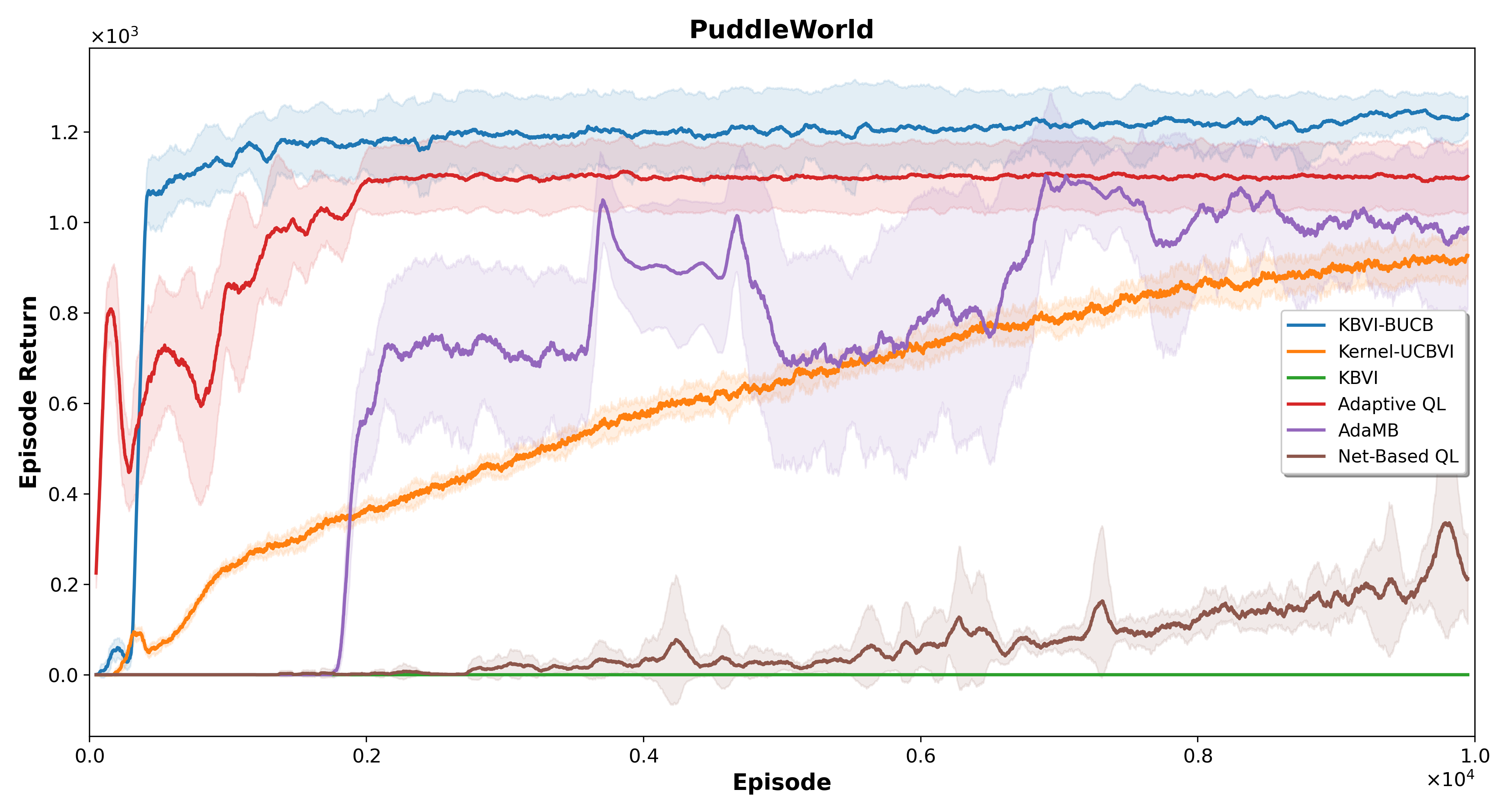}
	\caption{Returns on PuddleWorld with one puddle removed and the other reduced to -1. Solid lines show the mean and shaded regions show one standard deviation across 8 random seeds.}
	\label{fig5}
\end{figure}

We also test a variant where the two puddle regions are modified: the puddle at $[0.6, 0.8] \times [0.2, 0.4]$ is removed, and the puddle at $[0.2, 0.4] \times [0.6, 0.8]$ is reduced from $-10$ to $-1$. The goal reward remains $+100$. This creates an easier environment where only one weak obstacle lies on the path to the goal. As shown in Figure~\ref{fig5}, KBVI-BUCB still achieves the highest return.

\end{appendices}


\bibliography{KUCB.bib}

\end{document}